\documentclass[11pt]{article}
\usepackage[a4paper,top=3cm,bottom=2cm,left=2cm,right=2cm,marginparwidth=1.75cm]{geometry}
\usepackage{fullpage}
\usepackage{graphics} 
\usepackage{epsfig}
\usepackage{amsmath} 
\usepackage{amssymb} 
\usepackage{fontawesome5}
\usepackage{hyperref}
\usepackage{amsthm}
\usepackage{tabularx,booktabs}
\usepackage{mathrsfs}
\usepackage{selectp}
\usepackage[normalem]{ulem}
\usepackage{authblk}
\usepackage{subcaption}
\usepackage{natbib}
\usepackage{algorithm}
\usepackage{algpseudocode}
\usepackage{enumitem}
\usepackage{wrapfig}
\usepackage{scalerel,stackengine}
\stackMath
\newcommand\reallywidehat[1]{%
\savestack{\tmpbox}{\stretchto{%
  \scaleto{%
    \scalerel*[\widthof{\ensuremath{#1}}]{\kern-.6pt\bigwedge\kern-.6pt}%
    {\rule[-\textheight/2]{1ex}{\textheight}}
  }{\textheight}%
}{0.5ex}}%
\stackon[1pt]{#1}{\tmpbox}%
}

\usepackage{xr}
\usepackage{xcolor}

\newboolean{showcomments}
\setboolean{showcomments}{true}
\ifthenelse{\boolean{showcomments}}
{ \newcommand{\mynote}[3]{
		\fbox{\bfseries\sffamily\scriptsize#1}
		{\small$\blacktriangleright$\textsf{\emph{\color{#3}{#2}}}$\blacktriangleleft$}}
	\newcommand{\zzz}[1]{{\setlength{\fboxsep}{2pt}\fcolorbox{black}{yellow}{\textsf{\emph{#1}}}}\xspace}}
{ \newcommand{\mynote}[3]{}
	\newcommand{\zzz}[1]{}}

\usepackage{float}
\usepackage{booktabs}
\usepackage[table]{xcolor}
\definecolor{oursrow}{RGB}{238,246,252}
\usepackage[toc,page,header]{appendix}
\usepackage{minitoc}

\usepackage{hyperref}
\hypersetup{       
    unicode=false,          
    pdftoolbar=true,       
    pdfmenubar=true,      
    pdffitwindow=false,     
    pdfstartview={FitH},   
    pdfnewwindow=true,      
    colorlinks=true,      
    linkcolor=blue,          
    citecolor=blue,       
    filecolor=magenta,      
    urlcolor=blue,           
    breaklinks=true
}

\newtheorem{theorem}{Theorem}
\newtheorem{assumption}{Assumption}
\newtheorem{lemma}{Lemma}
\newtheorem{sublemma}{Lemma}[lemma]
\newtheorem{definition}{Definition}

\usepackage{threeparttable}
\usepackage{hyperref}
\usepackage{url}
\usepackage{graphicx}
\usepackage{float}
\usepackage{enumitem}

\usepackage[toc,page,header]{appendix}
\usepackage{minitoc}

\title{\bf Preserving Unstable Modes Through Inverse Dynamics in JEPA World Models}
\date{}

\author[1]{Leonardo F. Toso\footnote{Correspondence to: \texttt{leonardo.toso@columbia.edu}.}}
\author[3]{Yann LeCun}
\author[1]{James Anderson}
\author[2]{Oumayma Bounou}
\affil[1]{Columbia University}
\affil[2]{New York University}
\affil[3]{New York University, AMI Labs}

\begin{document}

\doparttoc 
\faketableofcontents

\maketitle

\vspace{-1.5cm}
\begin{center}
    \href{https://jepa-control.github.io/}{%
        \faGlobe\enspace Website%
    }
    \qquad
    \href{https://github.com/LeoToso/JEPA_control}{%
        \faGithub\enspace Code%
    }
\end{center}

\begin{abstract}
Robotic systems often exhibit unstable modes, along which small perturbations and disturbances can cause unbounded growth unless corrected through feedback. Controlling such systems from high-dimensional visual observations requires representations that preserve these modes. Joint-embedding predictive architectures (JEPAs) provide a natural framework for learning such representations and their dynamics from visual data. However, we demonstrate that next step prediction combined with anti-collapse regularization does not guarantee that controllable unstable modes are preserved: the training loss can be minimized while these modes are collapsed, making stabilization from the learned representation impossible. To address this, we augment world-model training with an action reconstruction objective (i.e., an inverse dynamics loss) that encourages control-aware representations, namely, visual representations that preserve crucial features for control. We prove that exact action reconstruction makes the encoder injective on the finite-horizon reachable subspace. Thus, the encoder cannot discard any state direction reachable by an action sequence within $H$ steps. Moreover, we show that, as $H$ grows, the dominant eigenspace of the finite-horizon controllability Gramian converges to the controllable unstable subspace. We establish our theoretical results for linear systems and demonstrate empirically that our findings extend to nonlinear visual control tasks (CartPole, Walker2D, and PointMaze), highlighting the benefits of control-aware representation learning.
\end{abstract}

\allowdisplaybreaks

\section{Introduction}

Quadrotors and legged robots operate around unstable equilibria where small perturbations can grow catastrophically without corrective feedback~\citep{mellinger2011minimum,ames2014rapid}. Controlling such systems requires actions that not only \emph{minimize} a cost (e.g., goal reaching), but also \emph{stabilize} the physical dynamics; otherwise, incorrect feedback can rapidly drive the system from its intended operating regime and compromise safe deployment~\citep{ames2016control,marco2021robot}. 

\begin{figure}[t]
\centering
\includegraphics[width=0.9\textwidth]{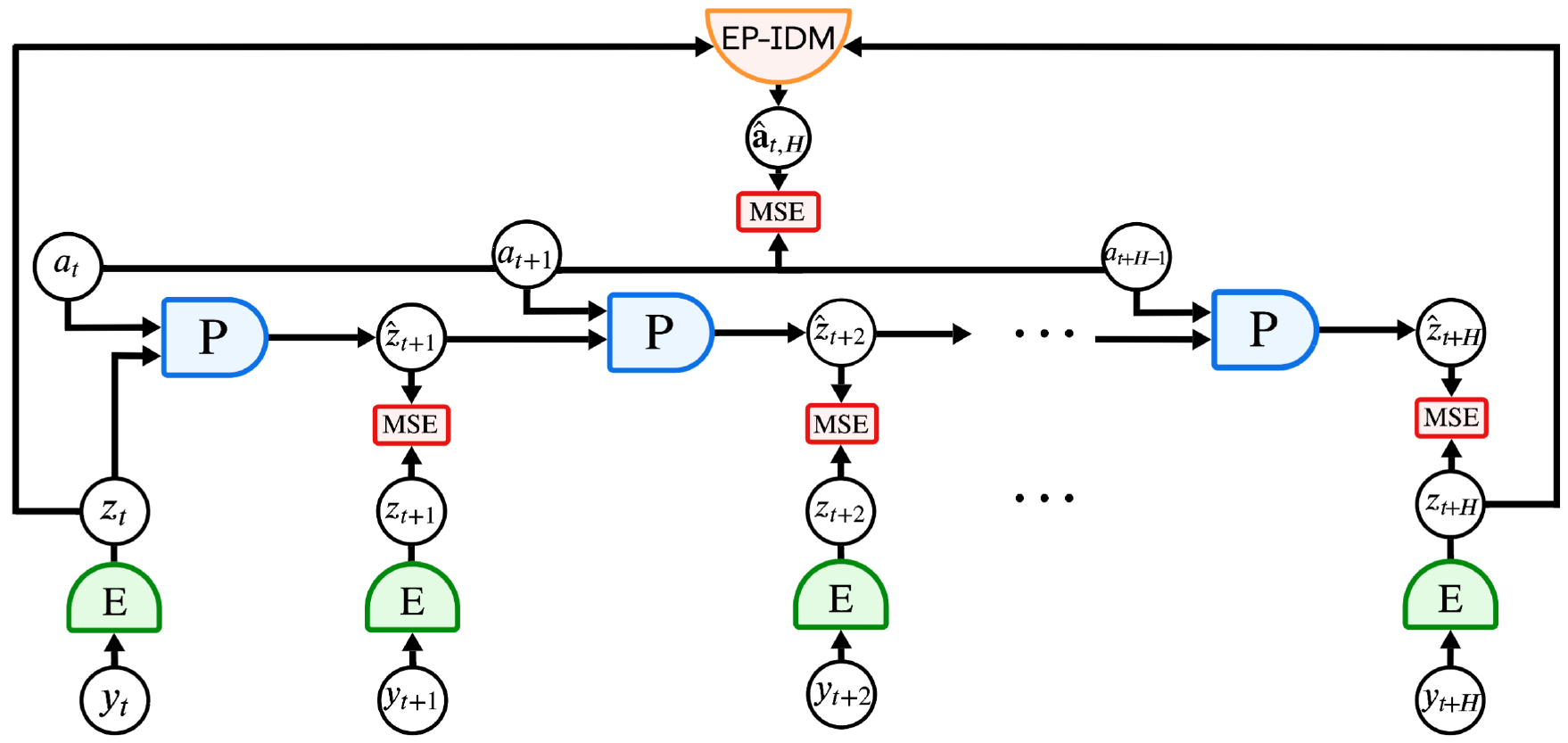}
\caption{\small
Architecture of our proposed world model. At each step, an encoder ($E$)
maps observation $y_t$ to a latent state $z_t$, and a predictor ($P$)
rolls out latent predictions
$\hat{z}_{t+1}, \ldots, \hat{z}_{t+H}$ conditioned on actions. Training losses are applied at each predicted latent: 1SP/MSP
(single- or multi-step prediction) aligns $\hat{z}_{t+k}$ with the
encoded ground-truth $z_{t+k}$. The EP-IDM estimates the intermediate actions
$\hat{\mathbf{a}}_{t,H}
= (\hat a_t, \hat a_{t+1}, \ldots, \hat a_{t+H-1})$ from $z_t$ and $z_{t+H}$.
}
\label{fig:architecture}
\end{figure}

Latent world models have found their way into robotics as a computationally efficient solution for learning dynamics and designing control actions from high-dimensional sensory observations. They compress pixel observations into a lower-dimensional representation, learn action-conditioned latent dynamics, and optimize actions by rolling out these dynamics, with limited interaction with the physical system \citep{ha2018world,hafner2019learning,hafner2019dream,hafnermastering,hafner2023mastering}. Joint-embedding predictive architectures (JEPAs) follow this recipe without pixel-level reconstruction \citep{lecun2022path,sobal2025learning,balestriero2025lejepa}: an encoder extracts a latent representation, a predictor forecasts its evolution, and anti-collapse mechanisms (e.g., SIGReg~\citep{balestriero2025lejepa}, EMA~\citep{assran2023self}, and others~\citep{bardes2021vicreg}) prevent representation collapse. Planning can then be performed efficiently directly in this learned latent space \citep{zhou2024dino,toso2026learning,wang2026temporal}. However, deploying JEPAs on unstable systems remains largely unexplored.

When actions are designed using the learned latent representation and dynamics, the representation must preserve the physical information required for feedback control.
For open-loop unstable systems, this requires preserving the unstable modes that feedback must observe and correct
(Lemma~\ref{lemma:conditions_for_stabilization})
\citep{hu2022sample,werner2024sample, toso2025learning,lutkus2025latent}. That is, if a controller cannot ``see'' an unstable mode, it cannot stabilize it.

However, learning to accurately predict latent dynamics while preventing representation collapse does not guarantee that this control-theoretic information is preserved.
In fact, the objective combining next-step prediction and anti-collapse regularization, the standard recipe for JEPAs, can be minimized while the encoder discards every unstable mode (Lemma~\ref{lemma:collapse_of_unstable_modes}). 
To preserve these modes, we augment the JEPA predictive learning objective with an ``endpoint inverse-dynamics'' (EP-IDM) loss, which reconstructs the underlying action sequence from the initial and final latent states (see Figure \ref{fig:architecture}), so that the encoder cannot discard directions the actions can reach \citep{mhammedi2020learning,ivashkov2026sensorimotor}.
Our contributions are summarized below.

\begin{itemize}[leftmargin=*, itemsep=2pt, topsep=2pt]
\item \textbf{Prediction does not guarantee stabilization.} We show that a controller acting on the latent states can stabilize the system \emph{if and only if} the encoder keeps \emph{every} unstable mode (Lemma~\ref{lemma:conditions_for_stabilization}). We then show, with SIGReg as the anti-collapse regularizer, that the training objective can reach its minimum while the encoder discards \emph{all} unstable modes (Lemma~\ref{lemma:collapse_of_unstable_modes}).

\item \textbf{Inverse dynamics preserves unstable modes.} We introduce EP-IDM, which reconstructs the action sequence from the initial and final latent states. We prove that, with sufficient action excitation, exact reconstruction keeps every direction reachable within $H$ steps, and hence every unstable mode reachable within $H$ steps (Theorem~\ref{theorem:condition_achieved_action_reconstruction}).

\item \textbf{Unstable subspace recovery.} We prove that, as $H$ grows, the dominant eigenspace of the finite-horizon controllability Gramian converges to the unstable subspace (Theorem~\ref{theorem:unstable_subspace_recovery}).

\item \textbf{Numerical validation.} We show on synthetic linear systems and a linearized CartPole that EP-IDM preserves the unstable direction and enables latent stabilization, while SIGReg does not. We further validate the benefits of EP-IDM on nonlinear visual control tasks (CartPole, Walker2D, and PointMaze), where representations and dynamics are learned from images and proprioceptive measurements.
\end{itemize}

\section{Related Work}

Our work lies at the intersection of joint-embedding predictive world models and latent feedback stabilization. 

\vspace{0.2cm}

\noindent \textbf{Joint embedding predictive world models.} JEPAs learn representations by predicting future latent states rather than reconstructing pixels \citep{lecun2022path,assran2023self}. Recent work applies this recipe to action-conditioned prediction and planning using pre-trained visual encoders \citep{zhou2024dino,toso2026learning}, end-to-end anti-collapse mechanisms \citep{balestriero2025lejepa,maes2026leworldmodel,kuang2026lpwm}, or objectives that shape latent dynamics and geometry \citep{sobal2025learning,parthasarathy2025closing,wang2026temporal,zhang2026hierarchical,nath2026pixels}. While this body of work primarily addresses total representation collapse and planning, we demonstrate that standard JEPA objectives can selectively collapse directions required for feedback stabilization.

Most closely related,~\citet{ivashkov2026sensorimotor} use one-step inverse dynamics to prevent collapse. We share the principle that actions provide a control-relevant learning signal but we study feedback stabilization of open-loop unstable systems. Our EP-IDM loss reconstructs the entire action sequence from the endpoint latent states. We prove that exact reconstruction preserves the finite-horizon reachable subspace and characterize its convergence to the controllable unstable subspace, connecting action reconstruction to detectability and latent stabilizability.
\vspace{0.2cm}

\noindent \textbf{Latent feedback stabilization.} Control from visual observations requires retaining the state information needed for stabilization. Prior work studies stabilization of linear systems under unknown nonlinear observation maps \citep{mhammedi2020learning}, characterizes the sample complexity of learning to stabilize \citep{hu2022sample,werner2024sample}, and learns low-dimensional unstable subspace representations \citep{toso2025learning,lutkus2025latent}. In particular, preserving only the unstable subspace can suffice, since stable modes already decay without correction \citep{toso2025learning}. More than just constructing those sufficient representations, we also ask whether standard JEPA training can learn them in the first place. In particular, we connect latent world model training objectives to the control-theoretic information that their representations must preserve.

\section{Setup}

We introduce the dynamical system, latent world model, and training objectives considered throughout the paper.
\vspace{0.2cm}

\noindent \textbf{Learning an encoder and a predictor. } We consider the discrete-time dynamical system
\begin{equation}
\label{eq:nonlin_system}
x_{t+1} = f(x_t,a_t), \qquad
y_t = g(x_t) \in \mathcal Y \subset \mathbb R^p,
\end{equation}
for all $t=0, 1,2, \dots $, where $x_t\in\mathcal X\subset\mathbb R^n$ is the state and $a_t\in\mathcal A\subset\mathbb R^m$ is the applied action, at time $t$. The transition map $f:\mathcal X\times\mathcal A\to\mathcal X$ describes the system dynamics, while the observation map $g:\mathcal X\to\mathcal Y$ maps the state to the observation $y_t$. This observation may contain images, proprioceptive measurements (i.e., part or all of the state $x_t$), both, or other modalities \citep{wang2026music,huang2026llm}. Examples of such a mapping are provided in Appendix~\ref{appendix:example_details}. We call Eq.~(\ref{eq:nonlin_system}) the physical system.

Our goal is to learn an observation encoder $E_\theta:\mathcal Y\to\mathcal Z$, an action encoder $G_\theta:\mathcal A\to\mathcal U$, and a latent dynamics model (predictor) $P_\theta:\mathcal Z\times\mathcal U\to\mathcal Z$ such that
\begin{equation}
z_{t+1}= P_\theta\bigl(z_t,G_\theta(a_t)\bigr) \text{ with latent state } z_t=E_\theta(y_t),
\end{equation}
where $\mathcal Z\subset\mathbb R^d$ is the latent space, with $d \ll p$. For notation simplicity, we denote by $\theta\in\mathbb R^{d_\theta}$ the trainable parameters of the observation encoder, action encoder, and predictor.
\vspace{0.2cm}

\noindent \textbf{Standard training objective.} Let $\mathcal D$ denote the distribution of finite-horizon trajectory
windows collected from the system in Eq.~\eqref{eq:nonlin_system}. Each
sampled window $\tau\sim\mathcal D$ starts at a trajectory-dependent time
$t_\tau\geq0$ and is given by
\begin{align}
\tau := \left(y_{t_\tau-1:t_\tau+H}, s_{t_\tau:t_\tau+H}, a_{t_\tau:t_\tau+H-1}\right),
\end{align}
where $y_t$, $s_t$, and $a_t$ denote the image observation, proprioceptive measurement, and action at time $t$, respectively. Hence, each window has $H+2$ images, $H+1$ proprioceptive
measurements, and $H$ actions. The preceding image $y_{t_\tau-1}$ is
used only to construct the first encoder input, resulting in $H+1$
latent states and $H$ prediction transitions.  In our experiments, the encoder uses two consecutive images, their difference to capture motion,
and the current proprioceptive measurement:
\begin{align}
z_{t_\tau+k} = E_\theta \left( y_{t_\tau+k-1}, y_{t_\tau+k}, y_{t_\tau+k}-y_{t_\tau+k-1}, s_{t_\tau+k}
\right), \qquad k=0,\ldots,H.
\end{align}
We train the model using the next-step prediction objective
\begin{equation}
\mathcal L_{\mathrm{pred}}(\theta)
=\mathbb E_{\tau\sim\mathcal D}\left[
\frac{1}{H}\sum_{k=0}^{H-1}
\left\|P_\theta\bigl(z_{t_\tau+k},G_\theta(a_{t_\tau+k})\bigr)
-z_{t_\tau+k+1}\right\|_2^2
\right].
\end{equation}
We combine the prediction loss with SIGReg~\citep{balestriero2025lejepa}
to prevent total representation collapse:
\begin{equation}
\label{optimization:ms_SIGreg}
\min_\theta \; \mathcal L_{\mathrm{pred}}(\theta)
+ \lambda_{\mathrm{reg}}\mathcal L_{\mathrm{reg}}(\theta),
\end{equation}
where $\lambda_{\mathrm{reg}}>0$ controls the strength of the regularizer.
\vspace{0.2cm}

\noindent \textbf{Total representation collapse.}  The representation collapse in JEPAs occurs when the encoder $E_\theta$ maps almost all observations to the same latent representation, i.e.,
$E_\theta(y)=c$ almost surely for some constant $c$, so that the latent
distribution has zero covariance. In particular, SIGReg \citep{balestriero2025lejepa} (also defined in Appendix \ref{appendix:SIGreg_loss} for completeness) avoids such
a degenerate solution by encouraging the aggregate latent distribution to
remain dispersed and approximately isotropic. However, this condition
does not require the encoder to preserve any particular physical direction: the representation can keep sufficient variability while selectively discarding the information (i.e., unstable modes) required for feedback
stabilization. We formalize this collapse of the unstable modes in Section~\ref{sec:stabilization_conditions}.

\vspace{0.2cm}

\noindent \textbf{Planning in the latent space. }\footnote{We emphasize that this goal-reaching planning cost is deliberately simple and may be insufficient for more complex control tasks, which can require task-specific stage costs, learned latent reward models, or costs defined through a state decoder (as done for the Walker2D task in Section \ref{sec:numerics}).} Given an initial observation $y_0$ and a goal observation $y_{\mathrm g}$, we encode $z_0=E_\theta(y_0)$ and $z_{\mathrm g}=E_\theta(y_{\mathrm g})$ and optimize the action sequence through the learned latent dynamics. For a planning horizon $T$, we solve
\begin{equation}
\begin{aligned}
\min_{\boldsymbol a}\quad &\|z_T-z_{\mathrm g}\|_2^2 \qquad \text{subject to}\quad
&z_{t+1}=P_\theta\bigl(z_t,G_\theta(a_t)\bigr),
\qquad t=0,\ldots,T-1,
\end{aligned}
\end{equation}
where $\boldsymbol a=(a_0,\ldots,a_{T-1})$ is the designed action sequence. In practice, we use a receding-horizon implementation. More details in Appendix \ref{appendix:experimental_details}. 
\vspace{0.2cm}

\noindent \textbf{Open-loop stability.}
Let $x_{\mathrm e}$ be an equilibrium under zero input, i.e., $x_{\mathrm e}=f(x_{\mathrm e},0)$. The system is \emph{locally open-loop stable} around $x_{\mathrm e}$ if trajectories initialized sufficiently close to $x_{\mathrm e}$ remain close to it when $a_t=0$ for all $t$. It is \emph{locally asymptotically stable} if these trajectories additionally converge to $x_{\mathrm e}$ as $t\rightarrow \infty$, and it is \emph{open-loop unstable} if the trajectories diverge from $x_{\mathrm e}$. 

For a linear system $x_{t+1}=Ax_t$, 
open-loop asymptotic stability is equivalent to $A$ being Schur stable, i.e., $\rho(A)<1$. Eigenvalues satisfying $|\lambda|>1$ correspond to unstable modes, while those with $|\lambda|=1$ are marginally stable. We emphasize that Schur stability prevents $\|x_t\|\rightarrow \infty$ as $t\rightarrow \infty$. Likewise, stabilization allows us to design the feedback that sends $\|x_t\|$ to zero.
\vspace{0.2cm}

\noindent \textbf{Stabilization and unstable modes.} For an open-loop unstable system, i.e., $x_t=Ax_t$ with $\rho(A)>1$, the goal is to design a feedback controller that selects actions from the available observations and makes the desired equilibrium asymptotically stable. Under  a static linear feedback controller $a_t = Kx_t$, this amounts to finding $K$ such that the closed-loop matrix $A + BK$ is Schur stable.
When the controller acts only on the learned latent state, this is possible only if the encoder preserves every unstable or marginally stable direction that must be corrected through feedback \citep{toso2025learning}. In this work, we demonstrate that this requirement is not implied by an accurate latent next-step predictor alone. 

Next, we formalize necessary and sufficient conditions for latent stabilization and demonstrate that the objective above in Eq. (\ref{optimization:ms_SIGreg}) does not necessarily satisfy them.

\section{Prediction Does Not Guarantee Stabilization}
\label{sec:stabilization_conditions}

An encoder $E_\theta$ can discard unstable modes while still learning
predictable, non-constant representations. In particular, there may exist an initial condition and a controller that stabilizes the learned latent dynamics, so that $\|z_t\|_2\to0$, while the corresponding physical trajectory satisfies $\|x_t\|_2\to\infty$ because an unstable mode lies in the kernel of the encoder. In the linear setting, we characterize the conditions that rule out this failure and demonstrate that next step prediction combined with SIGReg does not necessarily satisfy them. We illustrate these findings at the end of this section with two examples (Fig.~\ref{fig:Examples_main}) and further validate them in
Section~\ref{sec:numerics}, with additional details also provided in
Appendix~\ref{appendix:example_details}.
\vspace{0.2cm}

\noindent \textbf{Linear setting.} We consider a linear dynamical system (LDS) obtained by linearizing Eq.(\ref{eq:nonlin_system}) around an equilibrium point $x^\star$, and we assume $x^\star = 0$ without loss of generality: 
\begin{align}\label{system_LDS}
x_{t+1} &= A x_t + B a_t, \qquad
y_t = C x_t,
\end{align}
for all $t \in \mathbb Z_+$, where $x_t \in \mathbb R^n$, $a_t \in \mathbb R^m$, $y_t \in \mathbb R^p$, and $A$, $B$,
and $C$ are the transition,
control, and observation matrices, respectively. We consider an open-loop unstable system (i.e., $\rho(A) > 1$).

\begin{definition}[Stabilizability, observability, and detectability]
The pair $(A,B)$ is \emph{stabilizable} if there exists a gain $K$ such that $A+BK$ is Schur stable under $a_t=Kx_t$. Given $q_t=Mx_t$, the pair $(A,M)$ is \emph{observable} if $x_0$ can be recovered from finitely many observations, and \emph{detectable} if every unobservable mode is Schur stable.  By the Popov-Belevitch-Hautus (PBH) test~\citep{hautus_controllability}, detectability is also equivalent to
\begin{align}
\label{eq:pbh_detectability}
\operatorname{rank}
\begin{bmatrix}
\lambda I-A\\
M
\end{bmatrix}
=n
\quad
\text{for every }
\lambda\in\operatorname{spec}(A)
\text{ such that }|\lambda|\geq1,
\end{align}
where $\operatorname{spec}(A)$ denotes the set of eigenvalues of $A$.
\end{definition}

\begin{assumption}
\label{assumption:stabilizable_observable}
For the system (\ref{system_LDS}), the pair $(A,B)$ is stabilizable and  $(A,C)$ is observable.
\end{assumption}

We consider an affine observation encoder and a linear predictor:
\begin{align}
E_\theta(y) = W y + b = z \in \mathcal Z, \qquad
P_\theta(z, a) = A_z z + B_z a \in \mathcal Z. 
\end{align}
Here $d<n$. The vector $\theta=(W,b,A_z,B_z)$ collects all trainable parameters. We write the linear predictor directly in terms of the actions, as a linear action encoder can be absorbed into $B_z$ without loss of expressivity.

\subsection{Necessary and Sufficient Conditions for Stabilization of LDS}

We first characterize the conditions under which a controller acting only on the learned latent state can stabilize the physical system. We defer the proof of Lemma \ref{lemma:conditions_for_stabilization} to Appendix
\ref{appendix:proof_conditions_for_stabilization}. 

\begin{lemma}
\label{lemma:conditions_for_stabilization}
Let $F=WC$ denote the state-to-latent map and consider the centered latent state $\tilde z_t=z_t-b=Fx_t$. Suppose that Assumption~\ref{assumption:stabilizable_observable} holds. Then there exists a causal dynamic controller that uses only the centered latent states $\{\tilde z_\tau\}_{\tau=0}^{t}$ and asymptotically stabilizes Eq.~(\ref{system_LDS}) if and only if $(A,F)$ is detectable. Equivalently, every unstable or marginally stable mode of $A$ is observable through $F$, or, by the PBH test, there does not exist a nonzero vector $v\in\mathbb C^n$ such that
\begin{align}
Av=\lambda v,\qquad Fv=0,\qquad |\lambda|\geq1.
\end{align}
\end{lemma}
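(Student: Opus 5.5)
The plan is to prove the two directions separately, using the standard observer-based output-feedback construction for sufficiency and an invariance argument on an undetectable eigenvector for necessity. Throughout, I treat the centered latent $\tilde z_t=Fx_t$ as the measured output of the system $x_{t+1}=Ax_t+Ba_t$. The equivalence with the PBH condition is then just the detectability test \eqref{eq:pbh_detectability} with $M=F$. So it suffices to show: a stabilizing causal controller using $\{\tilde z_\tau\}$ exists if and only if $(A,F)$ is detectable.

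\emph{Sufficiency.} Assume $(A,F)$ is detectable, and recall that $(A,B)$ is stabilizable by Assumption~\ref{assumption:stabilizable_observable}.
\begin{itemize}
\item Choose $K$ with $A+BK$ Schur stable.
\item By detectability (duality with stabilizability of $(A^\top,F^\top)$), choose $L$ with $A+LF$ Schur stable.
\item Define the Luenberger observer $\hat x_{t+1}=A\hat x_t+Ba_t-L(\tilde z_t-F\hat x_t)$ with $a_t=K\hat x_t$. This controller is causal and uses only $\tilde z_0,\ldots,\tilde z_t$.
\item The estimation error $e_t=x_t-\hat x_t$ satisfies $e_{t+1}=(A+LF)e_t$.
\item In coordinates $(x_t,e_t)$, the closed loop is block triangular with diagonal blocks $A+BK$ and $A+LF$. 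It is therefore Schur stable by the separation principle, so $x_t\to0$ for every initial condition.
\end{itemize}

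\emph{Necessity.} Suppose $(A,F)$ is not detectable. By PBH, there is then $v\neq0$ with $Av=\lambda v$, $Fv=0$ and $|\lambda|\ge1$. Take any causal, possibly nonlinear or dynamic, controller $a_t=\kappa_t(\tilde z_0,\ldots,\tilde z_t)$, started from some controller internal state.
\begin{itemize}
\item Compare the trajectory $x_t$ from initial condition $0$ with the trajectory $x'_t$ from $\epsilon v$, taking $\epsilon$ real and working with the real invariant subspace $\mathcal V=\mathrm{span}\{\mathrm{Re}\,v,\mathrm{Im}\,v\}$ if $\lambda$ is complex.
\item Show by induction that $x'_t-x_t=A^t(\epsilon v)$ and that both runs receive identical latents and actions. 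The inductive step uses $FA^t v=\lambda^tFv=0$: if the difference lies in $\ker F$ and the actions coincide so far, then the next latents coincide, so the next actions coincide, and the difference evolves under $A$ alone while staying in the $A$-invariant subspace $\mathcal V\subseteq\ker F$.
\item Conclude that $\|x'_t-x_t\|=|\epsilon|\,\|A^t v\|\ge|\epsilon|\,\|v\|$ does not converge to $0$ (using the real part appropriately, e.g.\ an initial condition in $\mathcal V$ whose orbit does not decay). Hence $x_t$ and $x'_t$ cannot both converge to $0$, and no controller of this kind is asymptotically stabilizing.
\end{itemize}

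The main obstacle is making the necessity argument rigorous for complex $\lambda$ and arbitrary causal, possibly nonlinear, controllers. Two points need care. First, the relevant initial conditions must be real: I would use the real $A$-invariant subspace $\mathcal V$, on which $A$ has spectral radius $|\lambda|\ge1$ and which lies in $\ker F$, so some real initial condition in $\mathcal V$ has a non-decaying orbit. Second, the indistinguishability induction must hold for controllers with internal state; this is clean once the controller state is fixed identically in both runs.
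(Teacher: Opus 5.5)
Your proposal is correct and follows the paper's proof: sufficiency via the observer-based controller $a_t=K\hat x_t$ with block-triangular $(x_t,e_t)$ dynamics (your sign convention $A+LF$ versus the paper's $A-LF$ is immaterial), and necessity via the indistinguishability induction based on $FA^tv=\lambda^tFv=0$. You also restrict to the real $A$-invariant subspace $\operatorname{span}\{\operatorname{Re}v,\operatorname{Im}v\}\subseteq\ker F$ so that the two initial conditions are genuinely real states; this is a small refinement the paper glosses over by taking the possibly complex $v$ directly as the state difference.
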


Lemma~\ref{lemma:conditions_for_stabilization} demonstrates that unstable and marginally stable modes  in~\eqref{system_LDS} need to remain observable through the encoder $E_\theta$, i.e., $E_\theta$ needs to preserve them. In the next subsection, we demonstrate that minimizing next step prediction loss together with SIGReg does not guarantee this property.

\subsection{The Collapse of Unstable Modes}
Let $\Phi_{\geq1}$ denote the unstable subspace of $A$
associated with eigenvalues satisfying $|\lambda|\geq1$. We next demonstrate that standard predictive training objectives (e.g., Eq.~(\ref{optimization:ms_SIGreg})) can discard the entire subspace $\Phi_{\geq1}$ while still reaching their minimum.

\begin{lemma}
\label{lemma:collapse_of_unstable_modes}
There exist open-loop unstable linear systems satisfying Assumption~\ref{assumption:stabilizable_observable} and data distributions for which the prediction loss $\mathcal L_{\mathrm{pred}}$ and SIGReg objective are simultaneously minimized by an encoder satisfying $\Phi_{\geq1} \subset \ker(F)$. 
\end{lemma}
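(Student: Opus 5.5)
This is an existence statement, so the plan is to build one explicit counterexample. The construction is block-diagonal, with an unstable block that is dynamically decoupled from a stable, well-excited block. Take $n=2$, $d=1$, $C=I$ (so $(A,C)$ is trivially observable), and
\begin{align}
A=\begin{bmatrix}\lambda_u&0\\0&\lambda_s\end{bmatrix},\qquad B=\begin{bmatrix}1\\1\end{bmatrix},
\end{align}
with $|\lambda_u|>1>|\lambda_s|$ and $\lambda_u\neq\lambda_s$. The pair $(A,B)$ is controllable, since $[B,\,AB]$ has determinant $\lambda_s-\lambda_u\neq0$, hence stabilizable. Assumption~\ref{assumption:stabilizable_observable} therefore holds, and $\Phi_{\geq1}=\operatorname{span}(e_1)$.

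The candidate encoder is $W=e_2^\top$ with $b=0$, so $F=e_2^\top$ and $\Phi_{\geq1}\subset\ker F$. It comes with the predictor $A_z=\lambda_s$, $B_z=1$. Because the second coordinate evolves autonomously as $x^{(2)}_{t+1}=\lambda_s x^{(2)}_t+a_t$, we have $z_{t+1}=A_z z_t+B_z a_t$ exactly along every trajectory. Hence $\mathcal L_{\mathrm{pred}}=0$, which is its global minimum since the loss is nonnegative.

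It remains to make SIGReg minimal at the same time. SIGReg compares the distribution of one-dimensional projections of the latent against a standard Gaussian via a nonnegative characteristic-function distance. With $d=1$ this simply means the marginal law of $z_t=x^{(2)}_t$ under $\mathcal D$ should be $\mathcal N(0,1)$, which makes the regularizer exactly $0$. We design $\mathcal D$ to achieve this:
\begin{itemize}
\item draw i.i.d.\ actions $a_t\sim\mathcal N(0,1-\lambda_s^2)$;
\item draw the initial state with $x^{(2)}_0\sim\mathcal N(0,1)$ independent of the actions, and $x^{(1)}_0$ arbitrary (for example a small Gaussian, or even $0$);
\item note that $x^{(2)}_t$ is then a stationary Gaussian AR(1) process with variance $\lambda_s^2+(1-\lambda_s^2)=1$;
\item take windows starting at any $t_\tau$ and with any finite $H$.
\end{itemize}
Every encoded latent in every window is then exactly $\mathcal N(0,1)$ distributed, while $x^{(1)}_t$ may grow unboundedly, which the encoder never sees. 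Both terms of Eq.~(\ref{optimization:ms_SIGreg}) attain their lower bound $0$ at this $\theta$, so $\theta$ is a joint minimizer for every $\lambda_{\mathrm{reg}}>0$.

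The main point to check is the exact form of SIGReg in Appendix~\ref{appendix:SIGreg_loss}. We need that its value is nonnegative and vanishes when all projected latent marginals are standard Gaussian. If it is instead an empirical or batch statistic, the claim should be phrased at the population level. If it is evaluated on the pooled latents across time steps, that is fine too, since each time marginal is already $\mathcal N(0,1)$. If the paper's convention requires $d\ge2$, the same construction extends by appending stable decoupled coordinates, each driven by its own independent action channel and made stationary $\mathcal N(0,1)$. The pooled latent is then isotropic Gaussian, and every projection is standard normal.

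Finally, Lemma~\ref{lemma:conditions_for_stabilization} shows the failure is genuine: $(A,F)$ is not detectable because $v=e_1$ satisfies $Av=\lambda_u v$, $Fv=0$, $|\lambda_u|\ge1$. So no latent-feedback controller can stabilize the system, even though the training objective is minimized.
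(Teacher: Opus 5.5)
Your construction is correct and is essentially a concrete instance of the paper's argument, with $\Delta=0$, $n=2$, $d=1$. The encoder reads only the autonomous stable block, the predictor matches that block exactly, and the stable coordinate is Gaussian and normalized to $\mathcal N(0,1)$, so the population SIGReg term vanishes alongside $\mathcal L_{\mathrm{pred}}$. Your stationary AR(1) choice of action variance $1-\lambda_s^2$, together with your explicit check of Assumption~\ref{assumption:stabilizable_observable}, usefully makes precise what the paper leaves implicit: every time marginal, and hence the pooled latent, must be standard Gaussian.
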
\vspace{-0.2cm}
\begin{proof}
We begin our proof by setting the encoder bias to zero and consider coordinates in which
\begin{align}
A=\begin{bmatrix}A_u&\Delta\\0&A_s\end{bmatrix}, \; B=\begin{bmatrix}B_u\\B_s\end{bmatrix}, \text{ and }
x_t=\begin{bmatrix}x_{u,t}\\x_{s,t}\end{bmatrix}.
\end{align}  
Here, we note that $A_u$ is open-loop unstable and $A_s$ is stable. We choose $F=[0\ F_s]$, where $F_s$ is invertible on the stable coordinates. Then, we have 
\begin{align}
z_{t+1} = F_s A_s F_s^{-1} z_t + F_s B_s u_t.
\end{align}
Therefore, choosing $A_z = F_s A_s F_s ^{-1}$ and $B_z = F_s B_s$ implies zero one-step error, while every unstable direction lies in $\ker(F)$.

Note that SIGReg~\citep{balestriero2025lejepa} does not rule out this degenerate solution. Let $x_s \sim \mathcal{N}(0,\Sigma_s)$, with $\Sigma_s \succ 0$, and take
$F_s = Q\Sigma_s^{-1/2}$ for any orthogonal matrix $Q$. Then $z_t$ is distributed according to a standard Gaussian. We then note that when we use population SIGReg objective in Eq.~(\ref{optimization:ms_SIGreg}) the regularization term $\mathcal L_{\mathrm{reg}}$ can be minimized despite the unstable directions being absent in the latent space. 
\end{proof}

The constructed encoder $E_\theta$ minimizes both terms of Eq.~\eqref{optimization:ms_SIGreg} and therefore minimizes Eq.~\eqref{optimization:ms_SIGreg} for every $\lambda_{\mathrm{reg}}\geq0$, while discarding the entire unstable subspace. By Lemma~\ref{lemma:conditions_for_stabilization}, the physical system cannot be stabilized from the resulting latent state. Our linear analysis removes representation capacity as a confounding factor: even with a linear observation-to-latent map and little compression, standard predictive objectives can discard precisely the state directions required for stabilization. 

\begin{figure}[t]
\centering
\includegraphics[width=1\textwidth]{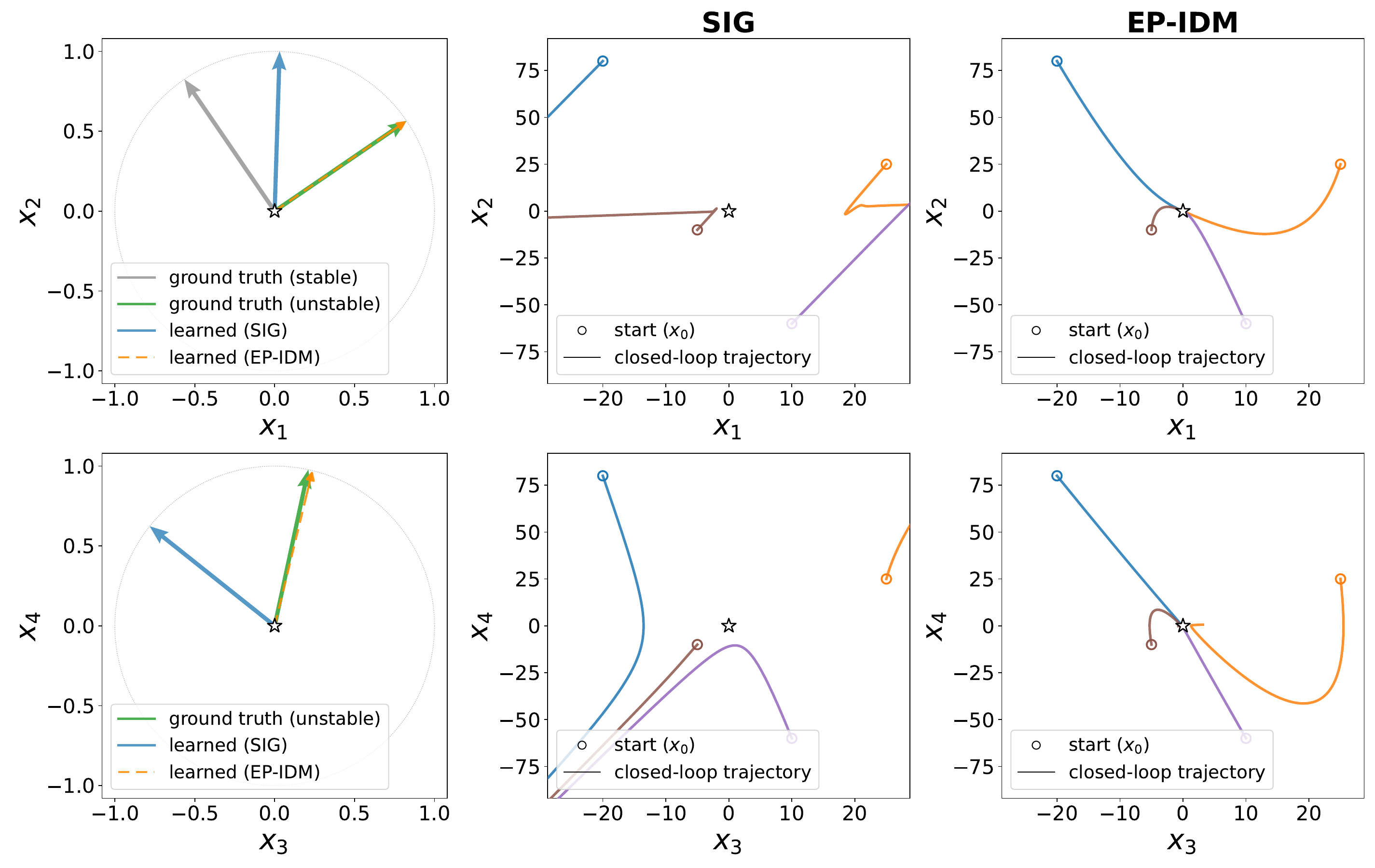}
\vspace{-0.4cm}
\caption{\small \textbf{SIG vs.\ EP-IDM across two examples.} Each row compares a
representation trained with SIGReg (SIG) against one trained with the
endpoint inverse-dynamics (action-reconstruction) loss (EP-IDM).
\textbf{Top row: synthetic open-loop unstable LDS.}
\textbf{Bottom row: linearized CartPole.}
\textbf{Left:} true unstable ($v_u$, green) and stable ($v_s$, grey)
eigenvectors versus the learned predictor's dominant eigenvector, for SIG
(blue) and EP-IDM (dashed orange). \textbf{Center:} closed-loop trajectories
under the latent LQR controller from four initial states (circles) for the
model trained with SIGReg. \textbf{Right:} closed-loop trajectories under
the latent LQR controller for the model trained with the inverse dynamics
loss (EP-IDM). For the bottom row $x_3$, and $x_4$ corresponds to the pole angle and angular velocity, respectively. The latent dimension is one for the example in the top row and three for the example in the bottom row.
} 
\label{fig:Examples_main}. 
\vspace{-0.5cm}
\end{figure}

We illustrate Lemma~\ref{lemma:collapse_of_unstable_modes} with a synthetic open-loop unstable system (top row of Figure~\ref{fig:Examples_main}) and a linearized CartPole system (bottom row). Appendix~\ref{appendix:example_details} provides details and an additional open-loop stable example. The left panels compare the learned unstable eigenvector, mapped back to the state space, with the ground-truth unstable eigenvector. The center panels show closed-loop trajectories for the physical systems under the latent linear quadratic regulator (LQR) trained with prediction and SIGReg through Eq.~(\ref{optimization:ms_SIGreg}). In both systems, SIG learns a substantially misaligned unstable direction, and the resulting controller fails to stabilize the physical system.\footnote{We use SIGReg to denote the sketched isotropic Gaussian regularization objective~\citep{balestriero2025lejepa}, and $\mathrm{SIG}$ to denote a model trained using this regularizer. When the
distinction is clear from context, we use the two terms interchangeably.}

\section{Preserving Unstable Directions Through Inverse Dynamics}
\label{sec:idm}

To encourage the learned representation to preserve unstable modes, we augment the world-model training objective with an inverse dynamics loss (IDM). In particular, our endpoint inverse dynamics (EP-IDM) loss reconstructs the entire action sequence over the prediction horizon from the initial and final latent states (see Fig.~\ref{fig:architecture} for the complete architecture).

\subsection{Endpoint Inverse Dynamics}

We train an inverse dynamics model\footnote{We augment $\theta$ with the trainable parameters of the action decoder.} $D_\theta: \mathcal{Z} \times \mathcal{Z} \to \mathcal{A}^H$ to reconstruct the action sequence from the initial and final latent states $z_{t_\tau}$ and $z_{t_\tau+H}$\footnote{We trained two other variants of this inverse dynamics loss that are detailed in Appendix~\ref{appendix:additinal_action_reconstruction}.}:
\begin{equation}
\mathcal{L}_{\mathrm{EP}\text{-}\mathrm{IDM}}(\theta)
= \mathbb{E}_{\tau \sim \mathcal{D}}\left[\frac{1}{H}\sum_{t=0}^{H-1}
\big\lVert \big[D_\theta(z_{t_\tau},\, z_{t_\tau+ H})\big]_t - a_{t_\tau+t} \big\rVert_2^2\right],
\end{equation}
where $[D_\theta(z_{t_\tau},\, z_{t_\tau+H})\big]_t$ denotes the predicted action at timestep $t$.
We jointly optimize the encoders, predictor, and inverse dynamics model solving the optimization problem in   (\ref{optimization:ms_SIGreg}) with $\mathcal{L}_{\mathrm{EP}\text{-}\mathrm{IDM}}(\theta)$ in lieu of $\mathcal{L}_{\mathrm{reg}}(\theta)$.

\subsection{Preservation of Reachable Directions}

We show that exact endpoint action reconstruction requires the encoder to preserve every direction reachable within $H$ steps.
For the linear analysis, we consider an affine inverse dynamics model
\begin{equation}
D_\theta(z_0,z_H) = W_H[z_0\;z_H] + b_H \text{ with } W_H \in \mathbb R^{mH \times 2d} \text{ and } b_H \in \mathbb R^{mH}.
\end{equation}
Let $\mathcal C_H=\begin{bmatrix}A^{H-1}B&A^{H-2}B&\cdots&B\end{bmatrix}$ denote the finite-horizon controllability matrix and let $\mathcal R_H=\operatorname{range}(\mathcal C_H)$ be the $H$-step reachable subspace.

We require the training data to contain sufficiently rich action excitation. For a fixed initial state $x_{t_\tau}$, let
$\operatorname{supp}(\mathbf a_{\tau,H}\mid x_{t_\tau})$ denote the support of the conditional distribution of the stacked action sequence $\mathbf a_{\tau,H}:=\begin{bmatrix}
a_{t_\tau}^\top &\cdots &a_{t_\tau+H-1}^\top
\end{bmatrix}^{\top}\in\mathbb R^{mH}.$

\begin{theorem}
\label{theorem:condition_achieved_action_reconstruction}
Consider the linear system in Eq.~\eqref{system_LDS} and suppose that, for almost every initial
state $x_{t_\tau}$, the conditional support
$\operatorname{supp}(\mathbf a_{\tau,H}\mid x_{t_\tau})$ contains a
nonempty open subset of $\mathbb R^{mH}$. If an affine inverse dynamics
model achieves
$\mathcal L_{\mathrm{EP}\text{-}\mathrm{IDM}}=0$, then
\begin{align}
\label{eq:injective_reachable_subspace}
\ker(F)\cap\mathcal R_H=\{0\}.
\end{align}
Equivalently, the encoder is injective on the $H$-step reachable
subspace: if $v\in\mathcal R_H$ and $Fv=0$, then $v=0$. Thus, no state
direction that can be generated by an action sequence within $H$ steps
is discarded by the encoder. In particular, if $\Phi_{\geq1}\subseteq\mathcal R_H$, then $\ker(F)\cap\Phi_{\geq1}=\{0\}.$
Exact reconstruction $\mathcal L_{\mathrm{EP}\text{-}\mathrm{IDM}} = 0$ holds only if $\operatorname{rank}(F\mathcal C_H)=mH
\leq\min\{d,n\}.$
\end{theorem}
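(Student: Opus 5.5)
Zero loss means the affine decoder reproduces the stacked action sequence almost surely. I will turn this into a linear identity in the action vector, use the open-support hypothesis to promote it to an identity on all of $\mathbb R^{mH}$, and then read off the kernel statement and the rank condition.

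\textbf{Step 1 (zero loss as an identity in $\mathbf a$).} Write $x_{t_\tau+H}=A^Hx_{t_\tau}+\mathcal C_H\mathbf a_{\tau,H}$ and $z_k=Fx_k+b$ (absorbing $Wb_y$ if needed). Split $W_H=[W_0\;W_1]$ according to the two endpoint arguments. Zero loss gives, for $\mathcal D$-almost every window,
\begin{align*}
\mathbf a_{\tau,H}=W_0(Fx_{t_\tau}+b)+W_1\bigl(FA^Hx_{t_\tau}+F\mathcal C_H\mathbf a_{\tau,H}+b\bigr)+b_H .
\end{align*}
Rearranged, this reads $(I_{mH}-W_1F\mathcal C_H)\mathbf a_{\tau,H}=c(x_{t_\tau})$, where $c(x_{t_\tau})$ is an affine function of $x_{t_\tau}$ alone.

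\textbf{Step 2 (use the open support).} Fix a typical initial state, namely one with open conditional support at which the identity holds for conditionally a.e.\ $\mathbf a$. This is where some care is needed: one must pass from ``a.s.\ under the joint law'' to ``a.s.\ under the conditional law for a.e.\ $x$'' (Fubini/disintegration). Then continuity of the affine map in $\mathbf a$ extends the identity to the closure of the support, which contains an open set $U$. An affine map $\mathbf a\mapsto(I-W_1F\mathcal C_H)\mathbf a-c$ that vanishes on a nonempty open set has zero linear part. Hence
\begin{align*}
W_1F\mathcal C_H=I_{mH}.
\end{align*}
I expect this measure-theoretic transfer to be the main obstacle.

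\textbf{Step 3 (conclusions).} From $W_1F\mathcal C_H=I_{mH}$ we get $mH=\operatorname{rank}(W_1F\mathcal C_H)\le\operatorname{rank}(F\mathcal C_H)\le mH$, so $\operatorname{rank}(F\mathcal C_H)=mH$. Since $F\mathcal C_H$ factors through $\mathbb R^n$ and maps into $\mathbb R^d$, also $mH\le\min\{d,n\}$.

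For the kernel statement, take $v\in\mathcal R_H$ with $Fv=0$. Write $v=\mathcal C_H w$ and note that $\mathcal C_H$ is injective, because $F\mathcal C_H$ has full column rank. Then $w=W_1F\mathcal C_Hw=W_1Fv=0$, so $v=0$. This proves $\ker(F)\cap\mathcal R_H=\{0\}$.

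The final claim is immediate: if $\Phi_{\ge1}\subseteq\mathcal R_H$, then $\ker(F)\cap\Phi_{\ge1}\subseteq\ker(F)\cap\mathcal R_H=\{0\}$.
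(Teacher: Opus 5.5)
Your proposal is correct and follows the paper's proof. The paper likewise fixes a typical initial state and observes that only $F\mathcal C_H\mathbf a_{\tau,H}$ varies in the endpoint latent. It then invokes its left-inverse lemma (an affine map vanishing on an open set is identically zero) to conclude that $F\mathcal C_H$ has full column rank, and derives the kernel and rank statements as in your Step 3. Your explicit identity $W_1F\mathcal C_H=I_{mH}$, together with your observation that the zero set of a continuous map with full conditional measure contains the support, makes precise the measure-theoretic transfer that the paper passes over.
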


The proof of Theorem \ref{theorem:condition_achieved_action_reconstruction} is provided in~Appendix~\ref{appendix:proof_action_reconstruction}. We note that under Assumption \ref{assumption:stabilizable_observable} and a sufficiently large horizon $H$,  we have that $\Phi_{\geq1}\subseteq\mathcal R_H$. Therefore, the theorem above ensures that all unstable and marginally unstable modes remain observable through the representation, satisfying the detectability condition in Lemma~\ref{lemma:conditions_for_stabilization}.

Theorem~\ref{theorem:condition_achieved_action_reconstruction} describes the idealized exact-reconstruction regime. We note that empirically, we use $\mathcal{L}_{\mathrm{EP}\text{-}\mathrm{IDM}}$ as a soft inductive bias and do not require it to be zero. Indeed, when $mH>d$, exact recovery of an arbitrary action sequence is not feasible. Nevertheless, minimizing the loss can still encourage the representation to preserve the action-induced directions required for control (Section \ref{sec:numerics}).

As illustrated in Fig.~\ref{fig:Examples_main}, training the latent world model with the endpoint inverse dynamics loss (EP-IDM) preserves the unstable direction of the physical system in the latent space. The learned latent dynamics capture the unstable mode that must be corrected through feedback, enabling latent stabilization, as shown in the right panels of Fig.~\ref{fig:Examples_main}. 
In addition, our result in Theorem \ref{theorem:condition_achieved_action_reconstruction} motivates the question we address next: \emph{Which controllable directions preserved by the action reconstruction are the most identifiable ones?}

\subsection{Reachability and Unstable Dynamics}

We consider the finite-horizon controllability Gramian given by $\Pi_H=\sum_{k=0}^{H-1}A^kBB^\top(A^k)^\top.$

For simplicity, we exclude eigenvalues on the unit circle, so $\Phi_{\geq1}$ coincides with the strictly unstable subspace $\Phi$. Let $\|\sin\Theta(\hat{\Phi},\Phi)\|_2$ denote their subspace distance, as defined in Definition~\ref{definition:principal_angle_distance}.

\begin{theorem}
\label{theorem:unstable_subspace_recovery}
Under the mild assumptions given in Appendix \ref{appendix:proof_unstable_subspace_recovery}, let $\hat{\Phi}$ be the span of the top
$r=\dim(\Phi)$ eigenvectors of $\Pi_H$. Then, we have
\begin{equation}
\|\sin\Theta(\hat{\Phi},\Phi)\|_2 \to 0
\qquad \text{as } H\to\infty.
\end{equation}
\end{theorem}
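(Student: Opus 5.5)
The plan is to work in coordinates that block-decompose $A$ along the unstable/stable splitting and to show that the Gramian $\Pi_H$ is dominated, after normalization, by an unstable block that grows geometrically. By hyperbolicity (no eigenvalues on the unit circle), choose an invertible $T$ with
\begin{align}
T^{-1}AT=\begin{bmatrix}A_u&0\\0&A_s\end{bmatrix},\qquad
T^{-1}B=\begin{bmatrix}B_u\\B_s\end{bmatrix},
\end{align}
where $\rho(A_s)<1$ and every eigenvalue of $A_u$ has modulus $>1$. Then $\Phi=T\,\mathrm{range}\begin{bmatrix}I_r\\0\end{bmatrix}$.

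The \emph{mild assumptions} I will impose are:
\begin{itemize}[leftmargin=*, itemsep=1pt, topsep=2pt]
\item $(A_u,B_u)$ is controllable, which follows from stabilizability in Assumption~\ref{assumption:stabilizable_observable};
\item for simplicity, $A$ is diagonalizable, or at least the minimal unstable eigenvalue modulus $\mu:=\min|\lambda(A_u)|>1$ is used with polynomial slack.
\end{itemize}

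In these coordinates, $T^{-1}\Pi_HT^{-\top}=\begin{bmatrix}P_u&P_{us}\\P_{us}^\top&P_s\end{bmatrix}$ with $P_u(H)=\sum_k A_u^kB_uB_u^\top A_u^{k\top}$. The cross term is $P_{us}=\sum_k A_u^kB_uB_s^\top A_s^{k\top}$, and $P_s(H)$ converges to a finite Gramian.

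The key quantitative estimates are as follows.
\begin{enumerate}[leftmargin=*, itemsep=1pt, topsep=2pt]
\item $\lambda_{\min}(P_u(H))\ge c\,\mu^{2H}$ up to polynomial factors. I would show this by writing $P_u(H)\succeq A_u^{H-r'}\,\mathcal G\,A_u^{(H-r')\top}$, where $\mathcal G$ is the positive definite $r'$-step controllability Gramian of $(A_u,B_u)$, and then using $\sigma_{\min}(A_u^{j})\gtrsim \mu^{j}/\mathrm{poly}(j)$.
\item $\|P_s(H)\|=O(1)$.
\item $\|P_{us}(H)\|\le \sum_k\|A_u^k\|\,\|A_s^k\|\,\|B\|^2=O(\|A_u\|^H)$. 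This may exceed $\mu^H$, so it cannot be compared naively with $\lambda_{\min}(P_u)$; the next paragraph handles it.
\end{enumerate}

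To handle the cross term, I will not compare entries directly. Instead, I will perform block elimination (a Schur complement) in the transformed coordinates, using
\begin{align}
\begin{bmatrix}P_u&P_{us}\\P_{us}^\top&P_s\end{bmatrix}=\begin{bmatrix}I&0\\P_{us}^\top P_u^{-1}&I\end{bmatrix}\begin{bmatrix}P_u&0\\0&S\end{bmatrix}\begin{bmatrix}I&P_u^{-1}P_{us}\\0&I\end{bmatrix},
\end{align}
with Schur complement $S=P_s-P_{us}^\top P_u^{-1}P_{us}$, which satisfies $0\preceq S\preceq P_s=O(1)$. The obstacle is showing that $\|P_u^{-1}P_{us}\|\to0$, or at least stays bounded.

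For this I plan to write $P_u^{-1}P_{us}=P_u^{-1}\sum_k A_u^kB_uB_s^\top A_s^{k\top}$ and bound it by Cauchy--Schwarz in the form $P_{us}P_{us}^\top\preceq P_u\cdot\sum_k\|B_s^\top A_s^{k\top}\|^2$. This gives $\|P_u^{-1/2}P_{us}\|=O(1)$, and hence $\|P_u^{-1}P_{us}\|\le\|P_u^{-1/2}\|\,O(1)=O(\mu^{-H}\mathrm{poly}(H))\to0$. This sidesteps the $\|A_u\|^H$ issue entirely, which I consider the crux of the argument.

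From the factorization, the range spanned by the top $r$ eigenvectors of the transformed matrix is close to $\mathrm{range}\begin{bmatrix}I\\P_{us}^\top P_u^{-1}\end{bmatrix}$. This closeness holds because its top $r$ eigenvalues are of order $\lambda_{\min}(P_u)\to\infty$, while the remaining eigenvalues are $O(1)$ (the rank-$r$ part versus $S$). A Davis--Kahan $\sin\Theta$ argument therefore applies, with perturbation $O(1)$ and eigengap tending to $\infty$.

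Finally, I map back through the congruence by $T$. Eigenvectors of $\Pi_H$ are not the transformed eigenvectors, so rather than transporting them I will apply Davis--Kahan to $\Pi_H$ directly. I split $\Pi_H=M_H+N_H$, where
\begin{align}
M_H=T\begin{bmatrix}I\\L^\top\end{bmatrix}P_u\begin{bmatrix}I&L\end{bmatrix}T^\top,\qquad L=P_u^{-1}P_{us},
\end{align}
has rank $r$ and range $T\,\mathrm{range}\begin{bmatrix}I\\L^\top\end{bmatrix}$, and $N_H=T\,\mathrm{diag}(0,S)\,T^\top$ satisfies $\|N_H\|=O(1)$. The nonzero eigenvalues of $M_H$ are $\gtrsim\sigma_{\min}(T)^2\lambda_{\min}(P_u)\to\infty$. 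Davis--Kahan then gives that $\hat\Phi$ is within $O(1/\lambda_{\min}(P_u))$ of $\mathrm{range}(M_H)$. Since $L\to0$, $\mathrm{range}(M_H)$ converges to $\Phi$, and the triangle inequality for $\|\sin\Theta\|$ completes the argument.
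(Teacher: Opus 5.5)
Your proof is correct, and it takes a genuinely different route from the paper's. The paper works in orthonormal coordinates adapted to $\Phi$ and $\Phi^\perp$, where $A$ is block upper triangular with coupling $\Delta$. It splits $\Pi_H=\mathrm{diag}(\Pi_{u,H},0)+M_2$ and bounds the cross block crudely by $\|\Pi_{us,H}\|_2\lesssim\sum_{j<H}(\gamma\beta)^j$. It then applies Davis--Kahan once, to an unperturbed matrix whose top eigenspace is exactly $\Phi$. To make this work it must assume $\sigma_{\min}(A_u)\ge\alpha>1$, the growth separation $\gamma\beta<\alpha^2$, and the non-cancellation bound $\lambda_{\min}(\Pi_{u,H})\ge c_u\alpha^{2H}$ (Assumptions~\ref{assumption:spectral_growth_separation} and~\ref{assumption:unstable_reachability}).

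You instead pass to non-orthogonal spectral coordinates. There stabilizability is exactly controllability of $(A_u,B_u)$, so $\lambda_{\min}(P_u)\gtrsim\mu^{2H}/\mathrm{poly}(H)\to\infty$ needs no extra hypothesis. You control the cross term through positive semidefiniteness, $P_{us}P_{us}^\top\preceq\|P_s\|_2P_u$, together with the Schur-complement split. As a result the residual $N_H$ is $O(1)$ with no separation condition at all.

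The price has three parts:
\begin{itemize}
\item a two-step argument: Davis--Kahan to $\mathrm{range}(M_H)$, then $\|\sin\Theta(\mathrm{range}(M_H),\Phi)\|_2\le\kappa(T)\|L\|_2$ with $\kappa(T)=\|T\|_2\|T^{-1}\|_2$, then the triangle inequality;
\item constants that degrade when the stable and unstable invariant subspaces are nearly aligned;
\item a rate of order $\mu^{-H}$ up to polynomial factors, inherited from $\|L\|_2\lesssim\lambda_{\min}(P_u)^{-1/2}$.
\end{itemize}
The paper's stronger assumptions, by contrast, buy the explicit rate $(q^H+1)/\alpha^{2H}$, which can be faster.

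In exchange, you get convergence under only hyperbolicity and the stabilizability in Assumption~\ref{assumption:stabilizable_observable}. These are implied by the paper's hypotheses: if some unstable mode were uncontrollable, $\lambda_{\min}(\Pi_{u,H})$ would stay bounded, contradicting Assumption~\ref{assumption:unstable_reachability}. So your argument strictly generalizes the stated theorem. You can also drop the diagonalizability caveat, since the Jordan form already supplies the polynomial slack.
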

The convergence rates and proof are provided in
Appendix~\ref{appendix:proof_unstable_subspace_recovery}. 

We note that although $\operatorname{range}(\Pi_H)=\operatorname{span}
(B,AB,\ldots,A^{H-1}B)$ does not acquire new directions once $H$ exceeds
the state dimension, the relative weighting of these directions
continues to change with $H$. Theorem~\ref{theorem:unstable_subspace_recovery}
demonstrates that the effects of unstable modes grow exponentially and eventually
dominate the stable ones, making the leading eigenspace of $\Pi_H$ to
converge to the unstable subspace $\Phi$. 

Moreover, Assumption~\ref{assumption:unstable_reachability} in Theorem~\ref{theorem:unstable_subspace_recovery} requires
$\lambda_{\min}(\Pi_{u,H})\geq c_u\alpha^{2H}$, where $\Pi_{u,H}$
denotes the unstable block of the finite-horizon controllability
Gramian. We emphasize that this condition strengthens controllability by
requiring every unit unstable direction to exhibit an action-induced
response growing at least as $\alpha^{2H}$. It follows when
$(A_u,B_u)$ is controllable and
$\sigma_{\min}(A_u)\geq\alpha>1$, as discussed in
Appendix~\ref{appendix:proof_unstable_subspace_recovery}. For $\Delta\neq0$, Assumption \ref{assumption:unstable_reachability} further requires that the indirect input response transmitted through the stable coordinates and
the coupling $\Delta$ does not cancel the direct unstable response
generated through $B_u$. Therefore, 
\begin{align}
\lambda_{\min}(\Pi_{u,H}) =
\min_{\|v\|_2=1}v^\top\Pi_{u,H}v
\geq c_u\alpha^{2H},
\end{align}
or equivalently,
\begin{align}
v^\top\Pi_{u,H}v \geq c_u\alpha^{2H}
\qquad \text{for every } \|v\|_2=1
\end{align}
requires that the finite-horizon controllability energy grows at the same exponential scale dictated by the unstable modes, which strengthens the condition of all unstable directions being reachable.  

Therefore, taken together, Theorems \ref{theorem:condition_achieved_action_reconstruction} and \ref{theorem:unstable_subspace_recovery} demonstrate that action reconstruction, in particular EP-IDM, prevents the encoder from discarding finite-horizon controllable directions. The controllability Gramian further explains why the dominant action-induced controllable directions increasingly align with the unstable subspace as $H$ grows, i.e., why the unstable directions are the most identifiable ones. 

We also note that, for simplicity, Theorem~\ref{theorem:unstable_subspace_recovery} excludes eigenvalues on the unit circle. Therefore, its instantiation to the
linearized CartPole example in Figure~\ref{fig:Examples_main}
concerns only the strictly unstable mode depicted in the left panel.

\section{Experiments}
\label{sec:numerics}

We now validate\footnote{Code and checkpoints to reproduce the results can be found at \url{https://jepa-control.github.io/}.} our approach on nonlinear visual-control tasks in MuJoCo \citep{todorov2012mujoco}. Our experiments address three questions: (i) whether standard predictive JEPA objectives (i.e., solving Eq. (\ref{optimization:ms_SIGreg}) with SIGReg as a regularizer) preserve the information required for feedback stabilization, (ii) whether EP-IDM preserves that information, and (iii) whether the resulting representations remain useful for stable systems. We consider CartPole, Walker2D, and PointMaze systems (See Fig.~\ref{fig:tasks_frames} for the visualizations of successful trials for CartPole, Walker2D, and PointMaze). Additional experiments and ablations are provided in Appendix~\ref{appendix:additional_experimental_validation}.

\begin{figure}[t]
\centering
\includegraphics[width=\textwidth]{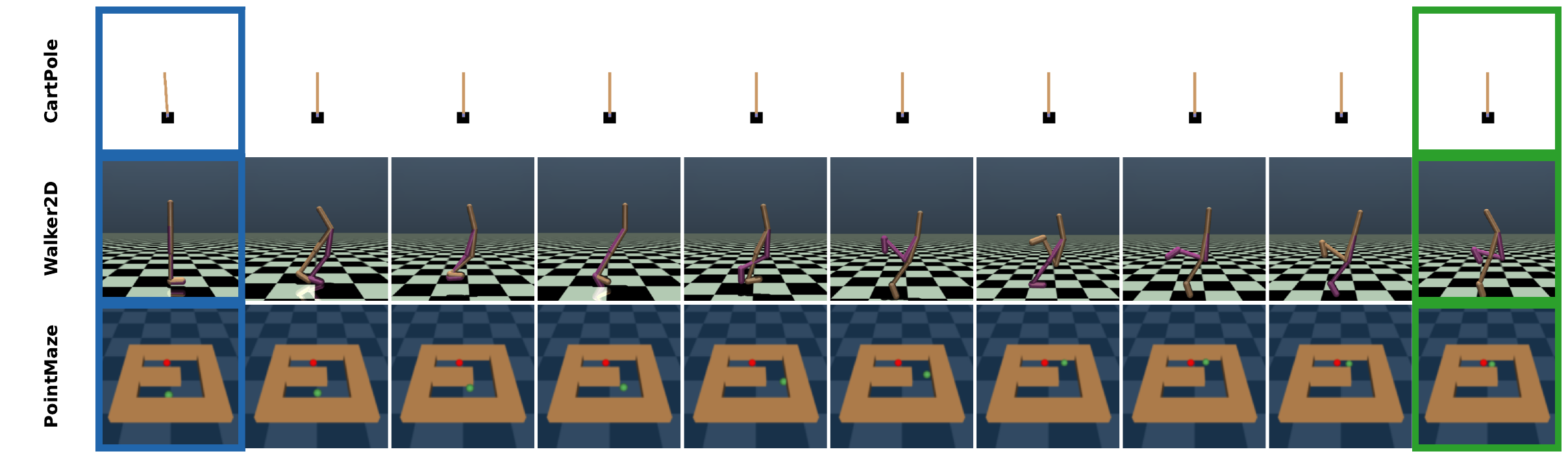}
\caption{\small \textbf{CartPole}, \textbf{Walker2D}, and \textbf{PointMaze} visual-control tasks used in our evaluation. Each row depicts ten frames from a successful trial (initial state: blue border, final state: green border). CartPole: a continuous-action inverted-pendulum balancing task controlled via LQR. Walker2D: a bipedal robot locomotion task controlled via iCEM \citep{pinneri2021sample}. PointMaze: a 2-D point-mass robot navigation task planned via CEM.}
\label{fig:tasks_frames}
\end{figure}

\begin{wrapfigure}{r}{0.65\textwidth}
\vspace{-1cm}
\centering
\includegraphics[width=0.65\textwidth]{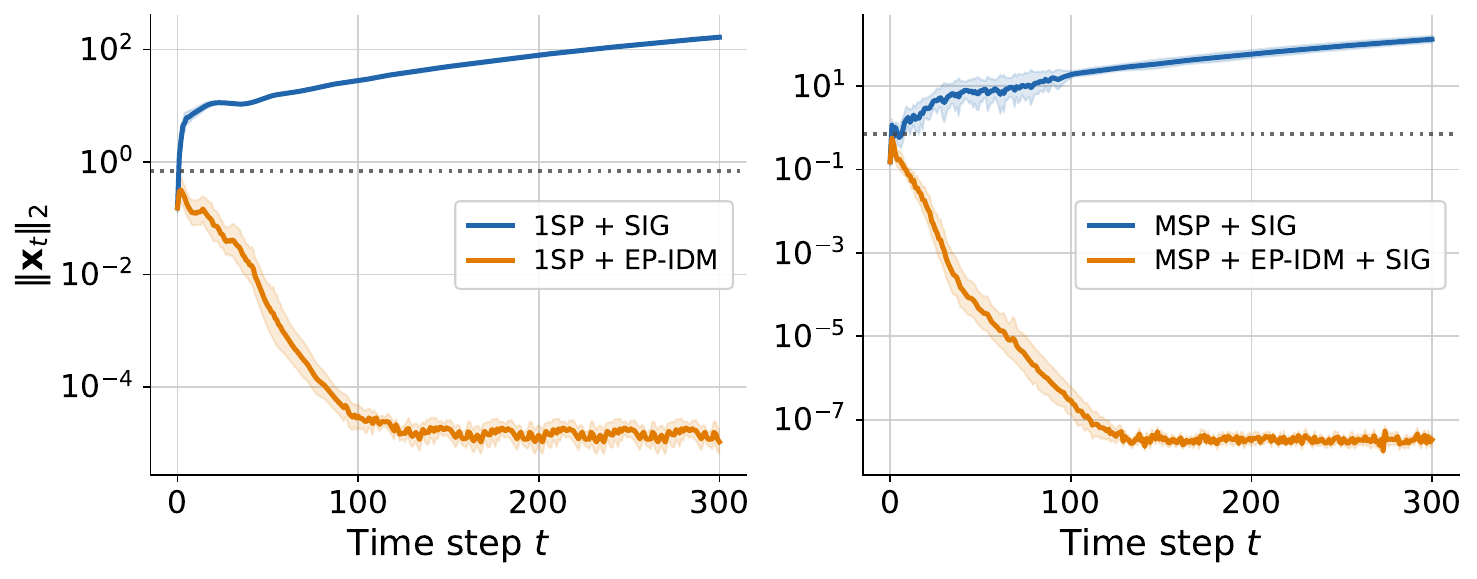}
\caption{\small \textbf{CartPole} state norm $\|x_t\|_2$ under the designed latent LQR controller over $300$ steps. 
}
\vspace{-0.2cm}
\label{fig:lqr_from_scratch}
\end{wrapfigure}

\subsection{Setup}

We consider both encoders trained from scratch and frozen pretrained encoders (DINOv2 \citep{oquab2023dinov2} and iBOT \citep{zhou2021ibot}, with results reported in Appendix \ref{appendix:pretrained}), for which we train a projector on top. Our training objectives combine either one-step prediction (1SP) or autoregressive multi-step prediction (MSP) with either SIGReg (SIG) or an inverse-dynamics objective (IDM).

Our primary objective is endpoint action reconstruction (EP-IDM), for which the decoder reconstructs the complete action sequence from the first and last latent states. We additionally evaluate one-step action reconstruction (IDM), which reconstructs each action from consecutive latent states, and multi-step action reconstruction (MS-IDM), which reconstructs the complete action sequence from the corresponding latent trajectory. Further details are provided in Appendix~\ref{appendix:additinal_action_reconstruction}. Architecture, optimization, and dataset details are provided in Appendix~\ref{appendix:experimental_details}.
\vspace{0.2cm}

\noindent \textbf{Systems.} CartPole requires stabilizing an open-loop unstable upright equilibrium, where small angular perturbations can make the pole fall. Walker2D is also open-loop unstable because maintaining a walking gait requires continuous feedback. PointMaze instead evaluates goal reaching in a U-shaped maze with open-loop stable point-mass dynamics.
\vspace{0.2cm}

\noindent \textbf{Controllers and planners.} We evaluate three uses of the learned world model. Latent LQR linearizes the predictor at the equilibrium and provides our most direct diagnostic: a collapsed unstable mode cannot be identified or corrected by the resulting feedback controller. CEM searches over sampled open-loop rollouts with repeated replanning, whereas GBP differentiates the terminal latent cost through the predicted rollout, both planners are implemented as receding-horizon MPC and periodically replan. Unlike LQR, neither requires the local linearization to capture the unstable modes. We report success over ten trials and provide all hyperparameters in Appendix~\ref{appendix:experimental_details}.\vspace{0.2cm}

\noindent\textbf{Metrics.} We report success rate (SR). For CartPole, success requires $\|x_T\|_2\leq0.7$ after the $300$-step rollout, and the mean fraction of steps (MFS) satisfying $\|x_t\|_2\leq0.7$ measures how consistently the controller remains near the upright equilibrium. For PointMaze, success requires $\|p_T-p_{\mathrm g}\|_2\leq0.5$, where $p_T$ and $p_{\mathrm g}$ are the final and goal positions. For Walker2D, we evaluate whether the learned latent dynamics preserve the limit cycle induced by the locomotion policy used to collect the training data. We also do planning through the learned latent dynamics for Walker2D with iCEM \citep{pinneri2021sample}, where success requires keeping forward locomotion without falling over the evaluation horizon. Additional details are provided in Appendix~\ref{appendix:experimental_details}.

\begin{figure}[t]
\centering
\includegraphics[width=1\textwidth]{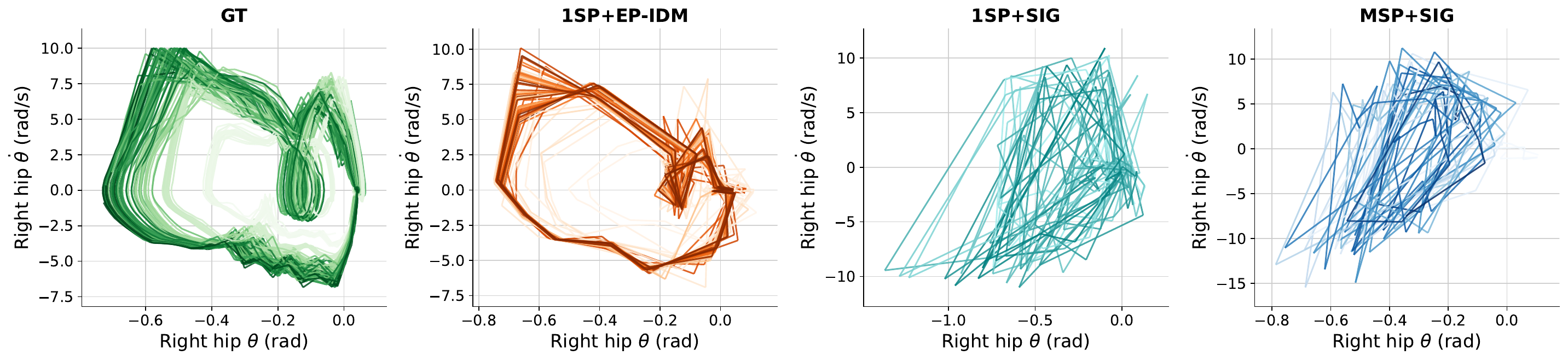}
\caption{\small \textbf{Walker2D.} Phase portraits of right-hip angle vs. angular velocity for GT dynamics, 1SP+EP-IDM, 1SP+SIG, and MSP+SIG. Each trajectory is a $500$-step closed-loop rollout: GT uses real MuJoCo physics, model panels roll out in latent space under an MLP-decoded state, with a SAC~\citep{huang2022cleanrl,haarnoja2018soft} policy acting on the decoded observation.}
\label{fig:limit_cycle_walker2d}
\vspace{-0.1cm}
\end{figure}

\subsection{CartPole: Stabilization Around an Unstable Equilibrium}

Table~\ref{tab:from-scratch-sigreg-idm} shows the effect of action reconstruction on stabilization performance when the encoder and predictor are trained from scratch. Prediction with SIGReg yields zero latent-LQR success and low MFS, whereas adding EP-IDM achieves $100\%$ success. This confirms that standard prediction and anti-collapse regularization can discard controllable unstable directions, while action reconstruction encourages the representation to preserve the finite-horizon reachable directions containing them.

\begin{table}[t]
\centering
\caption{\small\textbf{CartPole} stabilization success rate (\%). 
}
\label{tab:from-scratch-sigreg-idm}
\small
\begin{tabular}{lcccc}
\toprule
Model & LQR (SR) & LQR (MFS) & CEM (SR) & GBP (SR) \\
\midrule
Ground truth (GT) & \textbf{100\%} & \textbf{1.000} & \textbf{100\%} & \textbf{100\%} \\
\midrule
1SP + SIG & 0\% & 0.202 & \textbf{100\%} & 90\% \\
MSP + SIG & 0\% & 0.247 & \textbf{100\%} & \textbf{100\%} \\
\midrule
1SP + EP-IDM & \textbf{100\%} & 0.999 & \textbf{100\%} & \textbf{100\%} \\
MSP + EP-IDM + SIG & \textbf{100\%} & 0.996 & \textbf{100\%} & \textbf{100\%} \\
\bottomrule
\end{tabular}
\end{table}

CEM and GBP succeed even with SIGReg models (Table~\ref{tab:from-scratch-sigreg-idm}). While LQR relies on a local linearization and directly verifies whether the learned model supports local feedback, CEM and GBP optimize finite action sequences through nonlinear predictor rollouts. Their success indicates that SIGReg models retain enough information for nonlinear replanning, even when their local latent linearizations do not yield stabilizing LQR controllers.
Figure~\ref{fig:lqr_from_scratch} makes the failure of the standard JEPA recipe more explicit: under 1SP+SIG and MSP+SIG, the state norm grows without bound, whereas EP-IDM enables the latent LQR controller to drive the state toward the equilibrium and keep it there.

\subsection{Walker2D: Preserving a Walking Gait}

For Walker2D, we evaluate whether action reconstruction preserves control-relevant dynamics beyond a single unstable equilibrium. A walking gait forms a limit cycle whose deviations must be continuously corrected. Figure~\ref{fig:limit_cycle_walker2d} compares the right-hip phase portrait from the ground-truth MuJoCo simulator with latent rollouts from 1SP+EP-IDM, MSP+SIG, and 1SP+SIG. EP-IDM preserves the cyclic structure and range of joint angles and angular velocities, whereas 1SP+SIG and MSP+SIG fail to capture the limit cycle induced by the locomotion policy. Hence, we confirm empirically that action reconstruction also preserves control-relevant information associated with a locomotion orbit which goes beyond the scope of preserving unstable equilibria.

\begin{figure}[t]
\centering
\includegraphics[width=1\textwidth]{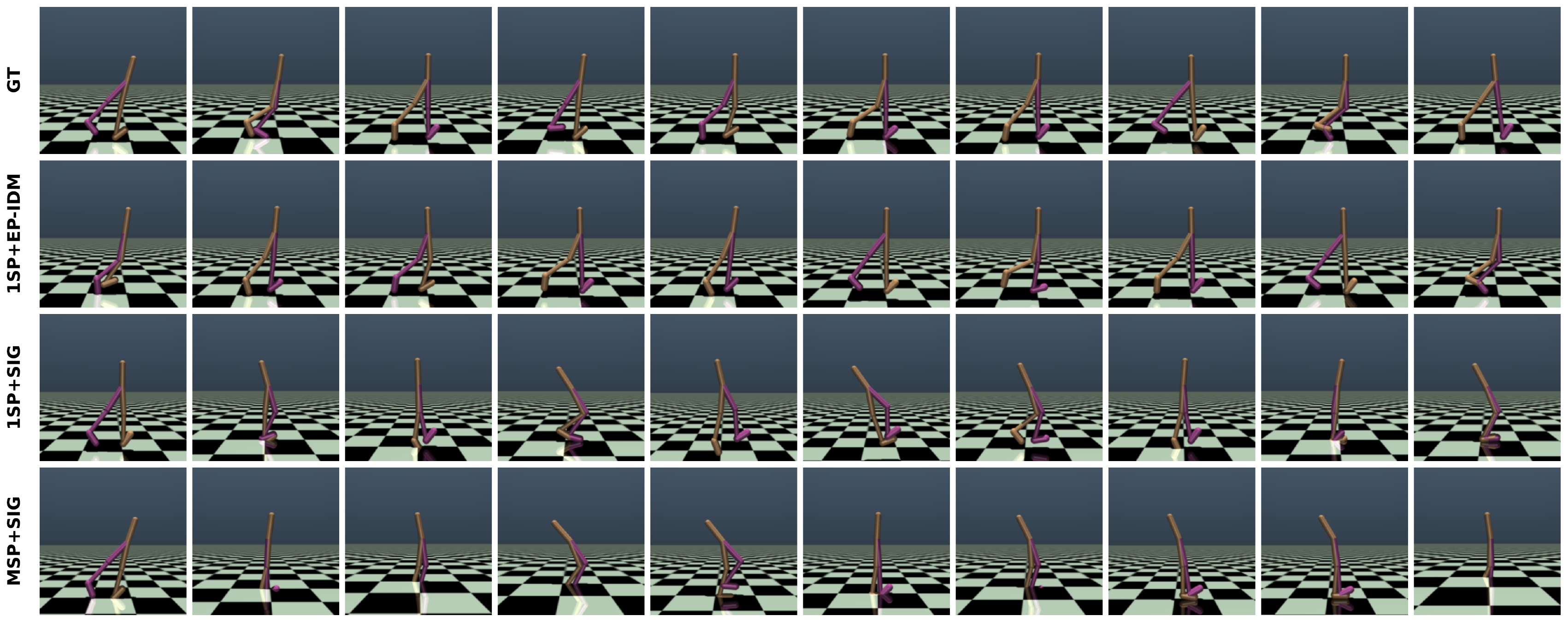}

\caption{\small Trajectory frames for \textbf{Walker2D}. Each row shows ten frames uniformly sampled from a $500$-step closed-loop rollout under the SAC policy \citep{haarnoja2018soft}, decoded from the trained model's latent state via an MLP decoder. } 
\label{fig:walker2d_limitcycle}
\end{figure}

Figure ~\ref{fig:walker2d_limitcycle} complements Fig.~\ref{fig:limit_cycle_walker2d} where we visualize the decoded states directly in pixel space. GT dynamics (top row) produce a periodic bipedal gait throughout
the rollout. 1SP+EP-IDM (second row) also keeps coherent walking motion on the episode, confirming that the model captures the structure of the walking limit cycle. 1SP+SIG (third row) and MSP+SIG (bottom row) both fail to keep a physically
plausible locomotion: the decoded states are erratic and unnatural over the course of the rollout, which is not consistent with a periodic bipedal walking gait as also depicted in Fig.~\ref{fig:limit_cycle_walker2d}.

\begin{table}[t]
\centering
\caption{\small\textbf{Walker2D} locomotion under latent iCEM \citep{pinneri2021sample}.
All metrics are averaged over ten trials. Final height is the torso height (m) at the end of the episode.}
\label{tab:walker2d-icem}
\small
\begin{tabular}{lccc}
\toprule
Model & Avg Velocity (m/s) & Displacement (m) & Final Height (m) \\
\midrule
Ground truth (GT) & \textbf{3.61} & \textbf{14.43} & \textbf{1.21} \\
\midrule
1SP + SIG    & 0.08 & 0.06 & 0.98 \\
MSP + SIG    & 0.46 & 0.66 & 0.85 \\
\midrule
1SP + EP-IDM & 2.91 & 9.15 & 0.92 \\
\bottomrule
\end{tabular}
\end{table}

\begin{figure}[t]
\centering
\includegraphics[width=1\textwidth]{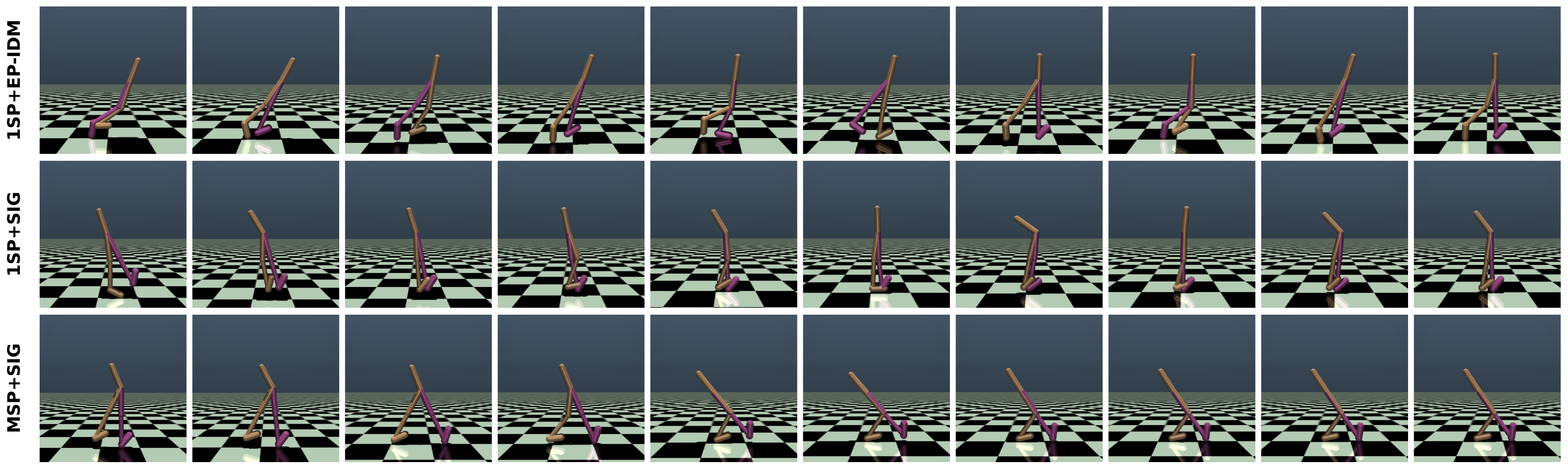}
\caption{\small  Planning trajectory frames for \textbf{Walker2D}. Each row shows ten frames uniformly sampled from a $500$-step closed-loop rollout under latent iCEM planning for the trained models. We select the best trial, with respect to the forward displacement, across the ten trials for the trained models.} 
\label{fig:walker2d_planning}
\end{figure}

Table~\ref{tab:walker2d-icem} and Fig.~\ref{fig:walker2d_planning} evaluate the closed-loop control performance of the learned models under the latent iCEM \citep{pinneri2021sample} across ten trials. The ground-truth planner achieves $3.61$ m/s and $14.43$ m of forward displacement. Among the learned models, 1SP+EP-IDM reaches $2.91$ m/s and $9.15$ m, substantially closer to the ground truth behavior than the models trained with SIGReg, which produce near-zero forward displacement. The planning trajectories in Fig.~\ref{fig:walker2d_planning} confirm that, with 1SP+EP-IDM, we are able to keep a coherent bipedal gait throughout the episode, while 1SP+SIG and MSP+SIG cannot keep such walking gait.

\subsection{Diagnostics}

\begin{figure}[t]
\centering
\includegraphics[width=1\textwidth]{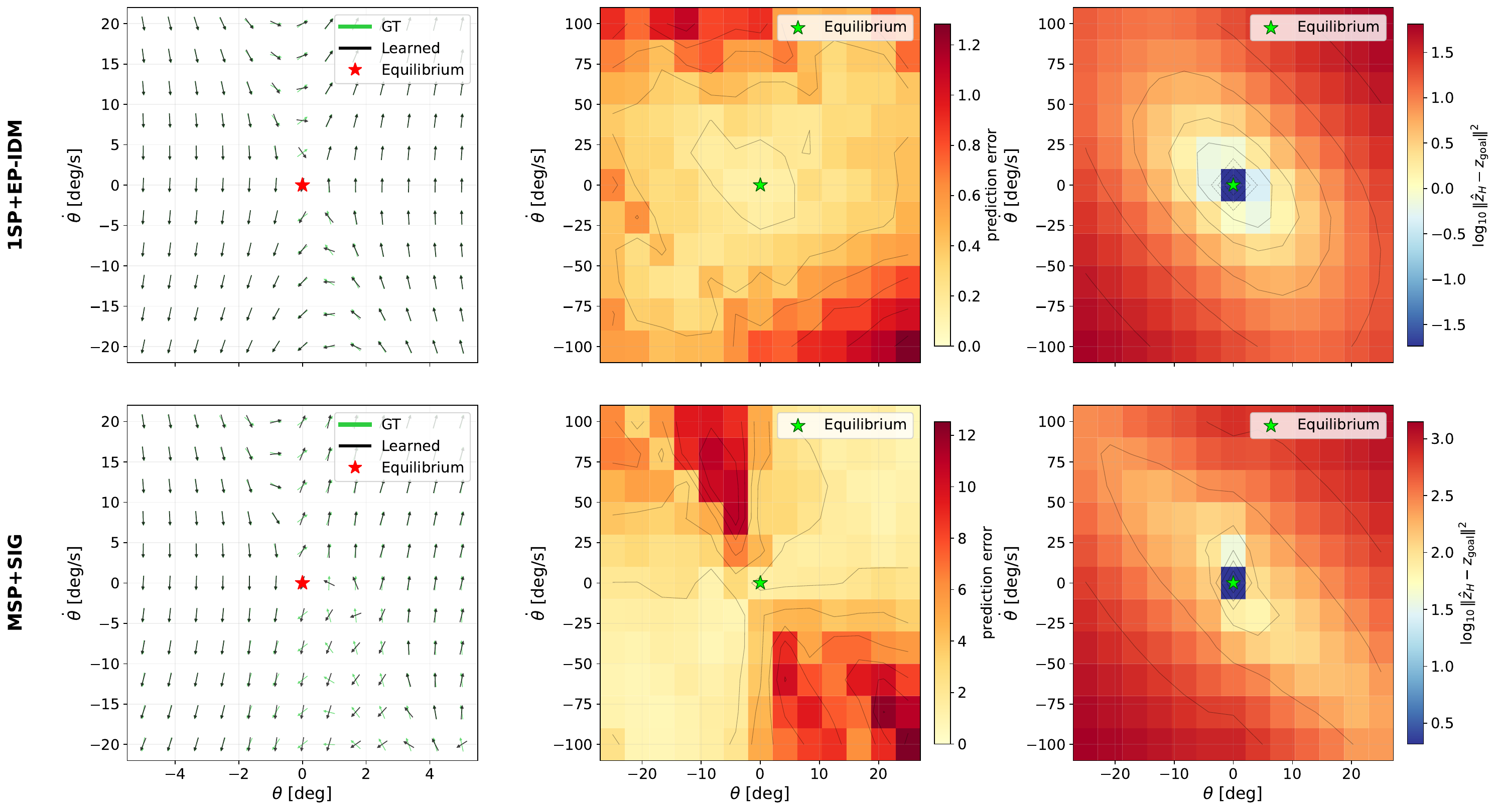}
\caption{\small\textbf{CartPole} diagnostics.
    \textbf{Left:} Phase portraits of ground-truth and learned vector fields.
    \textbf{Center:} $H=3$ open-loop prediction error across
    $(\theta,\dot{\theta})$.
    \textbf{Right:} Zero-action planning cost.}
\label{fig:landscape_MSP_SIG}
\end{figure}

\begin{table}[t]
\centering
\begin{threeparttable}
\caption{\small \textbf{PointMaze} navigation success rate (\%). }
\label{tab:pointmaze_context_one}
\setlength{\tabcolsep}{5.5pt}
\begin{tabular}{@{}llccc@{}}
\toprule
Encoder & Objective & CEM (SR) & LQR (SR) & GBP (SR) \\
\midrule
\multicolumn{2}{l}{GT}
    & \textbf{100\%}
    & \textbf{80\%}
    & 90.0\%\tnote{*} \\
\midrule
From scratch
    & 1SP + IDM
    & \textbf{100\%} & 70\% & \textbf{100\%} \\
    & MSP + MS-IDM
    & \textbf{100\%} & \textbf{80\%} & 90.0\% \\
    & MSP + EP-IDM
    & 90.0\% & \textbf{80\%} & 90.0\% \\
    & 1SP + EP-IDM
    & \textbf{100\%} & \textbf{80\%} & 80.0\% \\
    & MSP + SIG
    & 90.0\% & 60\% & \textbf{100\%} \\
    & 1SP + IDM + MS-IDM
    & 90.0\% & \textbf{80\%} & 90.0\% \\
\midrule
DINOv2
    & MSP
    & 50.0\% & 20\% & 80\% \\
\midrule
iBOT
    & MSP
    & \textbf{100\%} & \textbf{80\%} & 80\% \\
\midrule
\multicolumn{2}{l}{DINO-WM}
    & \textbf{80\%} & \textbf{80\%} & 60\% \\
\bottomrule
\end{tabular}
\vspace{-0.05cm}
\begin{tablenotes}[flushleft]
\footnotesize
\item[*] Because MuJoCo is non-differentiable, we estimate gradients with SPSA using two rollouts per optimization step~\citep{spall1998overview}.
\end{tablenotes}
\end{threeparttable}
\end{table}

Figure~\ref{fig:landscape_MSP_SIG} compares the local dynamics learned by 1SP+EP-IDM and MSP+SIG. 1SP+EP-IDM reproduces the ground-truth vector field near the upright equilibrium with low three-step prediction error, whereas MSP+SIG captures only parts of the field and exhibits larger, anisotropic error. Both models produce a latent goal cost with a minimum near the equilibrium, which CEM and GBP can exploit through finite-horizon optimization, while LQR requires an accurate local linearization.

Figure~\ref{fig:stability} shows the empirical region of attraction (ROA) and Lyapunov difference under the latent LQR controllers. The controller obtained from 1SP+EP-IDM stabilizes almost every evaluated initial condition and yields $\Delta V=V(x_{t+1})-V(x_t)<0$ around the equilibrium, providing an empirical certificate of local closed-loop stability. In contrast, the MSP+SIG controller stabilizes only a small neighborhood around the equilibrium.

Although MSP+SIG provides a terminal cost suitable for nonlinear planning (Figure~\ref{fig:landscape_MSP_SIG}), its local representation and predictor do not yield a latent stabilizing controller. Action reconstruction instead learns local dynamics from which a stabilizing controller can be designed, consistent with Section~\ref{sec:idm}.

\begin{figure}[t]
\centering
\includegraphics[width=1\textwidth]{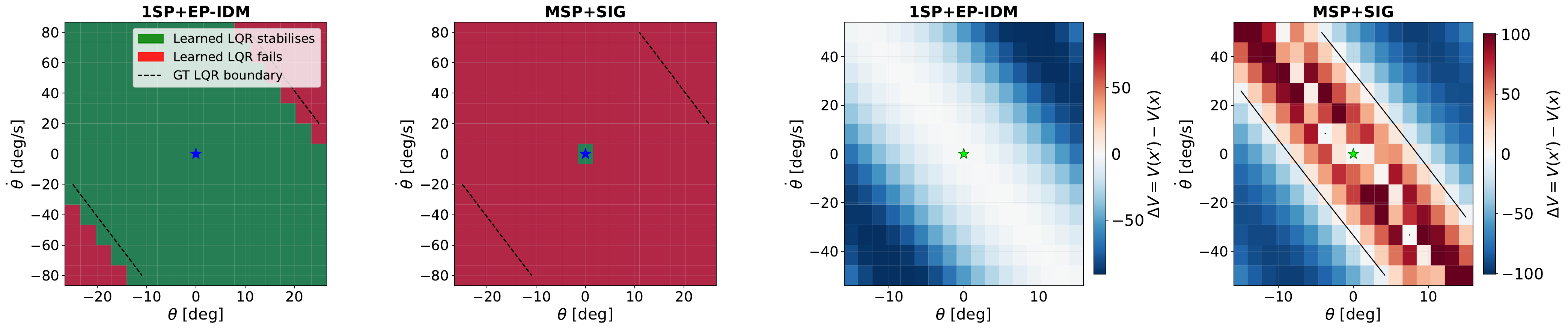}
\caption{\small Local stability on \textbf{CartPole}.
\textbf{First two panels:} Empirical region of attraction of the learned LQR controller when deployed in the ground-truth system. The dashed line
indicates the ground-truth LQR boundary.
\textbf{Last two panels:} Lyapunov decrease $\Delta V = V(x') - V(x)$ under the latent LQR controller.
}
\label{fig:stability}
\end{figure}

We evaluate whether action reconstruction remains effective for systems with no unstable modes using PointMaze (Table~\ref{tab:pointmaze_context_one}). 
We note that all action-reconstruction variants trained from scratch reach at least $90\%$ CEM success and at least $70\%$ LQR success.

\section{Conclusion and Future Work}

In this work, we demonstrated a fundamental limitation of next-step prediction combined with standard anti-collapse regularization (e.g., SIGReg) in JEPAs: these objectives do not necessarily preserve the control-theoretic information required to stabilize an open-loop unstable system. They can be minimized despite the encoder discarding controllable unstable modes. Therefore, the latent representation may not collapse on the training distribution, yet be insufficient for designing a stabilizing latent feedback controller. 

To overcome this limitation, we proposed endpoint inverse dynamics (EP-IDM), which reconstructs the action sequence from the initial and final latent representations. For linear systems, we proved that exact reconstruction preserves the finite-horizon reachable subspace and prevents the collapse of reachable unstable modes, while the dominant eigenspace of the controllability Gramian converges to the controllable unstable subspace as the prediction horizon grows. Empirically, EP-IDM increases latent-LQR success from $0\%$ to $100\%$ on nonlinear CartPole, preserves the Walker2D limit cycle and supports successful planning with $9.15$ m of average forward displacement over $500$ time steps, and achieves strong planning performance on PointMaze. 

Future work could combine our control-aware representation with latent-geometry objectives for efficient and robust planning, including bisimulation~\citep{toso2026learning} and temporal straightening~\citep{wang2026temporal}.

\section{Acknowledgments} 
Leonardo F. Toso and James Anderson thank Paul Lutkus and Professor Stephen Tu for insightful discussions during the early stages of this work. The authors also thank Professor Jean Ponce for his detailed comments and feedback on an earlier version of this manuscript. Leonardo F. Toso is funded by the Center for AI and Responsible Financial Innovation (CAIRFI) Fellowship and the Columbia Presidential Fellowship.  James Anderson is partially funded by NSF grants EECS 2144634 and CNS 2535097 and the Center of AI Technology (CAIT) in collaboration with Amazon. This work was also supported in part by AFOSR under grant FA95502310139.

\newpage

\bibliographystyle{apalike}
\bibliography{references}

\newpage 
\appendix
\parttoc 
\section{Appendix}

This appendix is organized as follows.
Appendix~\ref{appendix:additional_experimental_validation} supplements
Section~\ref{sec:numerics} with additional experimental results and
diagnostics. Appendix~\ref{appendix:pretrained} evaluates the trained world models
using frozen pretrained encoders.
Appendix~\ref{appendix:experimental_details} provides the model architectures, datasets, optimization parameters, controller implementations, and additional action-reconstruction objectives used in our experiments. Appendix~\ref{appendix:ablations} presents ablations on the action reconstruction, action-context length, and observation inputs. Appendix~\ref{appendix:example_details}
provides the construction and physical parameters of the linear examples from Section~\ref{sec:stabilization_conditions}. Finally,
Appendix~\ref{appendix:supporting_lemmas} states the supporting technical results, and Appendices~\ref{appendix:proof_conditions_for_stabilization}--%
\ref{appendix:proof_unstable_subspace_recovery} provide the proofs of
our theoretical guarantees.

\subsection{Additional Experimental Validation (Extending Section \ref{sec:numerics})}
\label{appendix:additional_experimental_validation}

We provide additional evidence and illustration  supporting the performance results in Section~\ref{sec:numerics}. We first examine representative CartPole and PointMaze trajectories and relate their closed-loop performance to the geometry and local dynamics learned by each model. We then evaluate long-horizon prediction along feedback trajectories to distinguish accurate rollout prediction from the preservation of the unstable directions required for stabilization.

\subsubsection{CartPole}
\label{appendix:CartPole_additional}

In Fig.~\ref{fig:CartPole_trajectories} we complement the success rates in Section~\ref{sec:numerics} with the sequences of frames in a randomly selected trial trajectory under the latent LQR controllers. The models trained with EP-IDM, both from scratch and with the frozen iBOT encoder, keep the pole upright throughout the rollout. In contrast, the prediction combined with SIGReg models, frozen iBOT without EP-IDM, and DINO-WM fail to stabilize the system. 

\begin{figure}[t]
\centering
\includegraphics[width=1\textwidth]{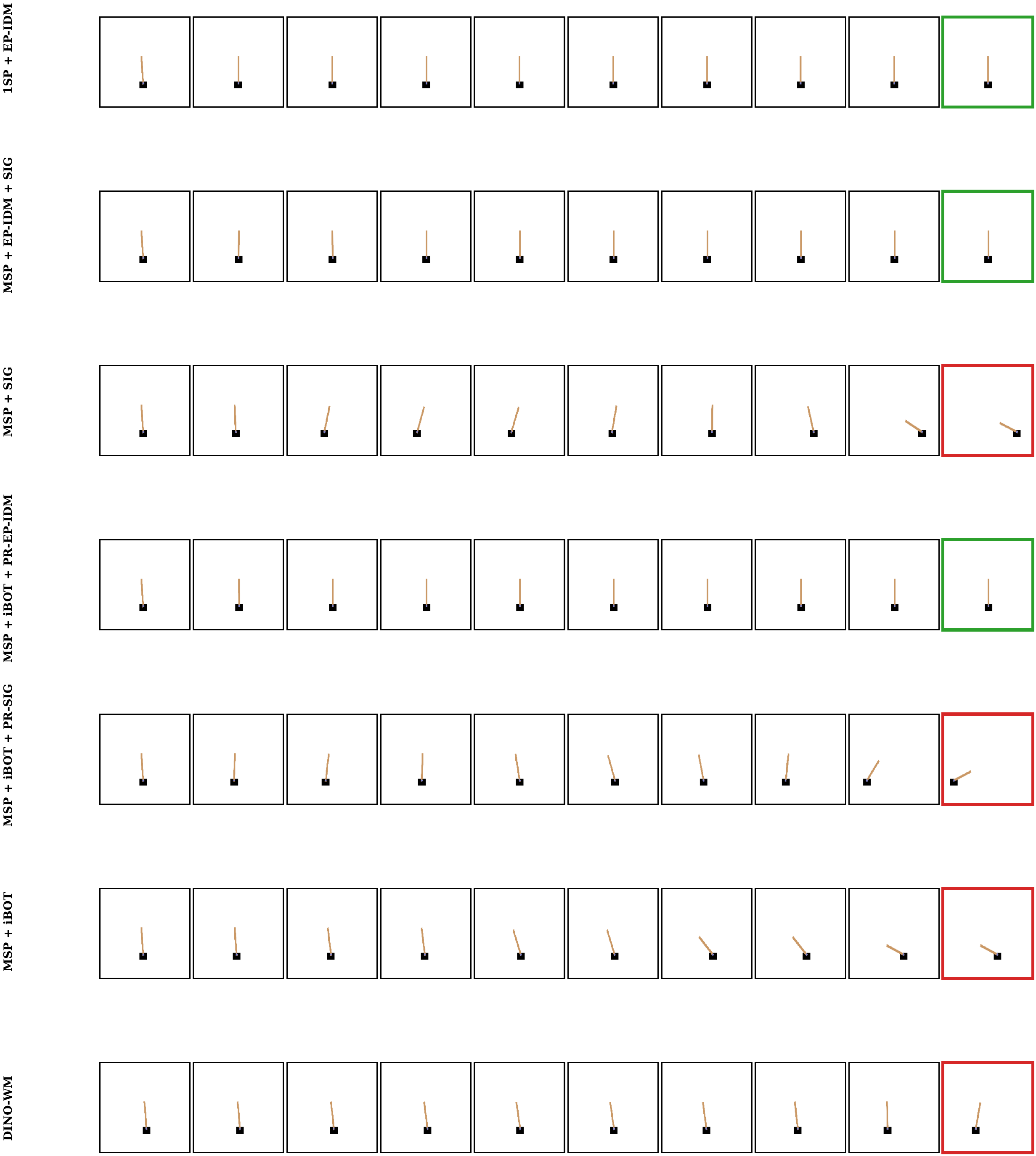}
\vspace{-0.5cm}
\caption{\small \textbf{CartPole} trajectory visualization for the trained models under the latent LQR controller.} \vspace{-0.7cm}
\label{fig:CartPole_trajectories}
\end{figure}

\begin{figure}[t]
\centering
\includegraphics[width=1\textwidth]{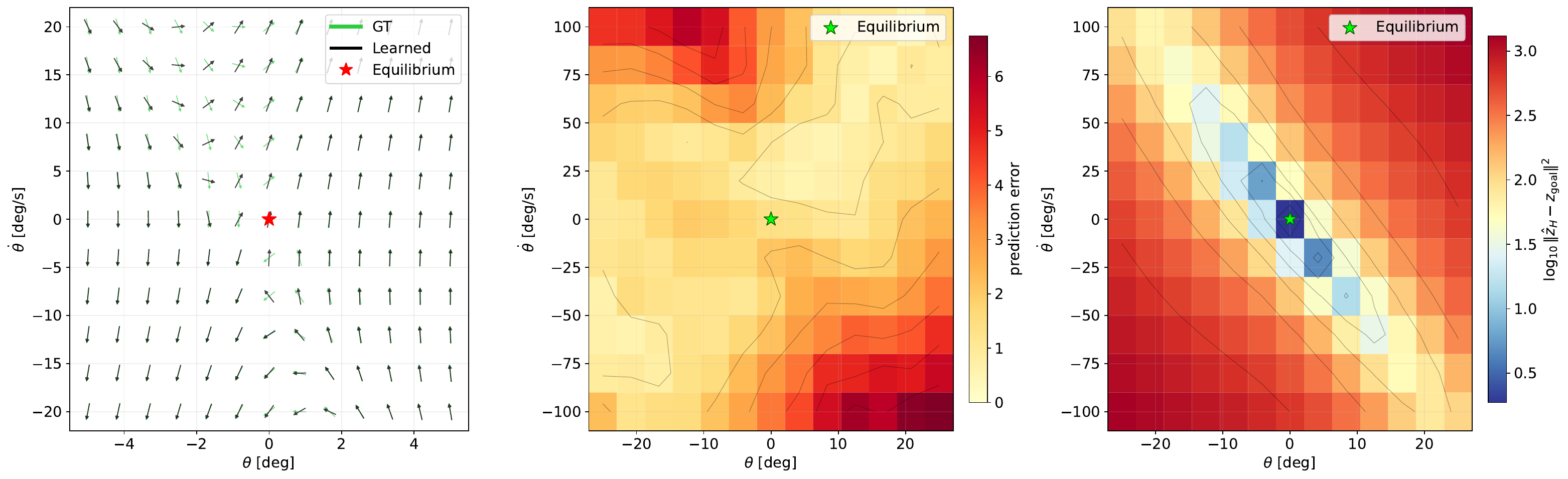}
\vspace{-0.5cm}
\caption{\small
MSP+EP-IDM+SIG model on \textbf{CartPole}.
\textbf{Left:}~Phase portrait of ground-truth vs. learned vector fields. \textbf{Center:}~$H=3$ open-loop prediction error. \textbf{Right:}~Zero-action planning cost.
} 
\label{fig:landscape_MSP_EP-IDM_SIG}
\end{figure}

\begin{figure}[t]
\centering
\includegraphics[width=1\textwidth]{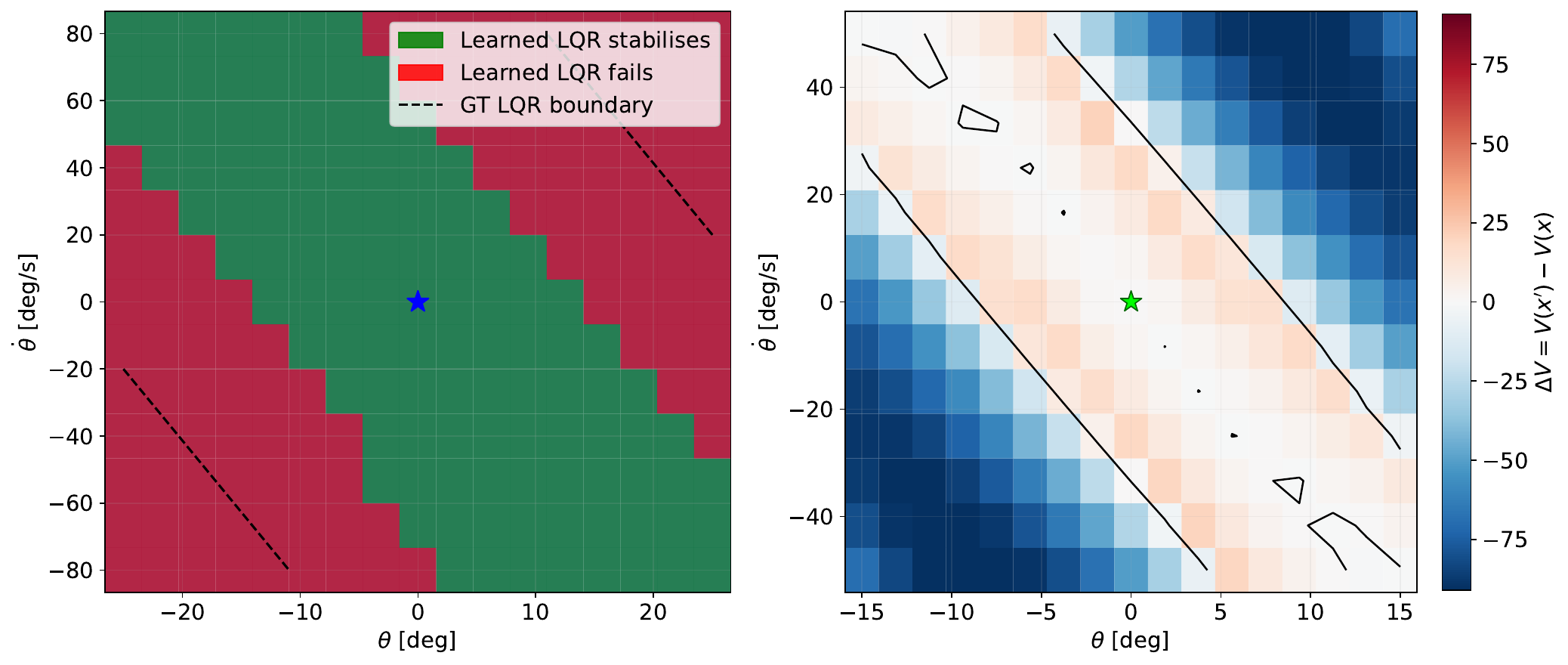}
\vspace{-0.5cm}
\caption{%
\small Local stability of MSP+EP-IDM+SIG on \textbf{CartPole}. \textbf{Left:} Empirical region of attraction (ROA) of the learned LQR policy in the GT environment. Dashed line: GT LQR boundary. \textbf{Right:} Lyapunov decrease $\Delta V = V(x') - V(x)$ under the learned policy using the GT DARE solution.}
\label{fig:stability_MSP_EP_IDM_SIG}
\end{figure}

\begin{figure}[t]
\centering
\includegraphics[width=1\textwidth]{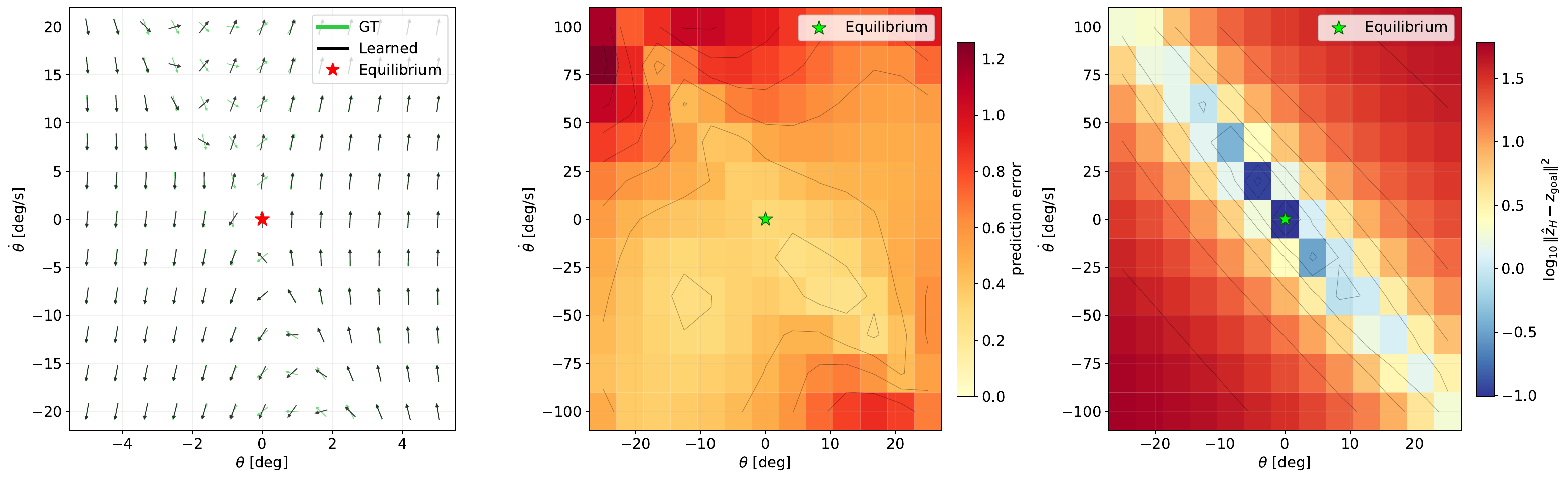}
\vspace{-0.5cm}
\caption{%
\small MSP+IBOT+PR-EP-IDM model on \textbf{CartPole}. Panel descriptions are in Fig.~ \ref{fig:landscape_MSP_EP-IDM_SIG}.
} 
\label{fig:landscape_MSP_IBOT_PR_EP_IDM}
\end{figure}

\begin{figure}[t]
\centering
\includegraphics[width=1\textwidth]{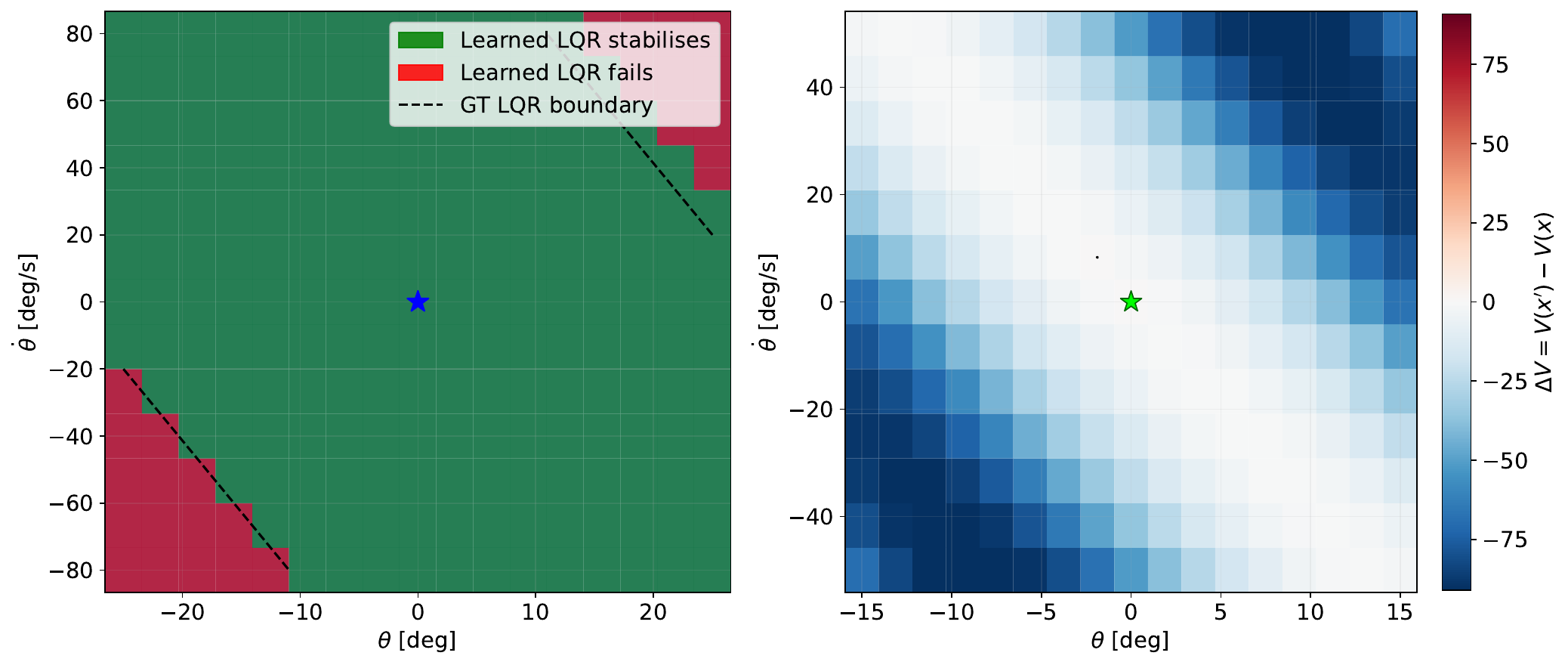}
\vspace{-0.5cm}
\caption{\small Local stability of MSP+IBOT+PR-EP-IDM on \textbf{CartPole}. Panel descriptions are the same as those in Fig.~ \ref{fig:stability_MSP_EP_IDM_SIG}.}
\label{fig:stability_MSP_IBOT_PR_EP_IDM}
\end{figure}

\begin{figure}[t]
\centering
\includegraphics[width=1\textwidth]{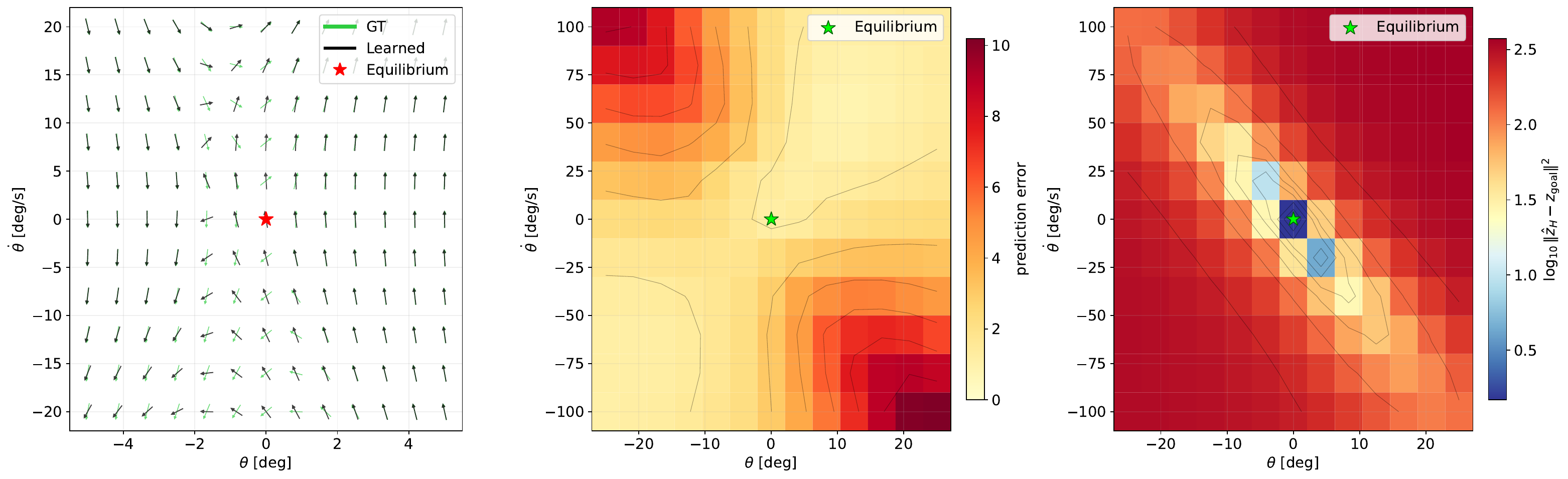}
\vspace{-0.5cm}
\caption{%
\small MSP+IBOT+PR-SIG model on \textbf{CartPole}.
Panel descriptions are the same as those in Fig.~ \ref{fig:landscape_MSP_EP-IDM_SIG}.
} 
\label{fig:landscape_MSP_IBOT_PR_SIG}
\end{figure}

\begin{figure}[t]
\centering
\includegraphics[width=1\textwidth]{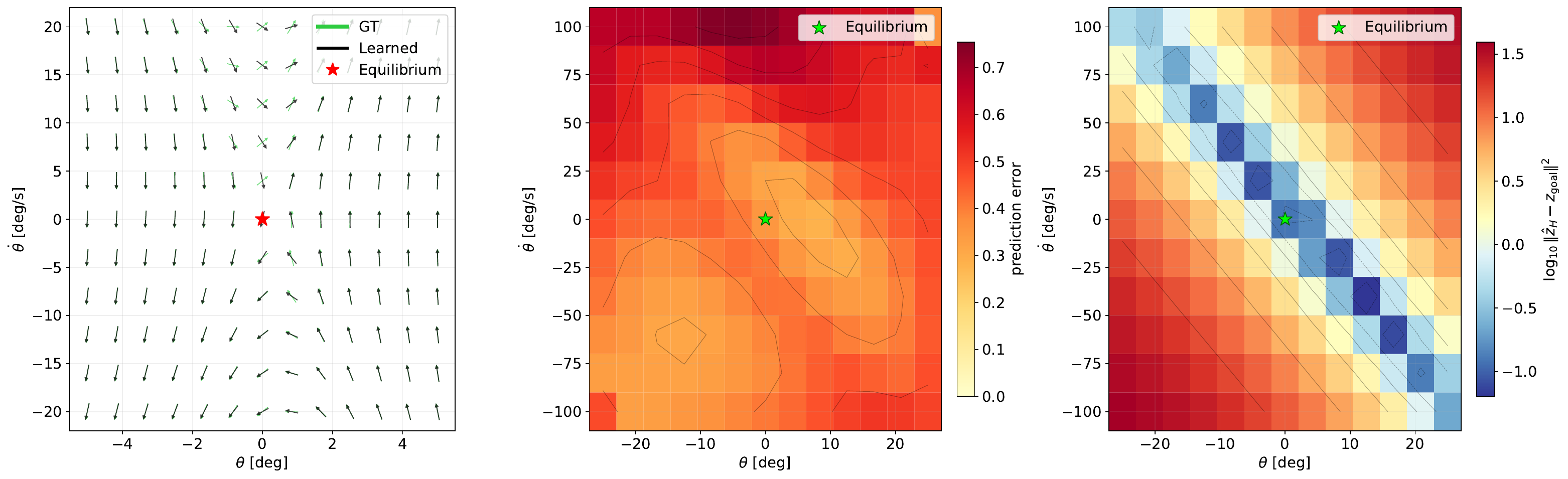}
\vspace{-0.5cm}
\caption{%
MSP+IBOT model on \textbf{CartPole}. Panel descriptions are the same as those in Fig.~ \ref{fig:landscape_MSP_EP-IDM_SIG}.
} 
\label{fig:landscape_MSP_IBOT}
\end{figure}

\begin{figure}[t]
\centering
\includegraphics[width=1\textwidth]{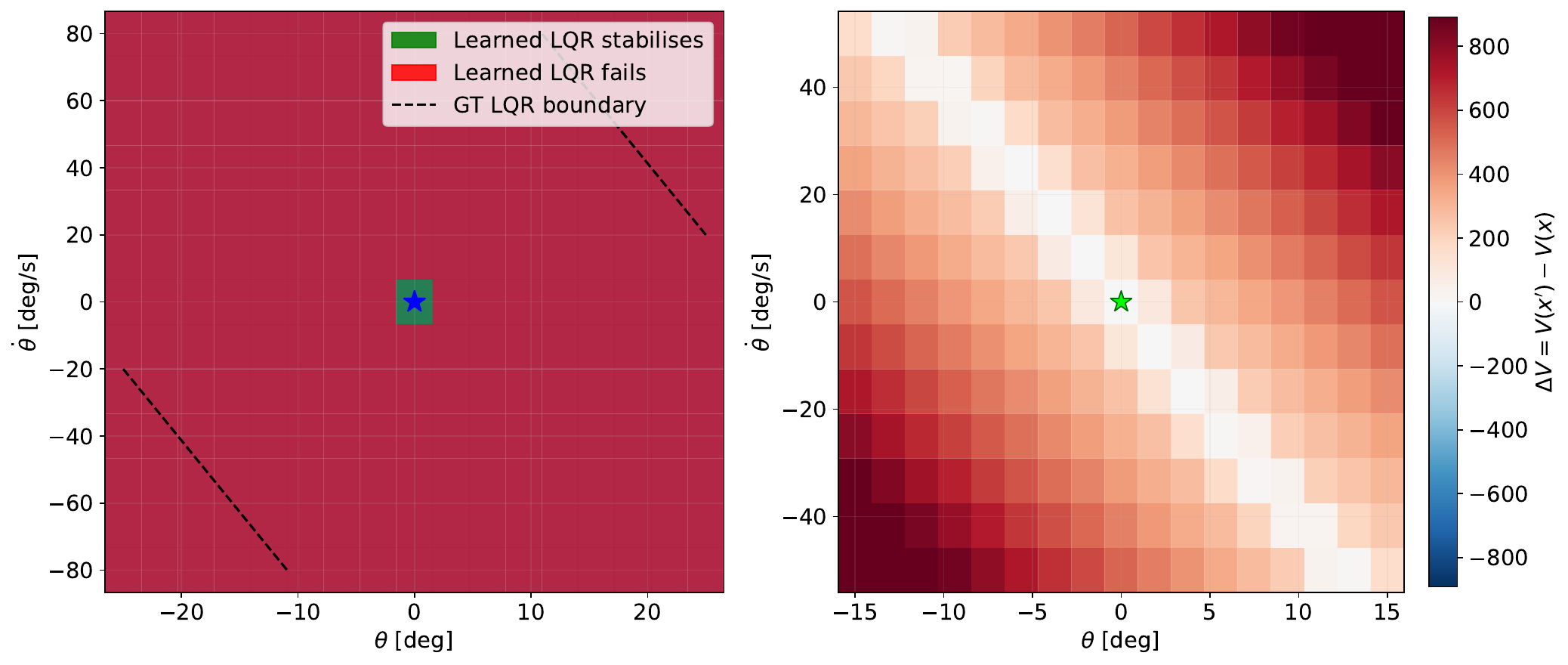}
\vspace{-0.5cm}
\caption{%
\small Local stability of MSP+IBOT on \textbf{CartPole}. \textbf{Left:} Empirical region of attraction (ROA) of the learned LQR policy in the GT environment. Panel descriptions are as those in Fig.~ \ref{fig:stability_MSP_EP_IDM_SIG}.}
\label{fig:stability_MSP_IBOT}
\end{figure}

\begin{figure}[t]
\centering
\includegraphics[width=1\textwidth]{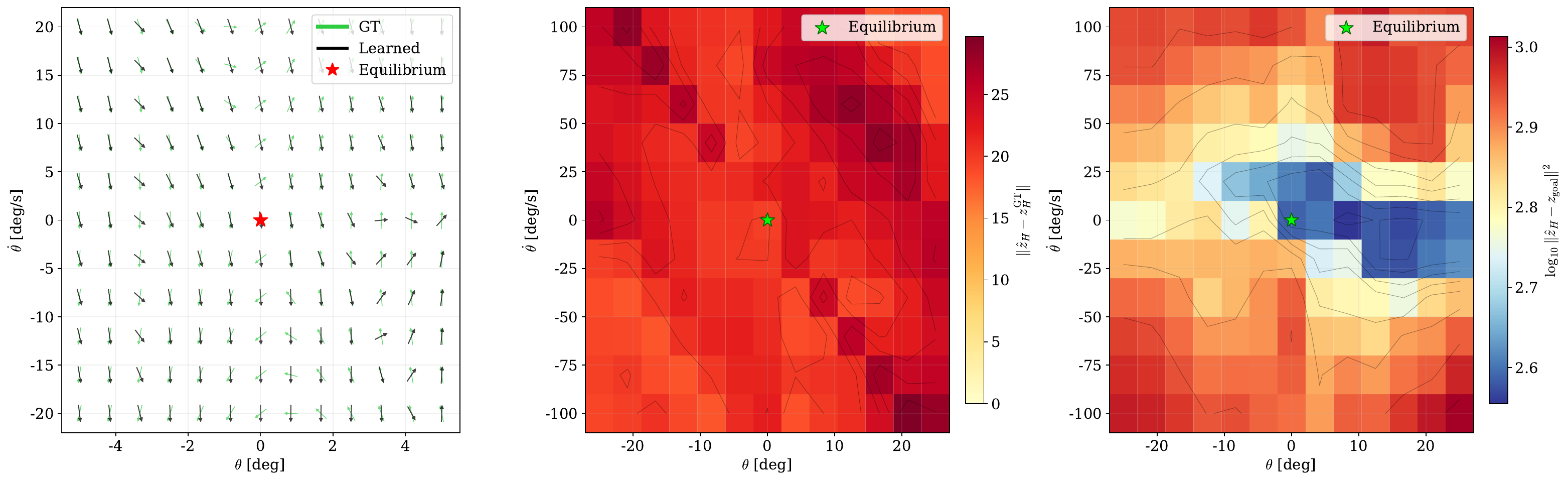}
\vspace{-0.5cm}
\caption{%
\small DINO-WM model on \textbf{CartPole}. Panel descriptions are the same as those in Fig.~ \ref{fig:landscape_MSP_EP-IDM_SIG}.
} 
\label{fig:landscape_DINOWM}
\end{figure}

Moreover, Figs.~\ref{fig:landscape_MSP_EP-IDM_SIG}--\ref{fig:stability_MSP_EP_IDM_SIG} show that the model trained from scratch with MSP+EP-IDM+SIG recovers the local vector field around the upright equilibrium, yields low prediction error around the equilibrium, and leads to a broad region of attraction with the expected decrease in the Lyapunov certificate. Figs.~\ref{fig:landscape_MSP_IBOT_PR_EP_IDM}--\ref{fig:stability_MSP_IBOT_PR_EP_IDM} depict similar performance when EP-IDM is applied through a learned projector on top of the pretrained iBOT. By comparison, the iBOT models without EP-IDM and DINO-WM have less accurate local dynamics, even when their terminal latent costs is minimized near the goal (see Figs.~\ref{fig:landscape_MSP_IBOT_PR_SIG}--\ref{fig:landscape_DINOWM}). 

\begin{table}[t]
\centering
\caption{\small CartPole CEM diagnostic metrics (action context 1).
$1/L^2$: inverse squared mean stabilization length.
FE: final error $\|x_T\|$.
PR denotes a learned projector on top of the frozen encoder.}
\label{tab:CartPole-cem-diagnostics}
\begin{tabular}{llcc}
\toprule
Encoder & Objective & CEM ($1/L^2$) & CEM (FE) \\
\midrule
\multicolumn{2}{l}{GT} & 0.0003 & 0.30202 \\
\midrule
From scratch
       & 1SP + SIG          & 0.0003 & 0.19956 \\
       & 1SP + IDM + MS-IDM & 0.0003 & 0.43608 \\
       & 1SP + EP-IDM       & 0.0003 & 0.13683 \\
       & MSP + SIG          & 0.0003 & 0.07667 \\
       & MSP + EP-IDM + SIG & 0.0003 & 0.11985 \\
\midrule
DINOv2 & 1SP             & 0.0038 & 4.60613 \\
       & MSP             & 0.0031 & 5.09216 \\
       & MSP + PR-SIG    & 0.0003 & 0.19056 \\
       & MSP + PR-IDM    & 0.0331 & 4.68120 \\
       & MSP + PR-EP-IDM & 0.0004 & 1.57507 \\
\midrule
iBOT   & 1SP             & 0.0003 & 0.09306 \\
       & 1SP + PR-EP-IDM & 0.0004 & 0.47275 \\
       & MSP             & 0.0073 & 3.49471 \\
       & MSP + PR-SIG    & 0.0004 & 1.42928 \\
       & MSP + PR-IDM    & 0.0567 & 5.11192 \\
       & MSP + PR-EP-IDM & 0.0003 & 0.62596 \\
\midrule
\multicolumn{2}{l}{DINO-WM} & 0.0005 & 0.9777 \\
\bottomrule
\end{tabular}
\end{table}

Table~\ref{tab:CartPole-cem-diagnostics} reports the diagnostics for the CEM experiments. The inverse squared stabilization length highlights models that lose stability early from those that remain close to the goal over the planning horizon, while the final error measures terminal accuracy. These results emphasize that CEM can compensate for imperfect local dynamics through repeated nonlinear replanning, whereas LQR directly depends on the local model around the equilibrium.

\subsubsection{PointMaze}
\label{appendix:pointmaze_additional}

PointMaze is open-loop stable and therefore does not require recovery of an unstable subspace. We, nevertheless, check whether IDM retains task-relevant directions without degrading nonlinear planning for open-loop stable tasks. Fig.~\ref{fig:pointmaze_trajectories} shows that both EP-IDM models and MSP+SIG reach the goal under CEM, whereas DINO-WM fails on the displayed rollout.

\begin{figure}[t]
\centering
\includegraphics[width=1\textwidth]{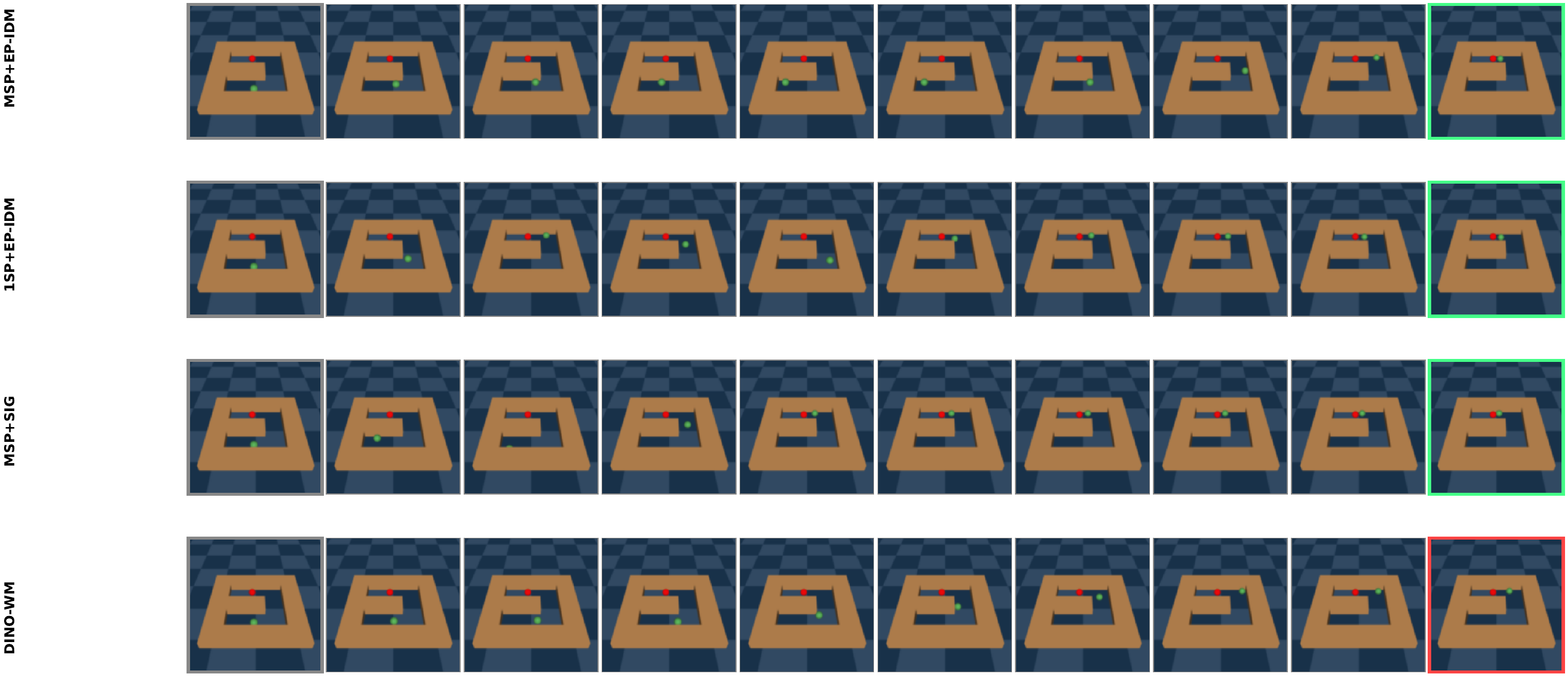}
\vspace{-0.5cm}
\caption{\small \textbf{PointMaze} visualization for our trained models under CEM planning.} 
\label{fig:pointmaze_trajectories}
\end{figure}

\begin{figure}[t]
\centering
\includegraphics[width=1\textwidth]{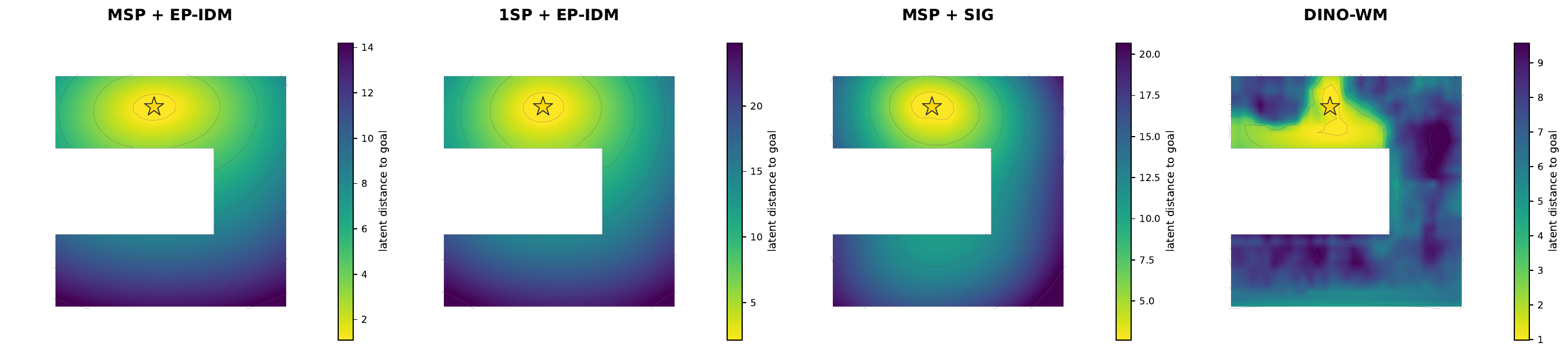}
\vspace{-0.5cm}
\caption{\small Latent goal-distance fields $\|z - z_{\mathrm{g}}\|$ across the \textbf{PointMaze} U-maze for four trained world models.} 
\label{fig:pointmaze_landscape}
\end{figure}

The latent goal-distance fields in Fig.~\ref{fig:pointmaze_landscape} explain this behavior. The models trained from scratch produce smooth fields that decrease toward the goal and broadly respect the maze geometry, providing CEM with an informative terminal cost. The DINO-WM field is less aligned with the navigation geometry, which can direct the planner toward states that appear close in latent space but are not connected by a feasible path. Importantly, on this open-loop stable task, the goal is to shape the representation to obtain a smooth planning latent geometry rather than the recovery of unstable modes.

\subsubsection{Closed-Loop Prediction Error}
\label{appendix:closed_loop_prediction_error}

\begin{wrapfigure}{r}{0.65\textwidth}
\centering
\vspace{-0.4cm}
\includegraphics[width=0.55\textwidth]{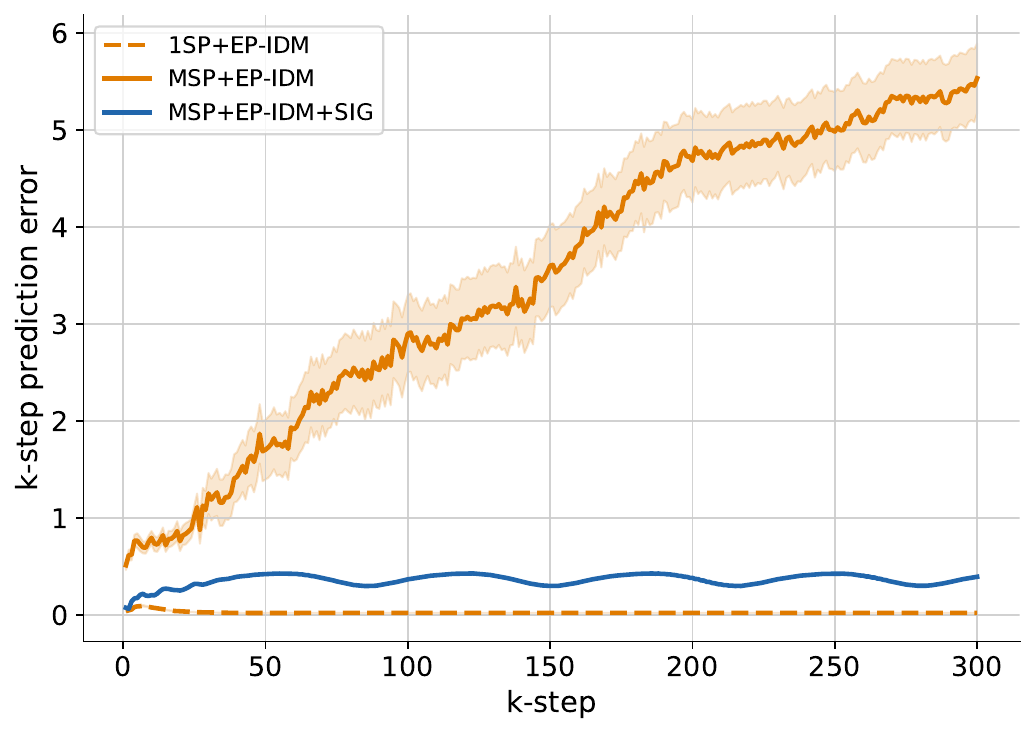}
\caption{\small $k$-step closed-loop prediction error along trajectories generated by the ground-truth LQR controller. Curves show the physical-state error between the ground-truth trajectory and the trajectory decoded from each latent rollout.}
\label{fig:k-step-error}
\vspace{-0.3cm}
\end{wrapfigure}

Figure~\ref{fig:k-step-error} compares prediction errors over long closed-loop rollouts. The 1SP+EP-IDM model is close to the ground-truth trajectory over the entire horizon. The error of MSP+EP-IDM grows steadily, whereas combining multi-step prediction and EP-IDM with SIGReg keeps the error bounded. Hence, multi-step training alone is not able to alleviate the rollout drift in an unstable system, i.e., small model errors can still be amplified by the dynamics. On the other hand, the representation regularization SIGReg improves the conditioning of the representation. This is complementary to the stability results in Section~\ref{sec:numerics}. The low rollout error in the landscape diagnosis supports control, but does not by itself guarantee that the representation preserves the unstable directions required by LQR.

\subsection{Pre-trained encoders}\label{appendix:pretrained}

We evaluate whether IDM can preserve control-relevant information when the visual encoder is frozen and only a lightweight projector (PR) and the latent dynamics are trained.

Table~\ref{tab:pre-trained-sigreg-idm} shows that the benefit of EP-IDM depends on the information and local geometry already present in the frozen latents. In particular, with iBOT, MSP+PR-EP-IDM achieves $100\%$ LQR and CEM success. In contrast, none of the DINOv2 configurations yields a stabilizing LQR controller, although MSP+PR-SIG supports CEM. Hence, IDM can shape the projector to preserve control-relevant information captured by the frozen backbone, but cannot recover information that the backbone has discarded.

\begin{table}[t]
\centering
\caption{\small\textbf{CartPole} control performance with frozen pretrained
encoders. SR: success rate. MFS: mean fraction of steps within the success
threshold. PR denotes a learned projector. DINO-WM \citep{zhou2024dino}
is an external baseline.}
\label{tab:pre-trained-sigreg-idm}
\setlength{\tabcolsep}{3.5pt}
\begin{tabular}{@{}llcccc@{}}
\toprule
Encoder & Objective & LQR (SR) & LQR (MFS) & CEM (SR) & GBP (SR) \\
\midrule
DINOv2
    & 1SP             & 0\% & 0.301 & 0\% & 0\% \\
    & MSP             & 0\% & 0.232 & 0\% & 0\% \\
    & 1SP + PR-EP-IDM & 0\% & 0.025 & 20\% & 0\% \\
    & MSP + PR-SIG    & 0\% & 0.460 & \textbf{100\%} & 0\% \\
    & MSP + PR-IDM    & 0\% & 0.455 & 0\% & 0\% \\
    & MSP + PR-EP-IDM & 0\% & 0.513 & 40\% & 10\% \\
\midrule
iBOT
    & 1SP             & 0\% & 0.249 & \textbf{100\%} & 80\% \\
    & MSP             & 0\% & 0.228 & 0\% & 0\% \\
    & 1SP + PR-EP-IDM & 0\% & 0.263 & \textbf{100\%} & 80\% \\
    & MSP + PR-SIG    & 50\% & 0.571 & 80\% & 80\% \\
    & MSP + PR-IDM    & 0\% & 0.405 & 0\% & 0\% \\
    & MSP + PR-EP-IDM & \textbf{100\%} & \textbf{0.998}
                      & \textbf{100\%} & 0\% \\
\midrule
\multicolumn{2}{l}{DINO-WM}
                      & 0\% & 0.019 & 40\% & 10\% \\
\bottomrule
\end{tabular}
\end{table}

Figure~\ref{fig:pretrained_LQR} depicts the closed-loop LQR trajectories. For every DINOv2-based model, the CartPole state norm crosses the stabilization threshold and grows without bound, except for MSP for which $\|x_t\|_2 \leq 10$. In contrast, the latent controller obtained with MSP+iBOT+PR-EP-IDM drives the state norm below the threshold and keeps it near the upright equilibrium for the full $300$-step rollout.

\begin{figure}[t]
\centering
\includegraphics[width=1\textwidth]{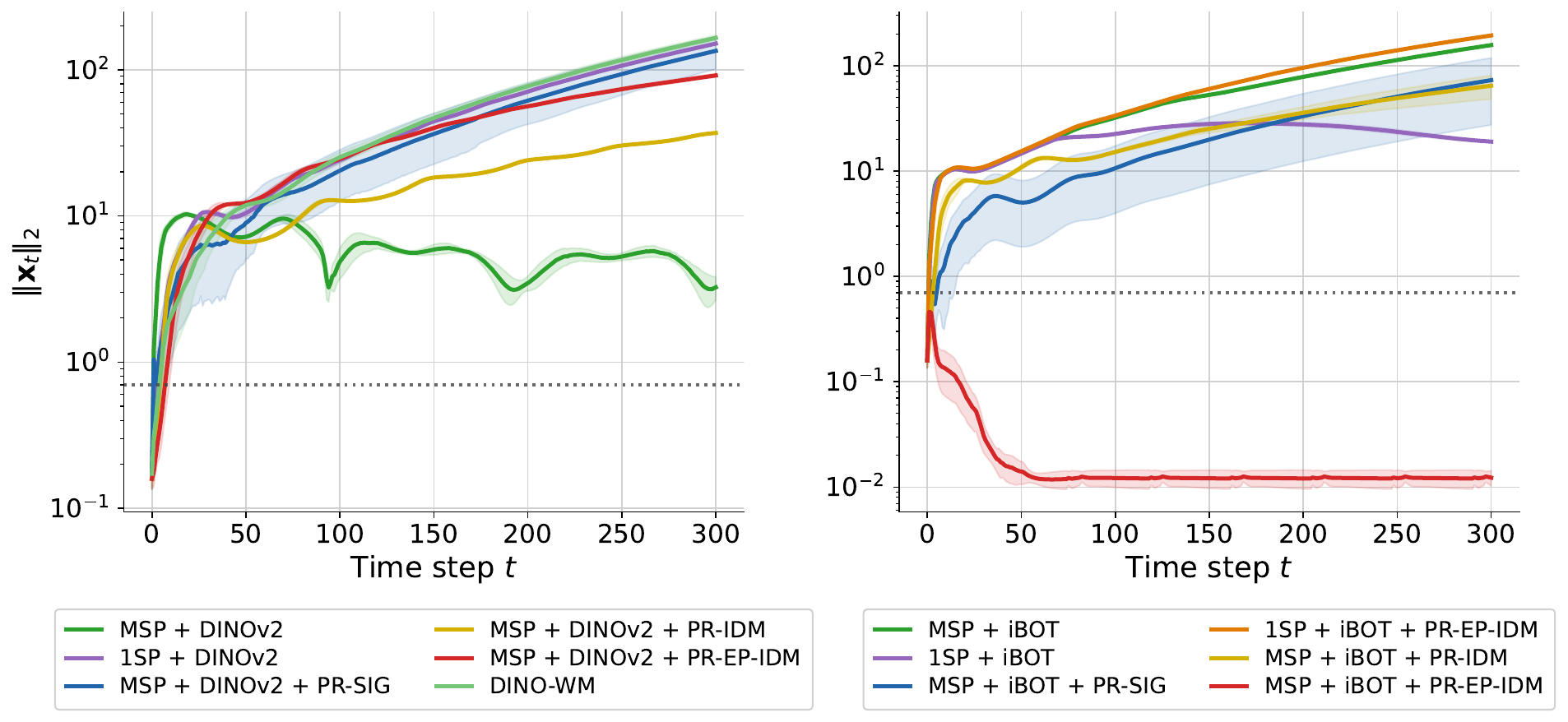}
\vspace{-0.6cm}
\caption{\small \textbf{CartPole} state norm $\|x_t\|_2$ under the learned latent LQR
controller over $300$ steps. \textbf{Left:} DINOv2. \textbf{Right:} iBOT.}
\label{fig:pretrained_LQR}
\end{figure}


\subsection{Additional Details on the Implementation of our Experiments}
\label{appendix:experimental_details}

In this section, we provide additional  implementation details needed to reproduce the nonlinear visual-control experiments from Section \ref{sec:numerics}. We first detail our world-model architecture, then specify the datasets and control parameters used for each task.

\subsubsection{Model Architecture}

We next describe the encoder, latent predictor, and action decoders used across our experiments.

\noindent \textbf{Visual encoder.} For models trained from scratch, each RGB observation is divided into non-overlapping patches and processed by a ViT-Tiny encoder with four transformer blocks, three attention heads, and embedding dimension $192$, similar to \cite{ivashkov2026sensorimotor}. The resulting class token is concatenated with the proprioceptive measurements and mapped to a $192$-dimensional latent state by a two-layer projection head. Unless stated otherwise, we encode the difference between consecutive frames together with the proprioceptive measurements to emphasize motion (i.e., velocity) of the physical system in the latent space.

For the frozen visual encoder, we use either the DINOv2 ViT-S/14 or iBOT ViT-S/16 class token. The backbone parameters are frozen and a trainable two-layer projector maps the concatenated visual and proprioceptive features to the common $192$-dimensional latent space. This construction follows the use of frozen visual features in DINO-WM \citep{zhou2024dino}.\vspace{0.2cm}

\noindent \textbf{Latent predictor and action decoders.} The latent predictor is an action-conditioned causal transformer with six blocks, $16$ attention heads, hidden dimension $192$, MLP dimension $2048$, and dropout $0.1$, also similar to the architecture adopted in \cite{ivashkov2026sensorimotor}. Actions are embedded by a two-layer MLP and modulate each transformer block through adaptive layer normalization. The prediction head is a two-layer MLP with hidden dimension $2048$ and output dimension $192$. 

The one-step and multi-step action decoders are MLPs with two hidden layers of width $256$ and ReLU nonlinearities. The endpoint decoder receives the initial and terminal latent states and uses the same hidden dimensions to reconstruct the complete length-$H$ action sequence. Then, all action-reconstruction variants differ only in the latent information provided to the decoder.

\begin{table}[H]
\centering
\caption{\small Task-specific observation and action dimensions. The frame skip is the number of simulator steps represented by one model transition.}
\label{tab:task_specific_architecture}
\begin{tabular}{lccccc}
\toprule
Task & Image size & Proprioception & Action & Frame skip & Training trajectories \\
\midrule
CartPole & $128\times128$ & $4$ & $1$ & $5$ & $787$ \\
Walker2D & $64\times64$ & $17$ & $6$ & $5$ & $3,200$ \\
PointMaze & $64\times64$ & $4$ & $2$ & $5$ & $1,600$ \\
\bottomrule
\end{tabular}
\end{table}

\noindent \textbf{Optimization.} We train all trainable modules jointly with AdamW. The learning rate is linearly warmed up during the first five epochs and then follows a cosine schedule. Gradients are clipped to unit norm. The coefficients of the active prediction, regularization, and action-reconstruction objectives are set to one. Table~\ref{tab:training_hyperparameters} reports the shared hyperparameters. All reported configurations use the same optimization budget.

\begin{table}[H]
\centering
\caption{\small World-model training hyperparameters.}
\label{tab:training_hyperparameters}
\begin{tabular}{lc@{\qquad}lc}
\toprule
Hyperparameter & Value & Hyperparameter & Value \\
\midrule
Optimizer & AdamW & Epochs & $200$ \\
Batch size & $64$ & Prediction horizon $H$ & $3$ \\
Initial learning rate & $10^{-4}$ & Minimum learning rate & $10^{-6}$ \\
Weight decay & $10^{-3}$ & Warm-up epochs & $5$ \\
Gradient-norm threshold & $1$ & Latent dimension & $192$ \\
Random seed & $42$ & Active loss weights & $1$ \\
\bottomrule
\end{tabular}
\end{table}

\subsubsection{Task and Dataset Details}

We next describe the data collection and evaluation protocols for each task. \vspace{0.2cm}

\noindent \textbf{CartPole.} We render the nonlinear CartPole system from a fixed camera and pair each image with the four-dimensional physical state. The dataset contains $985$ trajectories, split into $787/99/99$ training, validation, and test trajectories. To cover both uncontrolled and stabilizing behaviors, $75\%$ of the trajectories are collected using passive (i.e., $a_t = 0$ for all $t$) or uniformly random inputs, and the remaining $25\%$ use LQR-based policies and reference-angle sweeps. During evaluation, each method is tested from ten initial conditions for $300$ control steps. A trial is successful when the norm of the physical state remains below $0.7$ after $T$ steps. Actions are clipped to $[-10,10]$.
\vspace{0.2cm}

\noindent \textbf{Walker2D.} We use the Walker2D-v4 environment with $64\times64$ RGB observations, the $17$-dimensional proprioceptive measurements, and six-dimensional actions. The training data combine equal numbers of trajectories generated by a random policy and a pretrained SAC policy \citep{huang2022cleanrl,haarnoja2018soft}, yielding $3,200$ training, $400$ validation, and $400$ test trajectories after the split. Evaluation is conducted over ten trials of $600$ steps.
\vspace{0.2cm}

\noindent \textbf{PointMaze.} We use the U-shaped PointMaze environment with $64\times64$ RGB observations, four-dimensional proprioception, and two-dimensional actions. The dataset contains $2,000$ trajectories of length $100$, split into $1,600/200/200$ training, validation, and test trajectories. For each of ten trials, the point robot is initialized at a sampled start state and controlled to reach a trial-specific goal. Success happens when the planar distance to the goal is at most $0.5$.

\subsubsection{Planning and Control Details}

All planners optimize a terminal latent-space objective: the predicted terminal representation is matched to the representation of the desired goal observation. As in DINO-WM \citep{zhou2024dino}, CEM keeps a Gaussian distribution over action sequences, evaluates sampled sequences with the learned predictor, and select the lowest-cost elite actions, and refines the distribution. We use the iCEM variant in  \cite{pinneri2021sample} for Walker2D with a SAC policy warm initialization for the distribution of actions. Table~\ref{tab:cem_hyperparameters} provides the task-specific sampling budgets.

\begin{table}[H]
\centering
\caption{\small CEM and improved-CEM planning hyperparameters. Here, ``execute'' denotes the number of actions applied before replanning.}
\label{tab:cem_hyperparameters}
\small
\begin{tabular}{llccccc}
\toprule
Task & Planner & Horizon & Execute & Population & Elites & Iterations \\
\midrule
CartPole & CEM & $10$ & $1$ & $300$ & $30$ & $10$ \\
Walker2D &  iCEM & $3$ & $1$ & $200$ & $20$ & $5$ \\
PointMaze & CEM & $25$ & $25$ & $300$ & $30$ & $10$ \\
\bottomrule
\end{tabular}
\end{table}

For gradient-based planning (GBP), the action sequence is optimized directly through the differentiable latent rollout. We use Adam for $50$ optimization steps at each replanning instant, with learning rate $0.1$ and Gaussian action perturbations of standard deviation $0.05$. The CartPole and PointMaze horizons are $10$ and $25$, respectively.

For latent LQR, we linearize the learned latent dynamics around the desired equilibrium by automatic differentiation and solve the corresponding discrete algebraic Riccati equation. We use $Q=I_{192}$ in both tasks, with $R=1$ for CartPole and $R=I_2$ for PointMaze.

\subsubsection{Sketched Isotropic Gaussian Regularization  (SIGReg)} \label{appendix:SIGreg_loss}

For completeness, we define the SIGReg objective used to prevent total representation collapse by enforcing random one-dimensional projections of the latent distribution to match a standard isotropic Gaussian~\citep{balestriero2025lejepa}.

\begin{align*}
L_{\mathrm{SIG}}(\theta) =
\frac{1}{K}\sum_{\ell=1}^{K}\left(
\frac{\sum_{j=1}^{n_\omega} g(\omega_j)
\Big[\left(
\operatorname{Re}\hat\varphi_\ell(\omega_j)-\varphi_{\mathcal N}(\omega_j)
\right)^2+
\left(\operatorname{Im}\hat\varphi_\ell(\omega_j)\right)^2
\Big]
}{\sum_{j=1}^{n_\omega} g(\omega_j)
}\right).
\end{align*}
where
\begin{align*}
\hat\varphi_\ell(\omega)&=
\frac{1}{M}\sum_{i=1}^{M}
e^{\mathrm{i}\omega(w_\ell^\top z_i)},
\qquad \varphi_{\mathcal N}(\omega)
=e^{-\omega^2/2}
=:g(\omega),\qquad w_\ell
\overset{\text{i.i.d.}}{\sim}
\mathrm{Unif}\bigl(\mathbb S^{d-1}\bigr).
\end{align*}
Here, $K$ is the number of random projection directions, and
$w_\ell\in\mathbb S^{d-1}$ is the $\ell$-th random unit direction in
latent space. Each $w_\ell$ is obtained by drawing an i.i.d. Gaussian
vector in $\mathbb R^d$ and normalizing it. Moreover,
$z_i=E_\theta(y_i)\in\mathbb R^d$ is the $i$-th latent vector in the
current minibatch, obtained by pooling every ``frame'' from every
window, for $i=1,\dots,M$. The total number of pooled latents is
$M=N(H+1)$, corresponding to all $H+1$ frames from each of the $N$
windows. Lastly, $\omega\in\mathbb R$ denotes the
characteristic-function argument, and
$\{\omega_j\}_{j=1}^{n_\omega}$ is a grid of $n_\omega$ points evenly
spaced in $[-\omega_{\max},\omega_{\max}]$.

\subsubsection{Additional Action Reconstruction Losses} \label{appendix:additinal_action_reconstruction}

In addition to the endpoint action reconstruction loss studied in Section~\ref{sec:idm}, we empirically consider two alternative inverse-dynamics objectives. These objectives differ only in the latent information made available to the action decoder.\\

\noindent \textbf{Multi-step action reconstruction.} The multi-step decoder receives the complete latent trajectory and jointly reconstructs all actions applied within the window:
\begin{align}
D_{\mathrm{MS}\text{-}\mathrm{IDM},\theta}(z_{t_\tau},\ldots,z_{t_\tau+H})
&=\hat{\mathbf a}_{\tau,H}\in\mathbb R^{mH},
\end{align}
where $\hat{\mathbf a}_{\tau,H}=[\hat a_{t_\tau}^\top\;\cdots\;\hat a_{t_\tau+H-1}^\top]^\top$. In the pixel-based experiments, $D_{\mathrm{MS}\text{-}\mathrm{IDM},\theta}$ is a multilayer perceptron that first concatenates the $H+1$ latent states and then predicts the entire length-$H$ action sequence in a single forward pass. In the linear-system examples, we use a linear decoder. The corresponding loss is
\begin{align}
\mathcal{L}_{\mathrm{MS}\text{-}\mathrm{IDM}}(\theta)
=\mathbb{E}_{\tau \sim \mathcal{D}}\left[\frac{1}{H}\sum_{t=0}^{H-1}
\left\|\left[D_{\mathrm{MS}\text{-}\mathrm{IDM},\theta}\bigl(z_{t_\tau},\ldots,z_{t_\tau+H}\bigr)\right]_t-a_{t_\tau+t}\right\|_2^2\right].
\end{align}
In contrast to the endpoint action reconstruction, this decoder can use the intermediate representations to infer each action from local changes along the trajectory.\\

\noindent \textbf{One-step action reconstruction.} The one-step decoder is shared across time and reconstructs each action from a pair of consecutive latent states:
\begin{align}
D_{\mathrm{IDM},\theta}(z_{t_\tau+t},z_{t_\tau+t+1})
&=\hat a_{t_\tau+t}\in\mathbb R^m.
\end{align}
In our implementation, $D_{\mathrm{IDM},\theta}$ is a multilayer perceptron (MLP) applied independently to every adjacent latent pair. Its loss is
\begin{align}
\mathcal{L}_{\mathrm{IDM}}(\theta)
=\mathbb{E}_{\tau\sim \mathcal{D}}\left[\frac{1}{H}\sum_{t=0}^{H-1}
\left\|D_{\mathrm{IDM},\theta}\bigl(z_{t_\tau+t},z_{t_\tau+t+1}\bigr)-a_{t_\tau+t}\right\|_2^2\right].
\end{align}
This objective provides a local inverse-dynamics signal at each transition and uses the same decoder parameters for all $k$. This one-step action-reconstruction objective is the empirical
finite-sample counterpart of the inverse-dynamics regularizer proposed
by \citet{ivashkov2026sensorimotor}.

We emphasize that, empirically, we consider endpoint (EP-IDM), multi-step (MS-IDM), and one-step (IDM) action reconstruction both separately and jointly. A common loss encompassing all variants of action reconstruction considered in this work is given by
\begin{align}
\bar{\mathcal L}_{\mathrm{IDM}}(\theta) =\lambda_{1}\mathcal{L}_{\mathrm{EP}\text{-}\mathrm{IDM}}(\theta)
+\lambda_{2}\mathcal{L}_{\mathrm{MS}\text{-}\mathrm{IDM}}(\theta)
+\lambda_{3}\mathcal{L}_{\mathrm{IDM}}(\theta).
\end{align}
The theoretical guarantees in Section~\ref{sec:idm} leverage the endpoint loss ${\mathcal{L}}_{\mathrm{EP}\text{-}\mathrm{IDM}}$, whereas the two alternatives above are also included as empirical comparisons.

\subsection{Ablations}
\label{appendix:ablations}

We ablate the action reconstruction formulation, action-context length, and observation inputs to identify the effect of these components in the obtained control performance.
\vspace{0.2cm}

\noindent \textbf{Different inverse dynamics loss formulations.} In Table~\ref{tab:idm_variant_ablation}, we compare the different variants for action reconstruction, i.e., EP-IDM, IDM, and MS-IDM, used in the main experiments (see Appendix \ref{appendix:additinal_action_reconstruction} for additional details). We note that EP-IDM and the joint one-step/multi-step objective both produce perfect CartPole success with all three controllers. Adding SIGReg to multi-step prediction and endpoint reconstruction does not degrade performance.\vspace{0.2cm}

\begin{table}[t]
\centering
\caption{\small CartPole ablation over inverse-dynamics formulations for an encoder trained from scratch.}
\label{tab:idm_variant_ablation}
\begin{tabular}{lcccc}
\toprule
Objective & LQR (SR) & LQR (MFS) & CEM (SR) & GBP (SR) \\
\midrule
1SP + IDM + MS-IDM & 100\% & 0.999 & 100\% & 100\% \\
1SP + EP-IDM & 100\% & 0.999 & 100\% & 100\% \\
MSP + EP-IDM + SIG & 100\% & 0.996 & 100\% & 100\% \\
\bottomrule
\end{tabular}
\end{table}

\noindent \textbf{Action context.} Moreover, in Tables~\ref{tab:cartpole_context_comparison} and~\ref{tab:pointmaze_context_comparison}, we compare training with different action context lengths. Here, by action context, we mean the number of consecutive actions provided to the latent predictor when predicting the next latent. Here we consider action context one (the standard in our experiments) and five (the ablation) while holding the encoder and objective fixed. The trained-from-scratch action-reconstruction models do not benefit from more actions in the context length. However, the models trained with pretrained encoders indicate that a longer action context can actually help shape the representation for a better planning when using iBOT as the frozen encoder. 
\vspace{0.2cm}

\begin{table}[t]
\centering
\caption{\small \textbf{CartPole} success rates (\%) with action contexts one and five. PR denotes a learned projector on top of the frozen encoder.}
\label{tab:cartpole_context_comparison}
\small
\setlength{\tabcolsep}{4pt}
\begin{tabular}{@{}llcccccc@{}}
\toprule
& & \multicolumn{2}{c}{LQR (SR)}
  & \multicolumn{2}{c}{CEM (SR)}
  & \multicolumn{2}{c}{GBP (SR)} \\
\cmidrule(lr){3-4}
\cmidrule(lr){5-6}
\cmidrule(lr){7-8}
Encoder & Objective
& Context 1 & Context 5
& Context 1 & Context 5
& Context 1 & Context 5 \\
\midrule
From scratch
    & 1SP + SIG
    & 0 & 0 & 100 & 100 & 90 & 90 \\
    & 1SP + IDM + MS-IDM
    & 100 & 100 & 100 & 100 & 100 & 100 \\
    & 1SP + EP-IDM
    & 100 & 100 & 100 & 100 & 100 & 100 \\
    & MSP + SIG
    & 0 & 40 & 100 & 100 & 100 & 100 \\
\midrule
DINOv2
    & MSP
    & 0 & 0 & 0 & 0 & 0 & 0 \\
DINOv2
    & MSP + PR-SIG
    & 0 & 0  & 100 & 60 & 0 & 0 \\
\midrule
iBOT
    & MSP
    & 0 & 0 & 0 & 100 & 0 & 60 \\
    & MSP + PR-SIG
    & 50 & 40 & 80 & 90 & 80 & 60 \\
\bottomrule
\end{tabular}
\end{table}

\begin{table}[t]
\centering
\caption{\small \textbf{PointMaze} success rates (\%) with action contexts one and five.}
\label{tab:pointmaze_context_comparison}
\small
\setlength{\tabcolsep}{4pt}
\begin{tabular}{@{}llcccccc@{}}
\toprule
& & \multicolumn{2}{c}{LQR (SR)}
  & \multicolumn{2}{c}{CEM (SR)}
  & \multicolumn{2}{c}{GBP (SR)} \\
\cmidrule(lr){3-4}
\cmidrule(lr){5-6}
\cmidrule(lr){7-8}
Encoder & Objective
& Context 1 & Context 5
& Context 1 & Context 5
& Context 1 & Context 5 \\
\midrule
From scratch
    & 1SP + IDM
    & 70 & 80 & 100 & 90 & 100 & 80 \\
    & 1SP + EP-IDM
    & 80 & 80 & 100 & 100 & 80 & 80 \\
    & 1SP + IDM + MS-IDM
    & 80 & 70 & 90 & 100 & 90 & 90 \\
    & MSP + SIG
    & 60 & 80 & 90 & 70 & 100 & 60 \\
    & MSP + MS-IDM
    & 80 & 80 & 100 & 100 & 90 & 90 \\
    & MSP + EP-IDM
    & 80 & 80 & 90 & 90 & 90 & 100 \\
\midrule
DINOv2
    & MSP
    & 20 & 10 & 50 & 60 & 80 & 50 \\
\midrule
iBOT
    & MSP
    & 80 & 80 & 100 & 80 & 80 & 60 \\
\bottomrule
\end{tabular}
\end{table}

\noindent \textbf{Observation inputs.} In Table~\ref{tab:cartpole_state_only} we summarize the results when we remove the pixel observations (i.e., images) and compares MLP and transformer predictors using only the physical state (proprioceptive measurements). We note that EP-IDM succeeds with the transformer but not with the MLP under LQR, showing that representation sufficiency does not replace the need to fit an accurate local predictor. Moreover, in Table~\ref{tab:cartpole_proprioception}, we have the results when we remove the  proprioceptive measurements (states) from the frozen-iBOT models. We note that performance drops to zero across all planners, indicating that the present image encoder and short temporal context do not reliably recover velocity information from pixels alone.

\begin{table}[htbp]
\centering
\caption{\small \textbf{CartPole} ablation: Action context one, only proprioceptive
information (state).}
\label{tab:cartpole_state_only}
\small
\setlength{\tabcolsep}{4pt}
\begin{tabular}{@{}llccccc@{}}
\toprule
Predictor & Objective
& LQR (SR)
& LQR (MFS)
& CEM (SR)
& CEM (held)
& GBP (SR/held) \\
\midrule
MLP
    & MSP + SIG
    & 20\% & 0.571 & 20\% & 20\% & 0\% / 0\% \\
    & 1SP + EP-IDM
    & 0\% & 0.204 & \textbf{100\%} & \textbf{100\%}
    & 0\% / 0\% \\
\midrule
Transformer
    & MSP + SIG
    & 20\% & 0.461 & 0\% & 0\% & 0\% / 0\% \\
    & 1SP + EP-IDM
    & \textbf{80\%} & 0.869
    & \textbf{100\%} & \textbf{100\%}
    & \textbf{100\%} / \textbf{100\%} \\
\bottomrule
\end{tabular}
\end{table}

\begin{table}[htbp]
\centering
\caption{\small \textbf{CartPole} ablation with a frozen iBOT encoder and action
context five, with and without proprioceptive inputs. PR denotes the learned projector. SR: success rate, held: fraction of trials in which stabilization is preserved.}
\label{tab:cartpole_proprioception}
\small
\setlength{\tabcolsep}{4pt}
\begin{tabular}{@{}llccccc@{}}
\toprule
Objective & Proprioception
& LQR (SR)
& CEM (SR)
& CEM (held)
& GBP (SR)
& GBP (held) \\
\midrule
MSP
& Yes
& 0\% & \textbf{100\%} & \textbf{100\%} & \textbf{60\%} & \textbf{40\%} \\
& No
& 0\% & 0\% & 0\% & 0\% & 0\% \\
\midrule
MSP + PR-SIG
& Yes
& \textbf{40\%} & \textbf{90\%} & \textbf{90\%} & \textbf{60\%} & \textbf{60\%} \\
& No
& 0\% & 0\% & 0\% & 0\% & 0\% \\
\midrule
MSP + PR-IDM 
& Yes
& \textbf{100\%} & \textbf{80\%}  & \textbf{80\%} & \textbf{70\%} & \textbf{70\%}  \\
& No
& 0\% & 0\% & 0\% & 0\% & 0\% \\
\bottomrule
\end{tabular}
\end{table}

\noindent \textbf{Fixed-point consistency and encoder sensitivity.} We also evaluate how the two frozen encoders respond to small physical perturbations in order to understand the reason why MSP+PR-EP-IDM with iBOT succeeds and with DINOv2 fails. Table~\ref{tab:encoder-gain} reports the ratio between consecutive latent and physical displacements for zero-action rollouts near the upright equilibrium. iBOT has a gain close to one across the evaluated perturbations, whereas DINOv2 amplifies the same perturbations by approximately $22$-$26$ times. This is also consistent with the fixed-point residuals in Table~\ref{tab:fp-error}, namely, the predictor has difficulty representing the upright equilibrium state as a stationary latent point when small physical perturbations lead to disproportionately large latent perturbations. 

We also note that DINOv2 \citep{oquab2023dinov2} is trained on ImageNet \citep{deng2009imagenet} to maximally separate image representations, that is, it amplifies fine visual perturbations. In particular, a $0.02$-unit physical perturbation of the CartPole state changes the image by only a few pixels, but DINOv2's CLS token will completely separate it in the latent space. On the other hand, iBOT \citep{zhou2021ibot}, trained with masked reconstruction, learns a smoother representation where nearby images map to close latents.

\begin{table}[htbp]
\centering
\caption{\small Ablation on the encoder gain $\|\Delta z_t\| / \|\Delta x_t\|$ as a function of initial-state perturbation magnitude $\|x_0\|$.  We start from a perturbed upright equilibrium, the cart pole is stepped with zero action for 60 steps. We measure the mean latent displacement $\|\Delta z_t\| = \|z_t - z_{t-1}\|$ divided by the mean physical-state displacement $\|\Delta x_t\| = \|x_t - x_{t-1}\|$ . 
}
\label{tab:encoder-gain}
\begin{tabular}{lcccccc}
\toprule
Encoder & $\|x_0\|{=}0.005$ & $\|x_0\|{=}0.01$ & $\|x_0\|{=}0.02$
      & $\|x_0\|{=}0.05$ & $\|x_0\|{=}0.10$ & $\|x_0\|{=}0.20$ \\
\midrule
iBOT   & \textbf{1.07$\times$} & \textbf{1.11$\times$} & \textbf{1.07$\times$}
             & \textbf{1.07$\times$} & \textbf{1.02$\times$} & \textbf{1.03$\times$} \\
DINOv2 & 25.45$\times$ & 26.07$\times$ & 26.38$\times$
             & 25.67$\times$ & 21.83$\times$ & 25.12$\times$ \\
\bottomrule
\end{tabular}
\end{table}

\begin{table}[t]
\centering
\caption{\small Fixed-point residual of the learned predictor at the encoded
upright \textbf{CartPole} equilibrium under zero action. The residual measures the one-step drift from this encoded equilibrium. PR denotes a learned projector on top of the frozen encoder.}
\label{tab:fp-error}
\begin{tabular}{llc}
\toprule
Encoder & Objective
& $\|P(z_{\mathrm{eq}},0)-z_{\mathrm{eq}}\|$ \\
\midrule
DINOv2 & 1SP             & 5.569 \\
       & MSP             & 7.215 \\
       & 1SP + PR-EP-IDM & 0.4026 \\
       & MSP + PR-SIG    & 2.515 \\
       & MSP + PR-EP-IDM & 0.343 \\
\midrule
iBOT   & 1SP              & 0.059 \\
       & MSP              & 0.101 \\
       & 1SP + PR-EP-IDM  & 0.091 \\
       & MSP + PR-SIG     & 0.641 \\
       & MSP + PR-EP-IDM  & 0.129 \\
\bottomrule
\end{tabular}
\end{table}

\subsection{Additional Details on Examples of Section \ref{sec:stabilization_conditions}}
\label{appendix:example_details}

This section provides details on the system matrices, physical parameters, and linear camera maps used in the examples of Section \ref{sec:stabilization_conditions}. In addition, we also provide here an additional motivating example of a synthetic open-loop stable system.\vspace{0.2cm}

\noindent\textbf{Linear camera maps.}
In all three examples, we observe the physical state through a fixed linear perceptual ``sensor'' or camera map $y_t=Cx_t$. It lifts the low-dimensional physical state into a redundant observation space and mixes its coordinates, so that dynamically meaningful directions are not presented to the encoder in an axis-aligned basis. For a state dimension $n$ and observation dimension $p$, we draw $\bar C\in\mathbb R^{p\times n}$ using random seed zero, with $\bar C_{ij}\overset{\text{i.i.d.}}{\sim}\mathcal N(0,1)$, and normalize each column, i.e., $C_{:,j}=\bar C_{:,j}/\|\bar C_{:,j}\|_2$.\vspace{0.2cm}

\noindent\textbf{Synthetic, open-loop unstable system (Example 1).} This example isolates the collapse of unstable modes in the smallest possible setting: a two-dimensional, single-input system with one stable mode and one controllable but open-loop unstable mode that must be retained by the learned representation for feedback stabilization. We first define
\begin{align}
\bar{A}=\operatorname{diag}(1.25,0.85) \text{ and }
\bar{B}=\begin{bmatrix}1\\0.6\end{bmatrix}.
\end{align}
Both modes are therefore actuated by the single input. To avoid aligning the modal coordinates with the physical state coordinates, we apply the similarity transformation
\begin{align}
T_\vartheta=\begin{bmatrix}\cos\vartheta&-\sin\vartheta\\\sin\vartheta&\cos\vartheta\end{bmatrix},
\; \vartheta=0.6,\;  A=T_\vartheta \bar{A} T_\vartheta^{-1},
\text{ and } B=T_\vartheta \bar{B}.
\end{align}
This yields the following system matrices:
\begin{align}
A = \begin{bmatrix} 1.122 & 0.186 \\ 0.186 & 0.978 \end{bmatrix} \in \mathbb{R}^{2\times 2} \text{ and }
B = \begin{bmatrix} 0.487 \\ 1.060 \end{bmatrix} \in \mathbb{R}^{2\times 1}.
\end{align}
The six-dimensional linear camera map is
\begin{align}
C = \begin{bmatrix}
0.069 & -0.081 \\
0.353 & 0.064 \\
-0.295 & 0.222 \\
0.718 & 0.581 \\
-0.388 & -0.776 \\
-0.343 & 0.025
\end{bmatrix} \in \mathbb{R}^{6 \times 2}.
\end{align}

This system has one unstable and one stable mode with eigenvalues and eigenvectors as follows:

\begin{align}
\lambda_u = 1.25 \ (\textbf{unstable}),\ v_u = \begin{bmatrix}0.825\\0.565\end{bmatrix}
\text{   and   }
\lambda_s = 0.85 \ (\textbf{stable}),\ v_s = \begin{bmatrix}-0.565\\0.825\end{bmatrix}.
\end{align}

The initial condition is given by $x_0 = \alpha_u v_u + \alpha_s v_s$, where $\alpha_u \overset{\text{i.i.d.}}{\sim} \mathcal N(0,\sigma_u^2)$ with $\sigma_u = 0.01$ and $\alpha_s \overset{\text{i.i.d.}}{\sim} \mathcal N(0,\sigma_s^2)$ with $\sigma_s = 0.3$. We use an asymmetric initial-condition distribution, with larger variance along the stable mode and smaller variance along the unstable mode. We note that this choice reflects realistic data collection near an unstable equilibrium point \citep{levine2020offline}. That is, a policy that avoids catastrophic failure will naturally spend most of its time exploring near-equilibrium conditions or stable-mode excursions and will therefore rarely encounter the large deviations that reveal unstable directions. 

Each rollout has length $T=5$. With probability $0.3$, an episode is passive, meaning that $a_t \equiv 0$. Otherwise, the actions are sampled independently according to $a_t \overset{\text{i.i.d.}}{\sim} \mathcal N(0,\sigma_a^2)$, where $\sigma_a = 0.02$.\vspace{0.2cm}

\noindent\textbf{Synthetic, open-loop stable system (Example 2).} This example provides an open-loop stable control case for the previous example: it preserves the same observation, actuation, and data asymmetry as in the previous example while replacing the unstable mode by a stable mode, thereby separating representation collapse from its consequences for closed-loop stability.

\begin{align}
\lambda_{s1} = 0.25 \ (\textbf{stable 1}),\ v_{s1} = \begin{bmatrix}0.825\\0.565\end{bmatrix}
\text{   and   }
\lambda_{s2} = 0.85 \ (\textbf{stable 2}),\ v_{s2} = \begin{bmatrix}-0.565\\0.825\end{bmatrix}.
\end{align}

The initial-condition distribution, episode rollout length, per-step action distribution, and all training hyperparameters are kept the same as in the previous example.

In particular, we construct this example using the same $\bar{B}$, rotation $T_\vartheta$, and linear camera map as in the previous example, but replace the eigenvalue $1.25$ by $0.25$ in the diagonal of $\bar{A}$. We then have
\begin{align}
\bar{A}=\operatorname{diag}(0.25,0.85) \text{ and }
A=T_\vartheta \bar{A}T_\vartheta^{-1}
=\begin{bmatrix}0.441&-0.280\\-0.280&0.659\end{bmatrix},
\end{align}
while ${B}=[0.487\;1.060]^\top$ and $C$ is the $6\times2$ matrix reported above. Hence, the two examples differ only in whether the low-variance mode is unstable or contractive.

\begin{figure}[t]
\centering
\includegraphics[width=1\textwidth]{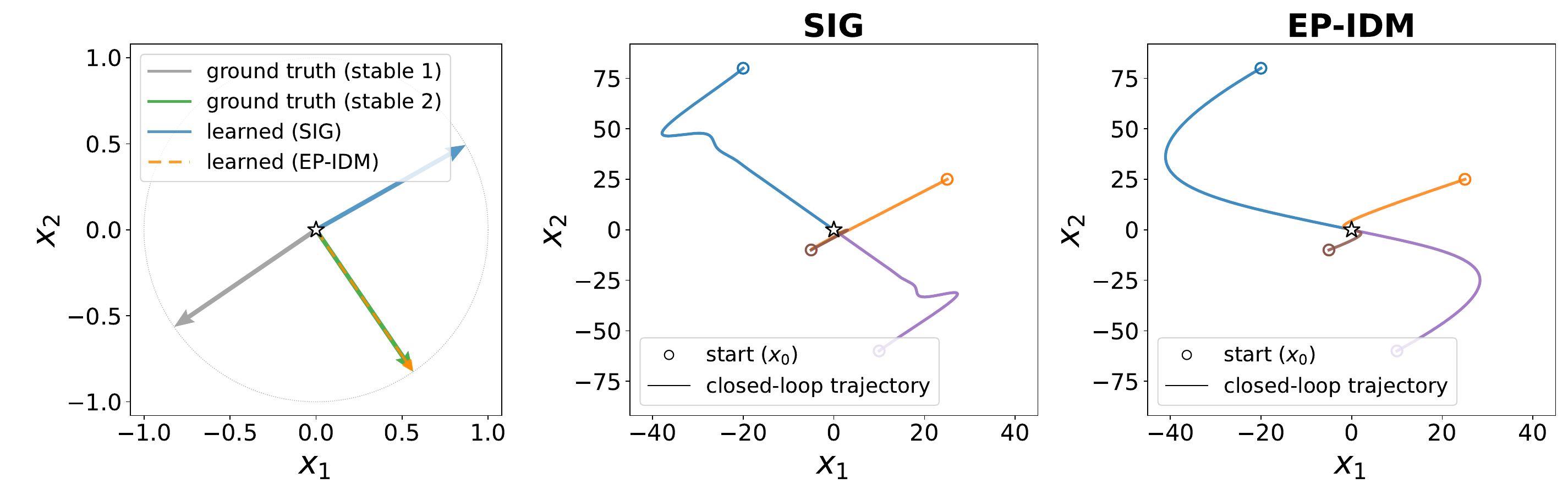}
\vspace{-0.8cm}
\caption{\small \textbf{Synthetic open-loop stable LDS.} This example considers a two-dimensional open-loop stable system. Panels descriptions are the same as those in Fig.~ \ref{fig:Examples_main}.} 
\label{fig:Example2}
\end{figure}

Figure~\ref{fig:Example2} provides an open-loop stable system baseline to compare with the examples in Fig. \ref{fig:Examples_main}, where all modes decay even without corrective feedback (i.e., they are all stable modes) and preserving a stable direction for latent stabilization is unnecessary. Hence, the standard latent world-model objective (i.e., solving Eq. (\ref{optimization:ms_SIGreg}) with SIGReg) can still work. For this open-loop stable system in Fig.~\ref{fig:Example2}, EP-IDM recovers the dominant stable direction.\vspace{0.2cm}

\noindent\textbf{Linearized CartPole (Example 3).} Let $p$ denote the cart position, $\theta$ the pole angle measured from the upright equilibrium, and $a$ the horizontal force applied to the cart. For the state $x=[p,\dot p,\theta,\dot\theta]^\top$, the linearized continuous-time dynamics are
\begin{align}
\dot x=A_cx+B_ca,
\end{align}
where
\begin{align}
A_c&=\begin{bmatrix}
0&1&0&0\\
0&0&-\frac{m g}{M}&0\\
0&0&0&1\\
0&0&\frac{(M+m)g}{M\ell}&0
\end{bmatrix} \text{ and }
B_c=\begin{bmatrix}0\\\frac{1}{M}\\0\\-\frac{1}{M\ell}\end{bmatrix}.
\end{align}
We use cart mass $M=1.0$, pole mass $m=0.1$, pole-length parameter $\ell=0.5$, and gravitational acceleration $g=9.81$. The continuous-time spectrum is $\{0,0,+\omega,-\omega\}$, where $\omega=\sqrt{(M+m)g/(M\ell)}$. We then discretize $(A_c,B_c)$ using an exact zero-order hold with sampling time $d_t=0.02$, which gives
\begin{align}
A = \begin{bmatrix}
1.000 & 0.020 & -0.0002 & -0.0000 \\
0.000 & 1.000 & -0.0196 & -0.0002 \\
0.000 & 0.000 & 1.0043 & 0.0200 \\
0.000 & 0.000 & 0.4323 & 1.0043
\end{bmatrix} \in \mathbb{R}^{4\times 4} \text{ and }
B = \begin{bmatrix} 0.0002 \\ 0.0200 \\ -0.0004 \\ -0.0401 \end{bmatrix} \in \mathbb{R}^{4\times1}.
\end{align}
We note that the underlying repeated eigenvalue at one corresponds to the cart's free-integrator block, while the upright pole contributes one stable and one unstable eigenvalue. Lastly, the eight-dimensional linear camera map is given by
\begin{align}
C = \begin{bmatrix}
0.046 & -0.068 & 0.256 & 0.056 \\
-0.195 & 0.185 & 0.522 & 0.503 \\
-0.256 & -0.647 & -0.250 & 0.022 \\
-0.847 & -0.112 & -0.499 & -0.389 \\
-0.198 & -0.162 & 0.165 & 0.553 \\
-0.047 & 0.699 & -0.266 & 0.187 \\
0.329 & 0.048 & -0.298 & -0.489 \\
-0.167 & 0.113 & -0.404 & -0.111
\end{bmatrix} \in \mathbb{R}^{8\times4}.
\end{align}

\begin{figure}[t]
\centering
\includegraphics[width=1\textwidth]{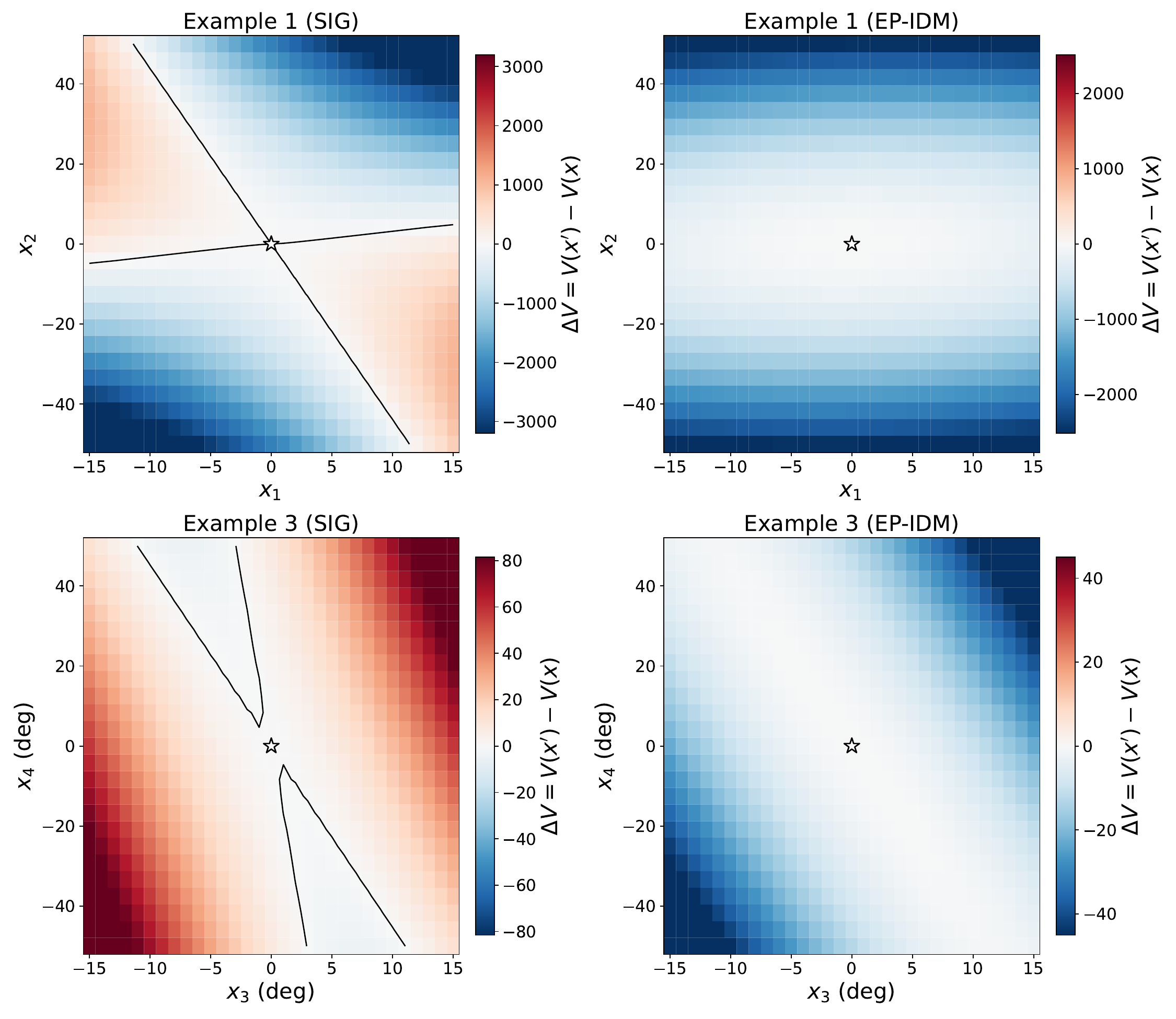}
\vspace{-0.5cm}
\caption{\small Sign of $\Delta V=V(x')-V(x)$ after one closed-loop step under the learned latent LQR controller, where $V(x)=x^\top S x$ and $S\succ 0$ is the
ground-truth Lyapunov function matrix. We also depict the $\Delta V=0$ contour (separatrix), i.e., the boundary separating states where the learned controller decreases $V$ (blue) from those where it does not (red).} 
\label{fig:Lyapunov}
\end{figure}

Fig.~\ref{fig:Lyapunov} complements the closed-loop control performance from Fig.~\ref{fig:Examples_main} with the Lyapunov certificate guarantee. For both unstable systems, the controllers learned with
SIGReg produce regions in which $\Delta V>0$, showing that the ground-truth
Lyapunov function increases after one closed-loop step under the learned latent LQR controller. This arises because the learned representations do not preserve the unstable dynamics required to design an accurate stabilizing feedback controller. In contrast, the controllers learned with EP-IDM yield $\Delta V<0$ throughout the evaluated region, except at the equilibrium where $\Delta V=0$.

\subsection{Proofs of Our Theoretical Guarantees}

This section provides the complete proofs of the theoretical results presented in the main text.\vspace{0.2cm}

\noindent \textbf{Notation.} Given a matrix $M\in\mathbb R^{n\times n}$, we denote  the spectral norm as $\|M\|_2$, the smallest singular value as $\sigma_{\min}(M)$, the spectral radius as $\rho(M)$, and the spectrum of $M$ as $\operatorname{spec}(M)$. The smallest and largest eigenvalues are $\lambda_{\min}(M)$ and $\lambda_{\max}(M)$. For a subspace $\mathcal S$, $P_{\mathcal S}$ denotes its orthogonal projector and $\mathcal S^\perp$ its orthogonal complement. We write $h \lesssim g $ when $h\leq Cg$ for a constant $C>0$.

\subsubsection{Supporting Lemmas}
\label{appendix:supporting_lemmas}

We first introduce the auxiliary results used throughout the subsequent proofs.

\begin{sublemma}\label{lemma:decoder_left_inverse}
Let $T$ be a linear map and suppose that the support of a random vector $u$ contains an open subset of its ambient space. If an affine map $D$ satisfies $D(Tu+c)=u$ almost surely, then $T$ has full column rank.
\end{sublemma}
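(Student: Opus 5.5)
Write $D(w)=Lw+d$ with $L$ a matrix and $d$ a vector. Then $D(Tu+c)=LTu+(Lc+d)$, which is an affine function of $u$. The plan is to show that the identity $LTu+(Lc+d)=u$, which holds almost surely, forces $LT=I$. Full column rank of $T$ then follows immediately, because $Tu=0$ implies $u=LTu=0$, so $\ker T=\{0\}$.

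First, move from "almost surely" to "on an open set." Let $\mathcal O$ be the nonempty open subset contained in $\operatorname{supp}(u)$. The set $S=\{u: (LT-I)u+(Lc+d)=0\}$ is closed, since it is the zero set of a continuous map. The event $u\in S$ has probability one, so the closed set $S$ contains the support of $u$ by definition of support (the smallest closed set of full measure). Hence $\mathcal O\subseteq S$.

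Next, an affine map $h(u)=Mu+e$ that vanishes on a nonempty open set $\mathcal O$ is identically zero. Pick $u_0\in\mathcal O$ and $\varepsilon>0$ with $u_0\pm\varepsilon e_i\in\mathcal O$ for every coordinate vector $e_i$. Subtracting $h(u_0)=0$ from $h(u_0+\varepsilon e_i)=0$ gives $\varepsilon Me_i=0$ for each $i$, so $M=0$. Then $e=h(u_0)=0$. Applied with $M=LT-I$ and $e=Lc+d$, this yields $LT=I$, and together with the first paragraph this proves that $T$ has full column rank.

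The proof is routine. The only subtle point is the support step: we need $S$ to be closed so that it contains the whole support, not just a full-measure subset of it. An alternative is to argue that $S$ is an affine subspace, so if it were proper it would have Lebesgue measure zero and could not contain an open set. The closed-set argument avoids needing $u$ to have a density. If $c$ is allowed to be random, for instance $c=Fx$ depending on the initial state, we would apply the lemma conditionally on $c$, using the conditional support. This matches the way the lemma is invoked in Theorem~\ref{theorem:condition_achieved_action_reconstruction}.
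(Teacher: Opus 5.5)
Your proposal is correct and follows the same route as the paper's proof: write $D(w)=Lw+d$, observe that exact reconstruction makes the affine map $(LT-I)u+(Lc+d)$ vanish on an open set, conclude $LT=I$, and read off full column rank from the left inverse. You also justify two steps that the paper only asserts: the passage from ``almost surely'' to ``on the open subset of the support,'' via closedness of the zero set, and the fact that an affine map vanishing on a nonempty open set is identically zero.
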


\begin{proof}
Write $D(v)=Lv+b$. Exact reconstruction gives
\begin{align}
(LT-I)u+(Lc+b)=0
\end{align}
on an open set. An affine function that vanishes on an open set is identically zero, and hence $LT=I$. Therefore, $T$ admits a left inverse and has full column rank.
\end{proof}

\begin{theorem}[Davis-Kahan theorem \citep{davis1970rotation}]\label{theorem:davis_kahan}
Let $M$ and $\hat M=M+E$ be symmetric matrices. Let $\Phi$ be a subspace of $M$ associated with an eigenvalue cluster separated from the remaining spectrum of $M$ by a gap $\delta>0$, and let $\hat{\Phi}$ be the corresponding subspace of $\hat M$. Then, if $\|E\|_2\leq\delta/2$, it holds that
\begin{align}
\|\sin\Theta(\hat{\Phi},\Phi)\|_2
\leq \frac{2\|E\|_2}{\delta}.
\end{align}
\end{theorem}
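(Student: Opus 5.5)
The plan is to use the Sylvester-equation form of the $\sin\Theta$ argument, combined with Weyl's inequality. I read the hypothesis in the way that covers an interior cluster: the eigenvalues of $M$ belonging to $\Phi$ lie in an interval $[a,b]$, and every remaining eigenvalue of $M$ lies outside $(a-\delta,b+\delta)$. A top-$r$ cluster is the special case $b=+\infty$.

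Write $M=V\Lambda V^\top+V_\perp\Lambda_\perp V_\perp^\top$, with $V$ an orthonormal basis of $\Phi$, so that $\operatorname{spec}(\Lambda)\subset[a,b]$ and $\operatorname{spec}(\Lambda_\perp)\cap(a-\delta,b+\delta)=\emptyset$. Set $\varepsilon:=\|E\|_2\le\delta/2$.

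\textbf{Step 1: define $\hat\Phi$.} By Weyl's inequality, every eigenvalue of $\hat M$ lies within $\varepsilon$ of the correspondingly ordered eigenvalue of $M$. Hence $\hat M$ has exactly $\dim\Phi$ eigenvalues in $[a-\varepsilon,b+\varepsilon]$, and the rest lie outside $(a-\delta+\varepsilon,b+\delta-\varepsilon)$. These two sets do not overlap when $\varepsilon<\delta/2$. I define $\hat\Phi=\operatorname{range}(\hat V)$, where $\hat M\hat V=\hat V\hat\Lambda$ and $\operatorname{spec}(\hat\Lambda)\subset[a-\varepsilon,b+\varepsilon]$. The boundary case $\varepsilon=\delta/2$ is where the labelling of eigenvalues can become ambiguous. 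I would handle it either by taking $\hat\Phi$ to be the ordered-index subspace, or by proving the bound for $\varepsilon<\delta/2$ and noting that the argument below only uses the cluster's location, not strict separation. This bookkeeping is the only delicate point.

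\textbf{Step 2: Sylvester equation.} Put $X:=V_\perp^\top\hat V$, so that $\|\sin\Theta(\hat\Phi,\Phi)\|_2=\|X\|_2$. Left-multiply $\hat M\hat V=\hat V\hat\Lambda$ by $V_\perp^\top$, and use $V_\perp^\top M=\Lambda_\perp V_\perp^\top$. This gives
\begin{align}
\Lambda_\perp X-X\hat\Lambda=-V_\perp^\top E\hat V,
\end{align}
whose right-hand side has norm at most $\varepsilon$.

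\textbf{Step 3: lower-bound the Sylvester operator by shifting to the centre of the interval.} Let $c=(a+b)/2$ and $\rho=(b-a)/2+\varepsilon$. Then $\|(\hat\Lambda-cI)\|_2\le\rho$. Every diagonal entry of $\Lambda_\perp-cI$ has modulus at least $(b-a)/2+\delta=\rho+(\delta-\varepsilon)$, so $\|(\Lambda_\perp-cI)X\|_2\ge(\rho+\delta-\varepsilon)\|X\|_2$. The equation is unchanged by the shift, so the triangle inequality gives
\begin{align}
\varepsilon\ \ge\ \|(\Lambda_\perp-cI)X-X(\hat\Lambda-cI)\|_2\ \ge\ (\rho+\delta-\varepsilon)\|X\|_2-\rho\|X\|_2=(\delta-\varepsilon)\|X\|_2 .
\end{align}

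\textbf{Step 4: conclude.} From Step 3, $\|X\|_2\le\varepsilon/(\delta-\varepsilon)$. Since $\varepsilon\le\delta/2$, we have $\delta-\varepsilon\ge\delta/2$, and therefore $\|\sin\Theta(\hat\Phi,\Phi)\|_2\le 2\|E\|_2/\delta$.

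The shift-to-centre trick is the step that makes the interior-cluster case work. Without it, a naive bound through $\operatorname{sep}(\Lambda_\perp,\hat\Lambda)$ only yields a Bhatia–Davis–McIntosh-type constant of $\pi/2$.
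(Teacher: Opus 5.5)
Your proof is correct. The paper gives no argument of its own for this statement. It imports the result from Davis and Kahan (1970) and uses it only once, in the proof of Theorem~\ref{theorem:unstable_subspace_recovery}, for the top-$r$ eigenspace of $M_1=\operatorname{diag}(\Pi_{u,H},0)$ with $\delta=c_u\alpha^{2H}$.

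Your route is essentially the classical $\sin\Theta$ argument: Weyl's inequality to locate the perturbed cluster, then the Sylvester identity $\Lambda_\perp X-X\hat\Lambda=-V_\perp^\top E\hat V$ together with the shift to the centre of the cluster interval. It makes explicit two things the bare citation leaves implicit:
\begin{itemize}
\item It fixes what ``the corresponding subspace'' $\hat\Phi$ is: the ordered-index eigenspace. This matches what the paper later means by ``the top $r$ eigenvectors of $\Pi_H$''.
\item It shows where the factor $2$ comes from: the Weyl-degraded gap $\delta-\|E\|_2\ge\delta/2$. In fact you obtain the slightly sharper bound $\|E\|_2/(\delta-\|E\|_2)$.
\end{itemize}

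One small fix. For a top cluster, do not literally set $b=+\infty$, since the centre $c=(a+b)/2$ is then undefined. Take $b=\lambda_{\max}(M)$ instead; the one-sided gap condition is then automatically of your interval form.

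Your interval reading of ``separated by a gap $\delta$'' does not cover clusters interlaced with the remaining spectrum. The paper never needs that case, so nothing is lost.
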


\subsubsection{Proof of Lemma \ref{lemma:conditions_for_stabilization}}
\label{appendix:proof_conditions_for_stabilization}

Suppose first that $Av=\lambda v$, $|\lambda|\geq1$, and $Fv=0$. We consider two initial conditions that differ by $v$. As their initial representations agree, any deterministic controller using only the latent history applies the same first action to both systems. Inductively, the two latent histories, and hence the applied actions, remain identical because
\begin{align}
F A^t v=\lambda^tFv=0.
\end{align}
Their state difference is therefore $A^tv=\lambda^tv$, which cannot converge to zero, unless $v =0$. Thus, a single latent controller cannot stabilize both initial conditions. This proves necessity and is equivalent to the stated Popov-Belevitch-Hautus rank condition in \eqref{eq:pbh_detectability}.

On the other hand, suppose that $(A,F)$ is detectable. By Assumption \ref{assumption:stabilizable_observable}, $(A,B)$ is stabilizable, so there exists a feedback gain $K$ such that $A+BK$ is Schur stable. Moreover, the detectability guarantees the existence of an observer gain $L$ such that $A-LF$ is Schur stable. We consider the observer-based latent controller
\begin{align}
\hat x_{t+1}&=A\hat x_t+Ba_t+L(\tilde z_t-F\hat x_t),\notag\\
a_t&=K\hat x_t.
\end{align}
For the estimation error $e_t:=x_t-\hat x_t$, we have $e_{t+1}=(A-LF)e_t$. Moreover,
\begin{align}
x_{t+1}=(A+BK)x_t-BKe_t.
\end{align}
Thus, the joint dynamics of $(x_t,e_t)$ are block triangular with Schur-stable diagonal blocks $A+BK$ and $A-LF$. Hence, both $e_t$ and $x_t$ converge to zero. This demonstrates sufficiency and completes the proof.

\subsubsection{Proof of Theorem \ref{theorem:condition_achieved_action_reconstruction}}
\label{appendix:proof_action_reconstruction}

By unrolling the dynamics for $H$ steps, we obtain
\begin{align}
x_{t+H}=A^Hx_t+\mathcal C_H\mathbf a_t
\end{align}
where we recall that $\mathbf a_{t,H}=[a_t^\top\;\cdots\;a_{t+H-1}^\top]^\top$. Hence, we have
\begin{align}
z_{t+H}=FA^Hx_t+F\mathcal C_H\mathbf a_{t,H}+b.
\end{align}
Fix an initial state. The initial representation and the autonomous contribution to the final representation are then fixed (i.e., $FA^Hx_t+b$), while the only endpoint variation caused by the actions is $F\mathcal C_H\mathbf a_{t,H}$. As the linear decoder reconstructs $\mathbf a_{t,H}$ exactly on a conditional support containing an open set, Lemma \ref{lemma:decoder_left_inverse} implies that $F\mathcal C_H$ has full column rank.

Now we take $v\in\ker(F)\cap\mathcal R_H$. Therefore, there exists a vector $q$ such that $v=\mathcal C_Hq$. Hence, we have that
\begin{align}
F\mathcal C_Hq=Fv=0.
\end{align}
As $F\mathcal C_H$ has full column rank, $q=0$, and therefore $v=0$. This proves \eqref{eq:injective_reachable_subspace}. Therefore, if $\Phi_{\geq 1}\subseteq\mathcal R_H$, the same conclusion holds for every vector in $\Phi_{\geq 1}$. Finally, full column rank of $F\mathcal C_H\in\mathbb R^{d\times mH}$ requires $mH\leq\min\{d,n\}$.

\subsubsection{Proof of Theorem \ref{theorem:unstable_subspace_recovery}}
\label{appendix:proof_unstable_subspace_recovery}

For simplicity, we consider systems with no eigenvalues on the unit
circle. Thus, $\Phi_{\geq1}$ is the strictly unstable
subspace $\Phi$, i.e., $\Phi_{\geq1}:=\Phi$. In an orthonormal basis adapted to the right unstable invariant subspace $\Phi$ and its orthogonal complement $\Phi^\perp$, we write
\begin{equation}
A= \begin{bmatrix} A_u & \Delta\\ 0 & A_s \end{bmatrix}, \qquad B= \begin{bmatrix} B_u\\ B_s
\end{bmatrix}.
\end{equation}

We make the following assumptions.  For each $j\geq0$, let $\Gamma_j$ denote the unstable block of
$A^jB$, so that
\begin{equation}
A^jB=
\begin{bmatrix}
\Gamma_j\\
A_s^jB_s
\end{bmatrix}.
\end{equation}

\begin{assumption}\label{assumption:spectral_growth_separation}
There exist constants $\alpha>1$, $\beta\in(\rho(A_s),1)$, $\gamma\geq\alpha$, and $C_1,C_2,C_3\geq1$ such that
\begin{align}
\sigma_{\min}(A_u)&\geq\alpha,\;\;
\|A_s^j\|_2\leq C_1\beta^j,\;\;
\|A_u^j\|_2\leq C_2\gamma^j,\;\;
\|\Gamma_j\|_2\leq C_3\gamma^j, \forall j\geq0, \; \text{ and } \; \gamma\beta<\alpha^2.
\end{align} 
\end{assumption}

We emphasize that under
Assumptions~\ref{assumption:spectral_growth_separation}, there exists $\alpha>1$, $\beta\in(\rho(A_s),1)$, $\gamma\geq\alpha$, such that every direction in $\Phi$ expands by at least a factor $\alpha$ at each open-loop step, whereas the stable dynamics contract at rate $\beta$. The condition $\gamma\beta<\alpha^2$ ensures that the coupling between the unstable and stable responses grows strictly slower than the unstable contribution.

\begin{assumption}\label{assumption:unstable_reachability}
The pair $(A_u,B_u)$ is controllable, and the action transmitted through the coupling $\Delta$ does not  cancel the direct unstable input response. In particular, there exist $c_u>0$ and $H_0\geq1$ such that
\begin{align}
\lambda_{\min}(\Pi_{u,H})\geq c_u\alpha^{2H},\text{ for } H\geq H_0.
\end{align}
For block-diagonal dynamics, i.e., $\Delta\equiv0$, this lower bound follows directly from controllability of $(A_u,B_u)$ and Assumption \ref{assumption:spectral_growth_separation}.
\end{assumption}

The constants $c_u,c_s,$ and $c_{us}$ have a direct control-theoretic interpretation. The constant $c_u$ measures the smallest action-induced variation generated along any unstable direction, i.e., a larger $c_u$ means that even the least excitable unstable direction becomes distinguishable over a shorter
horizon. The constant $c_s$ bounds the total action-induced variation that can accumulate within the stable subspace, which remains bounded because the stable dynamics contract. Finally, $c_{us}$ quantifies the interaction between the action-induced unstable and stable responses. A larger $c_{us}$ allows stronger mixing between these responses at finite horizons, but Assumption \ref{assumption:spectral_growth_separation} ensures that this interaction grows more slowly than the unstable contribution $c_u\alpha^{2H}$.

Moreover, under Assumptions \ref{assumption:spectral_growth_separation} and \ref{assumption:unstable_reachability}, there exist constants $c_s,c_{us}>0$ such that, for all sufficiently large $H$, we have 
\begin{align}\label{eq:gramian_scaling_assumptions}
\lambda_{\min}(\Pi_{u,H})&\geq c_u\alpha^{2H}, \;
\|\Pi_{s,H}\|_2\leq c_s, \text{ and }
\|\Pi_{us,H}\|_2\leq c_{us}\sum_{j=0}^{H-1}(\gamma\beta)^j,
\end{align}
where $\Pi_{u,H}$, $\Pi_{s,H}$, and $\Pi_{us,H}$ are the unstable, stable, and cross Gramian blocks, respectively.

\begin{sublemma}\label{lemma:gramian_growth}
Suppose that the assumptions preceding Theorem \ref{theorem:unstable_subspace_recovery} hold. Then, the Gramian blocks satisfy
\begin{align}
\lambda_{\min}(\Pi_{u,H})&\geq c_u\alpha^{2H},\notag\\
\|\Pi_{s,H}\|_2&\leq c_s,\notag\\
\|\Pi_{us,H}\|_2&\leq c_{us}\sum_{j=0}^{H-1}(\gamma\beta)^j.
\end{align}
Therefore, the unstable block grows exponentially, the stable block remains bounded, and the cross block grows strictly more slowly than the unstable block whenever $\gamma\beta<\alpha^2$.
\end{sublemma}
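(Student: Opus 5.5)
The approach is to write the Gramian blockwise in the orthonormal basis adapted to $\Phi\oplus\Phi^\perp$ and bound each block separately. Since $A^jB=\begin{bmatrix}\Gamma_j\\ A_s^jB_s\end{bmatrix}$, we have
\begin{align}
\Pi_{u,H}=\sum_{j=0}^{H-1}\Gamma_j\Gamma_j^\top,\qquad
\Pi_{s,H}=\sum_{j=0}^{H-1}A_s^jB_sB_s^\top (A_s^j)^\top,\qquad
\Pi_{us,H}=\sum_{j=0}^{H-1}\Gamma_jB_s^\top(A_s^j)^\top .
\end{align}
The first inequality, $\lambda_{\min}(\Pi_{u,H})\geq c_u\alpha^{2H}$, is exactly Assumption~\ref{assumption:unstable_reachability} for $H\geq H_0$, so nothing needs to be proved there.

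The block-diagonal remark ($\Delta\equiv0$) deserves a short justification, since the assumption claims it follows from controllability. In that case $\Gamma_j=A_u^jB_u$. Controllability of $(A_u,B_u)$ gives $\lambda_{\min}(\Pi_{u,r})=:\mu>0$ with $r=\dim\Phi$. For $H\geq r$, keep only the last $r$ terms of the sum:
\begin{align}
\Pi_{u,H}\succeq A_u^{H-r}\,\Pi_{u,r}\,(A_u^{H-r})^\top\succeq \mu\,\sigma_{\min}(A_u)^{2(H-r)}I\succeq \mu\,\alpha^{-2r}\alpha^{2H}I .
\end{align}
This uses $\sigma_{\min}(A_u^k)\geq\sigma_{\min}(A_u)^k\geq\alpha^k$, and yields $c_u=\mu\alpha^{-2r}$.

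For the stable block, apply the triangle inequality and $\|A_s^j\|_2\leq C_1\beta^j$ with $\beta<1$:
\begin{align}
\|\Pi_{s,H}\|_2\leq \|B_s\|_2^2C_1^2\sum_{j\geq0}\beta^{2j}=\frac{C_1^2\|B_s\|_2^2}{1-\beta^2}=:c_s ,
\end{align}
which holds uniformly in $H$. For the cross block, bound each term by $\|\Gamma_j\|_2\|B_s\|_2\|A_s^j\|_2\leq C_3C_1\|B_s\|_2(\gamma\beta)^j$. Summing gives the stated bound with $c_{us}=C_1C_3\|B_s\|_2$. If $B_s=0$, replace $\|B_s\|_2$ by $1$ so that the constants stay positive.

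It remains to verify the final claim that the cross block grows strictly more slowly than the unstable block. Write $\sum_{j<H}(\gamma\beta)^j\leq H\max(1,\gamma\beta)^{H}$. Because $\gamma\beta<\alpha^2$ and $\alpha>1$, we have $\max(1,\gamma\beta)<\alpha^2$, so
\begin{align}
\frac{c_{us}\sum_{j<H}(\gamma\beta)^j}{c_u\alpha^{2H}}\leq \frac{c_{us}}{c_u}\,H\left(\frac{\max(1,\gamma\beta)}{\alpha^2}\right)^{H}\to0
\end{align}
geometrically. The only step that is not routine is the block-diagonal lower bound: the key idea there is to use the last $r$ terms of the sum rather than the first $r$, so that the factor $\alpha^{2(H-r)}$ appears. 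The remaining subtlety is bookkeeping about which constants depend on $H$; none of them do.
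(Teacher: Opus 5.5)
Your proposal is correct and follows the paper's proof. The first bound is taken directly from Assumption~\ref{assumption:unstable_reachability}, and the stable and cross blocks are bounded term by term using the triangle inequality, $\|A_s^j\|_2\leq C_1\beta^j$, and $\|\Gamma_j\|_2\leq C_3\gamma^j$, which gives the same constants $C_1^2\|B_s\|_2^2/(1-\beta^2)$ and $C_1C_3\|B_s\|_2$. Your two extra checks are sound: the block-diagonal case, via the last-$r$-terms argument, which the paper only asserts; and the vanishing of $\sum_{j<H}(\gamma\beta)^j/\alpha^{2H}$, which the paper defers to the proof of Theorem~\ref{theorem:unstable_subspace_recovery}.
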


\begin{proof}
The first inequality follows from Assumption \ref{assumption:unstable_reachability}. The remaining bounds follow from $\|\Gamma_j\|_2\leq C_3\gamma^j$ and $\|A_s^j\|_2\leq C_1\beta^j$. In particular,
\begin{align}
\|\Pi_{s,H}\|_2
&\leq \sum_{j=0}^{H-1}\|A_s^jB_s\|_2^2
\leq C_1^2\|B_s\|_2^2\sum_{j=0}^{\infty}\beta^{2j}
=\frac{C_1^2\|B_s\|_2^2}{1-\beta^2},\notag \\
\|\Pi_{us,H}\|_2
&\leq \sum_{j=0}^{H-1}\|\Gamma_j\|_2\|A_s^jB_s\|_2
\leq C_3C_1\|B_s\|_2\sum_{j=0}^{H-1}(\gamma\beta)^j.
\end{align}
The proof is completed when we absorb the constant factors into $c_s$ and $c_{us}$.
\end{proof}

\begin{definition}[Subspace distance] \label{definition:principal_angle_distance}Let $\Xi,\hat \Xi\in\mathbb R^{n\times r}$ be orthonormal bases for
two $r$-dimensional subspaces $\Phi$ and $\hat{\Phi}$, respectively. Their principal angles
$\theta_1,\dots,\theta_r\in[0,\pi/2]$ are defined by
\begin{align}
\sigma_i(\Xi^\top\hat \Xi)=\cos(\theta_i), \text{ for all } i=1,\dots,r,
\end{align}
where $\sigma_i(\cdot)$ denotes the $i$-th singular value. We define
\begin{align}
\sin\Theta(\hat{\Phi},\Phi):=\operatorname{diag}\bigl(\sin(\theta_1),\dots,\sin(\theta_r)\bigr).
\end{align}
The spectral-norm distance between the two subspaces is then
\begin{align}
\bigl\|\sin\Theta(\hat{\Phi},\Phi)\bigr\|_2:=
\max_{i=1,\dots,r}\sin(\theta_i)=\bigl\|P_{\hat{\Phi}}-P_{\Phi}\bigr\|_2,
\end{align}
where $P_{\hat{\Phi}}$ and $P_{\Phi}$ are the orthogonal projectors onto $\hat{\Phi}$ and $\Phi$, respectively. 
\end{definition}

We proceed by writing
\begin{align}
\Pi_H =\underbrace{\begin{bmatrix}\Pi_{u,H}&0\\0&0\end{bmatrix}}_{M_1}+\underbrace{\begin{bmatrix}0&\Pi_{us,H}\\\Pi_{us,H}^\top&\Pi_{s,H}\end{bmatrix}}_{M_2}.
\end{align}
The top $r$-dimensional eigenspace of $M_1$ is exactly $\Phi$, and its eigengap is at least $c_u\alpha^{2H}$. Moreover, Lemma \ref{lemma:gramian_growth} yields
\begin{align}
\|M_2\|_2
\leq \|\Pi_{us,H}\|_2+\|\Pi_{s,H}\|_2 \leq c_{us}\sum_{j=0}^{H-1}(\gamma\beta)^j+c_s.
\end{align}
As $\gamma\beta<\alpha^2$, the ratio $\|M_2\|_2/(c_u\alpha^{2H})$ converges to zero. It is therefore smaller than $1/2$ for all sufficiently large $H$. We then proceed, by applying Theorem \ref{theorem:davis_kahan} to write 
\begin{align}
\|\sin\Theta(\hat{\Phi},\Phi)\|_2
&\leq \frac{2\|M_2\|_2}{c_u\alpha^{2H}}\leq\frac{2\left(c_{us}\sum_{j=0}^{H-1}(\gamma\beta)^j+c_s\right)}{c_u\alpha^{2H}}.
\end{align}

It remains to make the convergence rate explicit. We then set $q=\gamma\beta$.  Under Assumptions~\ref{assumption:spectral_growth_separation}
and~\ref{assumption:unstable_reachability},
for all sufficiently large $H$, we can write
\begin{equation}
\bigl\|\sin\Theta(\hat{\Phi},\Phi)\bigr\|_2
\leq 2\left(c_{us}\displaystyle\sum_{j=0}^{H-1}
(\gamma\beta)^j+c_s\right)/\left(c_u\alpha^{2H}\right).
\end{equation}

The convergence rate is then given by the following three cases:
\begin{itemize}
\item If $q>1$, then it holds that
\begin{equation}
\bigl\|\sin\Theta(\hat{\Phi},\Phi)\bigr\|_2
\lesssim (q^H+1)/\alpha^{2H}.
\end{equation}

\item If $q=1$, then it holds that
\begin{equation}
\bigl\|\sin\Theta(\hat{\Phi},\Phi)\bigr\|_2
\lesssim H/\alpha^{2H}.
\end{equation}

\item If $q<1$, then it holds that
\begin{equation}
\bigl\|\sin\Theta(\hat{\Phi},\Phi)\bigr\|_2
\lesssim 1/\alpha^{2H}.
\end{equation}
\end{itemize}
As $q<\alpha^2$, each of these bounds converges to zero as
$H\to\infty$. Then, we have
\begin{equation}
\bigl\|\sin\Theta(\hat{\Phi},\Phi)\bigr\|_2 \to 0
\text{ as } H\to\infty,
\end{equation}
which completes the proof.

\end{document}